\newenvironment{proofof}[1]{\bigskip \noindent {\it Proof of #1.}\quad }
{\qed\par\vskip 4mm\par}
\newtheorem{theorem}{Theorem}[section]
\newtheorem*{theorem*}{Theorem}
\newtheorem{proposition}[theorem]{Proposition}
\newtheorem*{proposition*}{Proposition}
\newtheorem{lemma}[theorem]{Lemma}
\newtheorem*{lemma*}{Lemma}
\newtheorem{corollary}[theorem]{Corollary}
\newtheorem*{conjecture*}{Conjecture}
\newtheorem{fact}[theorem]{Fact}
\newtheorem*{fact*}{Fact}
\newtheorem*{hypothesis*}{Hypothesis}
\newtheorem{claim}[theorem]{Claim}
\newtheorem*{claim*}{Claim}
\theoremstyle{definition}
\theoremstyle{remark}
\newtheorem{remark}[theorem]{Remark}
\newtheorem*{remark*}{Remark}
\newtheorem*{observation*}{Observation}
\newcommand{\R}{\mathbb{R}}
\newcommand{\N}{\mathbb{N}}
\newcommand{\calN}{\mathcal{N}}
\newcommand{\poly}{\mathrm{poly}}
\newcommand{\norm}[1]{\lVert #1 \rVert}
\newcommand{\Bignorm}[1]{\Big\lVert#1\Big\rVert}
\newcommand{\iprod}[1]{\langle#1\rangle}
\newcommand{\bigiprod}[1]{\big\langle#1\big\rangle}
\newcommand{\Bigiprod}[1]{\Big\langle#1\Big\rangle}
\newcommand{\Esymb}{\mathbb{E}}
\newcommand{\Psymb}{\mathbb{P}}
\DeclareMathOperator*{\E}{\Esymb}
\DeclareMathOperator*{\ProbOp}{\Psymb}
\renewcommand{\Pr}{\ProbOp}
\newcommand{\tr}{\text{tr}}
\newcommand{\eps}{\varepsilon}
\renewcommand{\epsilon}{\varepsilon}
\newcommand{\sm}{\textsc{bot}}
\newcommand{\bg}{\textsc{top}}
\newcommand{\diag}{\textrm{diag}}
\newcommand{\wh}{\widehat}
\newcommand{\wt}{\widetilde}
\newif\ifnotes\notesfalse
\definecolor{mygrey}{gray}{0.50}
\newcommand{\notename}[2]{{\textcolor{blue}{\footnotesize{\bf (#1:} {#2}{\bf ) }}}}
\newcommand{\notename}[2]{{}}
\newcommand{\pnote}[1]{{\notename{Pranjal}{#1}}}
\newcommand{\anote}[1]{{\notename{Aravindan}{#1}}}
\newcommand{\xnote}[1]{{\notename{Xue}{#1}}}
\title{Estimating Principal Components under Adversarial Perturbations}
\author{Pranjal Awasthi\\ \small{Google and Rutgers University}\\ \small{pranjalawasthi@google.com}  \and Xue Chen\\ \small{Northwestern University}\\ \small{xue.chen1@northwestern.edu} \and Aravindan Vijayaraghavan\thanks{The last author is supported by the National Science Foundation (NSF) under Grant No.~CCF-1652491 and CCF-1637585. }
 \\ \small{Northwestern University} \\ \small{aravindv@northwestern.edu}}
\date{}
\begin{document}

\maketitle
\begin{abstract}
    Robustness is a key requirement for widespread deployment of machine learning algorithms, and has received much attention in both statistics and computer science. 
    We study a natural model of robustness for high-dimensional statistical estimation problems that we call the {\em adversarial perturbation model}. An adversary can perturb {\em every} sample arbitrarily up to a specified magnitude $\delta$ measured in some $\ell_q$ norm, say $\ell_\infty$. Our model is motivated by emerging paradigms such as {\em low precision machine learning} and {\em adversarial training}. 
    
    We study the classical problem of estimating the top-$r$ principal subspace of the Gaussian covariance matrix in high dimensions, under the adversarial perturbation model. We design a computationally efficient algorithm that given corrupted data, recovers an estimate of the top-$r$ principal subspace with error that depends on a robustness parameter $\kappa$ that we identify. 
  This parameter corresponds to the $q \to 2$ operator norm of the projector onto the principal subspace, and generalizes well-studied analytic notions of sparsity. Additionally, in the absence of corruptions, our algorithmic guarantees recover existing bounds for problems such as sparse PCA and its higher rank analogs. 
    We also prove that the above dependence on the parameter $\kappa$ is almost optimal asymptotically, not just in a minimax sense, but remarkably for {\em every} instance of the problem. This  {\em instance-optimal} guarantee shows that the $q \to 2$ operator norm of the subspace essentially {\em characterizes} the estimation error under adversarial perturbations. \anote{Added remarkably, and removed suprising guarantee}     
\anote{Should we mention anything about mean estimation in passing?}
\end{abstract}

\section{Introduction}
\anote{1/30: Need to check that Gaussian is represented consistently. I sometimes write $N(0,1)$ instead of $\calN(0,1)$ etc.}
\anote{Should we call it Principal Subspace Estimation instead of covariance estimation?}
\anote{Mention why the convex program for mean estimation is polytime solvable. }

An important and active area of research in machine learning is the design of algorithms that are robust to modeling errors, noise and adversarial corruptions of different kinds. 
There is a rich body of work in the field of statistics, machine learning and theoretical computer science studying different models of robustness and the associated tradeoffs~\citep[e.g.][]{huber2011robust,tukey1975mathematics, hampel1986robust, diakonikolas2019robust, lai2016agnostic}. In the context of statistical estimation problems the most widely studied model is Huber's $\epsilon$-contamination model~\citep{huber2011robust}. In Huber's model it is assumed that a small $\eps$ fraction of the data set is corrupted arbitrarily. The remaining portion of the dataset that is left uncorrupted is assumed to be generated from a structured distribution such as a Gaussian. Other notions of robustness that have been explored in unsupervised learning include distribution closeness of different kinds~\citep{gao2018robust} and different semi-random models~\citep{BS92, FK99, MMV12}. Please see Section~\ref{sec:related} for more detailed comparisons. 

However there are several existing and emerging scenarios, where the data corruptions are not captured by these existing models of robustness.  
In many practical settings every data point is likely to perturbed with some small amount of noise, arising from various complex sources of errors. The reliability and security of learning algorithms could also be compromised by small imperceptible perturbations to the samples that are adversarial in nature (data poisoning). 
Moreover, {\em adversarial training} has emerged as a popular training paradigm where at each stage, the given training set is corrupted 
by adding (imperceptible) adversarial perturbations~(typically measured in $\ell_\infty$ or $\ell_2$ norm)~\citep{madry2017towards}, before performing stochastic gradient descent updates. This is empirically known to lead to more robust algorithms and also has implications for fair classification~\citep{madras2018learning}. 

Data corruptions also arise naturally in popular emerging paradigms like {\em low-precision machine learning}~\citep{de2017understanding, de2018high}. 
Low precision computation gives substantial savings in time and energy costs by storing and processing only a few most significant bits e.g., 8-bit arithmetic is a popular choice
. 
The lower memory utilization from low precision allows for processing of more training examples at the cost of quantization noise.
This quantization noise is naturally captured as a small adversarial perturbation to every co-ordinate of the data point to an amount that 
depends on the number of bits used in the arithmetic (an $\ell_\infty$ norm bound). 
These adversarial perturbations lead to new tradeoffs in the estimation accuracy that are not well understood for many basic statistical tasks.
In this work we take a step in this direction by studying a model of adversarial perturbations aimed at capturing the above scenarios.

\anote{Should we mention something about semi-random models?}

\anote{Para 3:Lead up paragraph for the model we study. We need to specifically make $\ell_\infty$ perturbations very natural. }

 \anote{Need more lines to motivate this.}

\paragraph{Adversarial Perturbation model.} We consider a natural model of robustness where {\em every} sample can be perturbed adversarially up to a bounded amount $\delta$, say in $\ell_\infty$ norm (more generally, in $\ell_q$ norm where $q \in (2, \infty]$ ). \anote{Should we just make it $q \in (2,\infty)$?} In our model the input data $\tilde{A} \in \R^{m \times n}$ consisting of $m$ samples in $\R^n$ is generated as follows: 

\begin{enumerate}
    \item The uncorrupted samples $A_1, \dots, A_m \in \R^n$ are drawn i.i.d. from a Gaussian $\calN(\mu, \Sigma)$, with unknown mean $\mu \in \R^n$ and $\Sigma  \in \R^{n \times n}$. 
    \item An adversary can observe the samples $A_1, \dots, A_m$, and perturb them arbitrarily to form $\tilde{A}_1, \dots, \tilde{A}_m \in \R^n$ such that for each $j \in [m]$,  $\norm{\tilde{A}_j - A_j}_q \le \delta$. These adversarial perturbations can be correlated. 
\end{enumerate}

We study the classical unsupervised learning problem of estimating the top-$r$ principal subspace of the covariance matrix $\Sigma$, and the best rank-$r$ approximation to $\Sigma$, for a specified $r \in [n]$. 
For $r=1$, this corresponds to recovering the principal component of $\Sigma$.

In the above model, the adversarial perturbations are measured in $\ell_q$ norm where $q \in (2, \infty]$. As $q$ goes to $\infty$, the perturbations become larger in magnitude and less constrained. When $q=\infty$, {\em every co-ordinate} of {\em every point} can get perturbed adversarially up to $\delta$ in magnitude. For the sake of exposition, we will focus on the case of $q = \infty$ and present results for general $q \in (2,\infty]$ in the respective sections.  

Our algorithms and guarantees will depend on certain quantity that we will call the {\em robustness parameter} $\kappa$, which captures the $q \to 2$ operator norm of the projector on to the target rank-$r$ subspace, and generalizes analytic notions of sparsity.  
Surprisingly, we will see that this robustness parameter will be crucial in characterizing the estimation error under our model. 
To understand why sparsity (and the $\infty \to 2$ operator norm) is related to robustness under adversarial perturbations, let us first consider the simpler setting of mean estimation. 

\paragraph{Warm up: Mean Estimation.}
Consider the problem of mean estimation where the uncorrupted data in $\R^n$ is generated from $\calN(\mu, I)$. A valid $\ell_\infty$ adversarial perturbation is moving each of the samples by the same vector $z=\delta(1, 1, \dots, 1)$, thereby moving the mean to $\mu'$ with $\norm{\mu' - \mu}_2^2 = \delta^2 n$. In this case no estimator can tell apart $\mu, \mu'$ from the data, hence this error of $\delta^2 n$ is unavoidable in the worst-case. Suppose however that mean $\mu$ was $k$-sparse 
i.e., it is supported on the set $S$ of size at most $k \ll n$. If the support $S$ is known beforehand, then by taking the empirical mean after projecting all the samples onto the support $S$, we can find an estimate $\widehat{\mu}$ with $\norm{\widehat{\mu}-\mu}_2^2 \le \delta^2 k \ll \delta^2 n$ asymptotically (as the number of samples goes to infinity). While we do not know the the sparse support of $\mu$ beforehand\footnote{This estimation problem is interesting even in the absence of adversarial perturbations, and corresponds to the {\em sparse mean estimation} problem that has been studied extensively in high-dimensional statistics~\cite{johnstone1994minimax, donoho1992maximum, donoho1994minimax}.}.  
the following proposition shows that one can indeed achieve the above improved rate when the mean is sparse in an analytic sense (the ratio of norms $\ell_1/\ell_2$ ). 
\anote{Needs to update the bound}
\begin{proposition}[Mean Estimation under Adversarial Perturbations]
\label{prop:intro:robust_mean}
Suppose we have $m$ samples drawn according to the Adversarial Perturbation model with mean $\mu$, covariance $\Sigma \preceq \sigma^2 I$ and $q=\infty$.  There is a polynomial time algorithm (Algorithm~\ref{algo:mean}) that outputs an estimate $\hat{\mu}$ for the (unknown) mean $\mu$ such that with probability at least $(1-1/n)$, 
\begin{align}
        \norm{\hat{\mu} - \mu}_2^2 &\leq 4\min\Big\{\norm{\mu}_1 ( \delta+ \eta),  n ( \delta+ \eta )^2 \Big\}, \text{ where } \eta:=2\sigma\sqrt{(\log n)/m}.
\end{align}
\end{proposition}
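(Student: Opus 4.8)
The plan is to design a robust mean estimator based on soft thresholding / coordinate selection guided by the observed empirical mean, and then bound its error in two regimes matching the two terms in the $\min$. First I would note that from the $m$ uncorrupted samples $A_1,\dots,A_m \sim \calN(\mu,\Sigma)$ with $\Sigma \preceq \sigma^2 I$, the empirical mean $\bar A = \frac1m\sum_j A_j$ satisfies, for each coordinate $i$, $|\bar A_i - \mu_i| \le 2\sigma\sqrt{(\log n)/m} = \eta$ with probability at least $1-1/n$ (Gaussian tail bound plus a union bound over the $n$ coordinates). Since the adversary moves each sample by at most $\delta$ in $\ell_\infty$, the corrupted empirical mean $\bar{\tilde A} = \frac1m\sum_j \tilde A_j$ satisfies $\norm{\bar{\tilde A} - \mu}_\infty \le \delta + \eta$ coordinatewise; call this quantity $\tau := \delta + \eta$. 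This is the only probabilistic input; the rest is deterministic.

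Next I would describe the estimator and analyze the two regimes. The crude estimator $\hat\mu = \bar{\tilde A}$ already gives $\norm{\hat\mu - \mu}_2^2 \le n\tau^2$, handling the second term. For the first term, the idea is to exploit that $\mu$ has small $\ell_1$ norm: I would output $\hat\mu$ by soft-thresholding $\bar{\tilde A}$ at level $\tau$ (i.e. $\hat\mu_i = \mathrm{sgn}(\bar{\tilde A}_i)\max\{|\bar{\tilde A}_i| - \tau, 0\}$), or equivalently hard-threshold/zero-out coordinates with $|\bar{\tilde A}_i| \le \tau$. The key deterministic inequality is that for soft thresholding one has $|\hat\mu_i - \mu_i| \le \min\{2\tau, |\mu_i| + \tau\}$ in general, and more usefully $|\hat\mu_i - \mu_i|^2 \le 4\tau |\mu_i|$ coordinatewise: when $|\mu_i| \ge \tau$ this follows since $|\hat\mu_i - \mu_i| \le 2\tau \le 2\sqrt{\tau|\mu_i|}$, and when $|\mu_i| < \tau$ one checks that $|\hat\mu_i - \mu_i| \le |\mu_i|$ (since $\hat\mu_i$ is between $0$ and $\bar{\tilde A}_i$, and $|\bar{\tilde A}_i - \mu_i|\le\tau$ forces $\hat\mu_i$ to stay within $|\mu_i|$ of $\mu_i$ up to the constant), giving $|\hat\mu_i - \mu_i|^2 \le |\mu_i|^2 < \tau|\mu_i| \le 4\tau|\mu_i|$. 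Summing over $i$ yields $\norm{\hat\mu - \mu}_2^2 \le 4\tau \norm{\mu}_1 = 4\norm{\mu}_1(\delta+\eta)$. Combining the two bounds gives the stated $\min$ (taking $\hat\mu$ to be whichever of the two estimators is dictated by the problem parameters, or simply observing that soft-thresholding at a smaller level recovers both).

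I would then assemble the two cases: with probability $\ge 1-1/n$ the coordinatewise bound $\norm{\bar{\tilde A}-\mu}_\infty \le \tau$ holds, and on that event both deterministic estimates are valid, so $\norm{\hat\mu - \mu}_2^2 \le 4\min\{\norm{\mu}_1(\delta+\eta),\, n(\delta+\eta)^2\}$. The main obstacle, and the only place requiring care, is the coordinatewise soft-thresholding lemma — specifically nailing the constant in the bound $|\hat\mu_i - \mu_i|^2 \le 4\tau|\mu_i|$ uniformly over both the large- and small-coefficient regimes, and making sure the thresholding level is chosen correctly relative to $\tau$ so that genuine signal coordinates are not over-shrunk while all noise coordinates are killed. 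Everything else (the Gaussian concentration, the union bound, the $\ell_\infty$ propagation of the adversarial perturbation through the empirical mean, and the final summation) is routine. I would present the argument as: (i) concentration of $\bar{\tilde A}$ around $\mu$ in $\ell_\infty$; (ii) the deterministic soft-thresholding lemma; (iii) summation to get the $\ell_1$-dependent bound; (iv) the trivial $n\tau^2$ bound; (v) combine.
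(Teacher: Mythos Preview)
Your proposal is correct and uses the same estimator as the paper (soft-thresholding the corrupted empirical mean at level $\tau=\delta+\eta$), but your \emph{analysis} of the $\norm{\mu}_1$-dependent bound differs genuinely from the paper's. You argue coordinatewise: you prove the deterministic inequality $|\hat\mu_i-\mu_i|^2\le 4\tau|\mu_i|$ by splitting on $|\mu_i|\gtrless\tau$, then sum over $i$. The paper instead observes that the soft-thresholding output $\hat\mu$ is the minimizer of $\norm{u}_{1}$ over the ball $\norm{u-\mu'}_\infty\le\tau$; since $\mu$ is feasible for this program (from the same $\ell_\infty$ concentration you use), it follows that $\norm{\hat\mu}_1\le\norm{\mu}_1$, and then H\"older's inequality gives $\norm{\hat\mu-\mu}_2^2\le\norm{\hat\mu-\mu}_\infty\norm{\hat\mu-\mu}_1\le 2\tau\cdot 2\norm{\mu}_1$.

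The tradeoff: your coordinatewise argument is more elementary and self-contained (no variational characterization needed), but it is tied to $q=\infty$ and to the specific form of soft-thresholding. The paper's H\"older-plus-optimality argument is slicker and, more importantly, generalizes verbatim to all $q\ge 2$ (replace $\norm{\cdot}_1,\norm{\cdot}_\infty$ by $\norm{\cdot}_{q^*},\norm{\cdot}_q$), which is why the paper states and proves the more general Proposition~\ref{thm:robust_mean}. One small loose end in your write-up: you do not need to switch between two estimators to get the $\min$; the soft-thresholded estimator already satisfies $|\hat\mu_i-\mu_i|\le 2\tau$ in every coordinate (which your case analysis shows), so it alone achieves both bounds.
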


See Proposition~\ref{thm:robust_mean} for general statement for all $\ell_q$ norms. 
\xnote{May 31st: introduce $\kappa=\|\mu\|_1/\|\mu\|_2$ here.}If we use $\kappa=\frac{\|\mu\|_1}{\|\mu\|_2}$ to denote the analytic sparsity of $\mu$, the first error term becomes $\kappa \cdot (\delta +\eta) \cdot \|\mu\|_2$.
In fact, the above error of $\Omega(\kappa \delta \norm{\mu}_2)$ is unavoidable for {\em every} instance for a broad range of parameters i.e., for every instance of the problem, there exists an adversarial perturbation that makes it statistically impossible to recover the mean with error $o(\delta \norm{\mu}_1)$ (see Proposition~\ref{thm:state_lower_mean}).

\anote{Some motivation for sparsity? Should we include work from our previous work?}

\paragraph{Robustness Parameter $\kappa$.}
Similarly the estimation rates for finding the top-$r$ principal subspace (or best rank-$r$ approximation) of $\Sigma$ will be characterized by the robustness parameter $\kappa$ that is given by the $\infty \to 2$ operator norm:
 $$\norm{\Pi}_{\infty \to 2} = \max_{y : \norm{y}_\infty \le 1 } \norm{\Pi y}_2,$$      
 where $\Pi$ is the (orthogonal) projection matrix onto the subspace spanned by the top-$r$ eigenvectors of $\Sigma$ (for general $q$, the robustness parameter will correspond to $\norm{\Pi}_{q \to 2}$ operator norm). This robustness parameter generalizes analytic notions of sparsity (the ratio of $\ell_1/\ell_2$ norms) to projection matrices of subspaces\footnote{For the special case of a $1$-dimensional subspace along the vector $v$, the orthogonal projector $\Pi_1 = \tfrac{1}{\norm{v}_2^2} v v^\top$ satisfies $\norm{\Pi}_{\infty \to 2} = \norm{\Pi}_{2 \to 1}= \norm{v}_1/ \norm{v}_2$. See Fact~\ref{fact:operator_norm_pro} for details. }. Note that $\kappa$ takes values in $[1,\sqrt{n}]$. 
The $\infty\to 2$ operator norm is also related to the famous Grothendieck inequality from functional analysis~\citep{Gro56,alon2004approximating}. These parameters have also been used recently to characterize robustness to adversarial perturbations at test-time~\citep{ACCV} (see Section~\ref{sec:related} for more discussion). Similar to mean estimation, the case of $r=1$ for covariance estimation corresponds to the well studied sparse PCA problem~\citep{johnstone2001distribution, aminiwainwright, ma2013sparse,  VuLei12, VuLei13, BerthetRigollet}. Extensions of sparse PCA to estimating top $r$ ``sparse'' subspaces have also been widely studied in the statistics community~\citep{VuLei13, Liuetal} .

As we will see soon, our guarantees are not only minimax optimal in terms of these parameters, but they are essentially {\em instance-optimal}! Our upper bound and lower bound guarantees will work for {\em every} instance and will be tight up to logarithmic factors asymptotically (as number of samples becomes large). 
Hence our results give a surprising characterization of the estimation error under adversarial perturbations in terms of these robustness parameters (measured in $\infty \to 2$ norm), and highlight new robustness benefits of sparsity in high dimensional estimation. 

\anote{Emphasize how this highlights an additional benefit of sparsity. Maybe move this earlier?}

\subsection{Our Results}\label{sec:results}

\anote{Explain different settings to illustrate instance-optimality }

We now state our main results on recovering the principal subspace (and the best rank-$r$ approximation) of the covariance $\Sigma^*$ in terms of the $\infty \to 2$ operator norm of the corresponding rank-$r$ projection matrix. 
The samples are drawn from the Adversarial Perturbation model where the covariance of the uncorrupted samples $\Sigma^*$ has eigenvalues $\lambda_1 \ge \lambda_2 \ge \dots \ge \lambda_n \ge 0$. 
The unknown covariance matrix is split into $\Sigma = \Sigma_\bg+\Sigma_\sm$, where $\Sigma_\bg$ corresponds to the best rank-$r$ approximation of $\Sigma$ i.e., the truncation of the SVD to the top-$r$ eigenvalues $\lambda_1, \dots, \lambda_r$. Let $\Pi^*$ be the orthogonal projection matrix onto the span of $\Sigma_\bg$. We will assume that $\norm{\Pi^*}_{\infty \to 2}\le \kappa$. We will measure the estimation error in squared Frobenius norm. For the case of projection matrices, this is equivalent (up to a factor of $2$) to the standard notion of subspace $\sin\Theta$ distance~(see Section~\ref{sec:prelims_app}).
\anote{Describe $\sin \Theta$ error?}

\begin{theorem}\label{ithm:comp_upper_gen}[Algorithm]
Suppose we have $m$ samples drawn according to the the above Adversarial Perturbation model with (unknown) covariance $\Sigma^*$ satisfying $\norm{\Pi^*}_{\infty \to 2} \le \kappa$. Assuming that $\kappa \delta \le \frac{O(\lambda_r-\lambda_{r+1})}{\sqrt{r \lambda_1}}$, there exists an algorithm (Algorithm~\ref{algo:covariance}) that for any $\eps>0$ uses $m \geq C r^2 \kappa^4 \big(\tfrac{\lambda_1^2}{(\lambda_r-\lambda_{r+1})^2}\big) \log n/\eps^2  $ samples and outputs a rank-$r$ projection $\widehat{\Pi}$ with $\norm{\widehat{\Pi}}_{\infty \to 2} = O(\kappa)$, and an estimate $\widehat{\Sigma}_\bg$ (restricted to the subspace $\widehat{\Pi}$) such that
\begin{align*}
\|\widehat{\Pi} -  \Pi^*\|_F^2& \le \eps_1 := \tfrac{\sqrt{\lambda_1}}{(\lambda_r-\lambda_{r+1})} \cdot O\big( \sqrt{ r} \cdot \kappa \delta \big) +\eps \text{ and }
 \|\widehat{\Sigma}_{\bg}-\Sigma_{\bg}\|_F^2 \le O(\lambda_1^2 \eps_1 + \lambda_1 \kappa^2 \delta^2).
\end{align*}



\end{theorem}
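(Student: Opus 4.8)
The plan is to solve a convex (Fantope-type) relaxation for PCA equipped with a robustness constraint measured in the $\infty\to 2$ operator norm, and to track the adversarial perturbation through a curvature argument. Form the empirical covariance $\tilde\Sigma=\frac1m\sum_j(\tilde A_j-\bar{\tilde A})(\tilde A_j-\bar{\tilde A})^\top$ of the corrupted data, and take $\widehat X$ to maximize $\langle\tilde\Sigma,X\rangle$ over the convex set $\{\,0\preceq X\preceq I,\ \tr X=r,\ \|X\|_{\infty\to 2}\le C\kappa\,\}$ (the last constraint is convex, being the maximum over $y\in\{\pm1\}^n$ of $\|Xy\|_2$, and if exact evaluation is an issue one replaces it by a tractable $O(1)$-factor relaxation). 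Round $\widehat X$ to a rank-$r$ projection $\widehat\Pi$ that also satisfies the $\infty\to 2$ bound and is as close to $\widehat X$ as possible, and output $\widehat\Pi$ together with $\widehat\Sigma_\bg:=\widehat\Pi\,\tilde\Sigma\,\widehat\Pi$. Since $\Pi^*$ is feasible for the program by hypothesis, $\langle\tilde\Sigma,\widehat X\rangle\ge\langle\tilde\Sigma,\Pi^*\rangle$.

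Write $\tilde\Sigma=\Sigma^*+E_{\mathrm{samp}}+E_{\mathrm{pert}}$, with $E_{\mathrm{samp}}$ the clean empirical fluctuation and $E_{\mathrm{pert}}=\frac1m\sum_j\big((A_j-\bar A)w_j^\top+w_j(A_j-\bar A)^\top+w_jw_j^\top\big)$, where $w_j=(\tilde A_j-A_j)-\overline{(\tilde A-A)}$ satisfies $\|w_j\|_\infty\le 2\delta$. Combining optimality of $\widehat X$ with the standard Fantope curvature inequality $\langle\Sigma^*,\Pi^*-X\rangle\ge\frac{\lambda_r-\lambda_{r+1}}{2}\|\Pi^*-X\|_F^2$ (valid for all $X$ in the Fantope, cf.\ \citep{VuLei13}) gives
\[
\tfrac{\lambda_r-\lambda_{r+1}}{2}\,\|\widehat X-\Pi^*\|_F^2\ \le\ \big\langle E_{\mathrm{samp}}+E_{\mathrm{pert}},\ \widehat X-\Pi^*\big\rangle .
\]
The sampling term is bounded by a uniform concentration estimate over the feasible set (an $\eps$-net plus matrix Bernstein), contributing at most $\frac{\lambda_r-\lambda_{r+1}}{2}\eps$ once $m\gtrsim r^2\kappa^4\frac{\lambda_1^2}{(\lambda_r-\lambda_{r+1})^2}\frac{\log n}{\eps^2}$. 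For $E_{\mathrm{pert}}$: feasibility of $\widehat X$ and $\Pi^*$ gives $\|\widehat X w_j\|_2,\|\Pi^* w_j\|_2=O(\kappa\delta)$, so the $w_jw_j^\top$ term contributes $O(\kappa^2\delta^2)$, which is lower order under the hypothesis $\kappa\delta\le O\big(\frac{\lambda_r-\lambda_{r+1}}{\sqrt{r\lambda_1}}\big)$; and, writing $Z=\widehat X-\Pi^*$, after optimizing over the adversarial $w_j$ the cross terms are $O(\delta)\cdot\frac1m\sum_j\|Z(A_j-\bar A)\|_1$, which — uniformly over the feasible region via a net, using that $Z$ is a difference of two feasible matrices — concentrates around $O(\delta)\sum_i\sqrt{(Z\Sigma^* Z)_{ii}}$, and a structural inequality (the $\infty\to 2$ analogue of $\|v\|_1\le\sqrt{\|v\|_0}\,\|v\|_2$) bounds this by $O(\sqrt r\,\kappa\delta\,\sqrt{\lambda_1})$, using $\kappa\ge\sqrt r$. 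Dividing by the eigengap gives $\|\widehat X-\Pi^*\|_F^2\le\eps_1$, and then $\|\widehat\Pi-\Pi^*\|_F^2\le O(\eps_1)$ since $\Pi^*$ is one of the projections the rounding step compares against.

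For the covariance estimate, use $\Sigma_\bg=\Pi^*\Sigma^*\Pi^*$ and bound $\|\widehat\Pi\tilde\Sigma\widehat\Pi-\Sigma_\bg\|_F$ via the triangle inequality through $\widehat\Pi\Sigma^*\widehat\Pi$: the subspace-misspecification error $\|\widehat\Pi\Sigma^*\widehat\Pi-\Pi^*\Sigma^*\Pi^*\|_F=O(\lambda_1\|\widehat\Pi-\Pi^*\|_F)=O(\lambda_1\sqrt{\eps_1})$ (squares to $O(\lambda_1^2\eps_1)$); the in-subspace perturbation error $\|\widehat\Pi E_{\mathrm{pert}}\widehat\Pi\|_F$, which via $\|\widehat\Pi w_j\|_2=O(\kappa\delta)$ and $\|\widehat\Pi(A_j-\bar A)\|_2=O(\sqrt{r\lambda_1})$ is $O(\sqrt{\lambda_1}\,\kappa\delta)$ (squares to $O(\lambda_1\kappa^2\delta^2)$); and the clean empirical error, absorbed into $\eps$. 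Collecting squares yields $\|\widehat\Sigma_\bg-\Sigma_\bg\|_F^2\le O(\lambda_1^2\eps_1+\lambda_1\kappa^2\delta^2)$, as claimed.

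The hard part is the perturbation bound in the second paragraph: bringing the cross-term contribution down to $O(\sqrt r\,\kappa\delta\sqrt{\lambda_1})$ rather than the naive $O(\kappa^2\delta\sqrt{\lambda_1})$ or $O(\kappa\delta\sqrt{\tr\Sigma^*})$. This hinges on (i) the structural inequality $\sum_i\sqrt{(X\Sigma^*X)_{ii}}=O(\sqrt r\,\kappa)\sqrt{\lambda_1}$ for feasible $X$, which is where $\kappa\ge\sqrt r$ and the precise geometry of the $\infty\to 2$ ball enter, and (ii) controlling the adversary's ability to correlate the perturbations $w_j$ with the Gaussian samples uniformly over the whole feasible region, which I expect to need a careful covering argument for the feasible set (whose metric entropy also governs the $r^2\kappa^4\log n$ in the sample complexity). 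A secondary technical point is making the $\infty\to 2$ constraint efficiently enforceable and ensuring the rank-$r$ rounding preserves it.
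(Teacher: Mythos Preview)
Your overall architecture—solve a Fantope-type SDP with a robustness constraint, use curvature to turn the objective gap into a subspace error, then project the empirical covariance—matches the paper. But the cross-term bound, which you correctly identify as the crux, has a real gap stemming from the wrong choice of constraint. You impose $\|X\|_{\infty\to 2}\le C\kappa$; the paper instead imposes $\|X\|_{\infty\to 1}\le\kappa^2$ (together with the entrywise $\|X\|_1\le r\kappa^2$). For a projection these coincide, but for general PSD $X$ in the Fantope the paper's constraint is strictly stronger: it says $\|X^{1/2}\|_{\infty\to 2}\le\kappa$, not $\|X\|_{\infty\to 2}\le\kappa$. Your weaker constraint admits feasible points at which your ``structural inequality'' fails. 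Concretely, take $r=1$, $\kappa=\sqrt{k}$, and $X=\tfrac{1}{k}\sum_{j=1}^{k} u_ju_j^\top$ with the $u_j$ orthonormal, disjointly supported on $k^2$ coordinates each, entries $\pm 1/k$. Then $\tr X=1$ and $\|X\|_{\infty\to 2}\le\sqrt{k}=\kappa$, so $X$ is feasible for your program, yet $\sum_i\sqrt{(X^2)_{ii}}=k=\kappa^2$, not $O(\kappa)$; taking $\Sigma^*\succeq I$ this breaks $\sum_i\sqrt{(Z\Sigma^*Z)_{ii}}=O(\sqrt r\,\kappa\sqrt{\lambda_1})$. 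The same $X$ has $\|X\|_{\infty\to 1}=\kappa^4$, so the paper's program excludes it.

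Even with the right constraint, the paper does \emph{not} pass through $\sum_i\sqrt{(Z\Sigma^*Z)_{ii}}$ at all. The key step is the $X^{1/2}$--Cauchy--Schwarz split
\[
\tfrac{1}{m}\,\bigl|\bigl\langle (A-\E A)\,B^\top,\,X\bigr\rangle\bigr|
\ \le\ \tfrac{1}{m}\,\bigl\|X^{1/2}(A-\E A)\bigr\|_F\cdot\bigl\|X^{1/2}B\bigr\|_F,
\]
after which the two factors are bounded separately: $\|X^{1/2}B_j\|_2\le\|X^{1/2}\|_{\infty\to 2}\,\|B_j\|_\infty\le\kappa\delta$ (this is exactly where $\|X\|_{\infty\to 1}\le\kappa^2$ is used), so $\|X^{1/2}B\|_F\le\sqrt{m}\,\kappa\delta$; and $\|X^{1/2}(A-\E A)\|_F^2=\langle AA^\top,X\rangle\le m\,\langle\Sigma^*,X\rangle+(\text{sampling term})\le m\,\lambda_1\tr(X)+O(\cdot)=O(\lambda_1 r\,m)$. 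Multiplying and dividing by $m$ gives $O(\sqrt{r\lambda_1}\,\kappa\delta)$ directly—no structural inequality, no adversary sup, no net. The sampling term is likewise handled without covering: H\"older against the entrywise constraint $\|X\|_1\le r\kappa^2$ and the entrywise concentration $\|\tfrac{1}{m}AA^\top-\Sigma^*\|_\infty=O(\lambda_1\sqrt{\log n/m})$ already yields the $r^2\kappa^4$ sample complexity. As a side benefit, the $\infty\to 1$ constraint is monotone under the PSD order, which is what makes the top-$r$ eigenspace rounding inherit the $O(\kappa)$ robustness bound; your $\infty\to 2$ constraint does not enjoy this monotonicity.
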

See Theorem~\ref{thm:comp_upper_gen} for the general statement for $q > 2$ and the proof.
To interpret the results let's consider the case when $\Sigma^*= \theta \Pi^*+I$ (hence $\Sigma_\bg =(1+\theta) \Pi^* $), and $\theta = \Theta(1)$.\footnote{When $r=1$, this special case is the sparse PCA setting where the principal component has $\ell_1$ sparsity $\kappa$.} The above theorem shows that there is an efficient algorithm that obtains a rank-$r$ projection $\widehat{\Pi}$ that is $O(\sqrt{r} \kappa \delta)$ close to $\Pi^*$ in squared Frobenius norm, for sufficiently large polynomial $m$ ($\widehat{\Pi}$ also has robustness parameter $O(\kappa)$). On the other hand, a random subspace of rank $r$ will incur an error of $\Omega(r)$. Our algorithm can achieve an error of $o(1)$ while tolerating an additive perturbation that is as large as  $\delta = o( 1/(\sqrt{r}\kappa) )$ (which could be $n^{-0.21}/\sqrt{r}$ if $\kappa=n^{0.2}$).
On the other hand, if the top-$r$ subspace has no special structure (robustness parameter $\kappa \approx \sqrt{n}$), then one requires $\delta =o(n^{-1/2}/\sqrt{r})$ for achieving similar error rates.
Next, we give a computational inefficient algorithm that achieves a better statistical rate in terms of the sample complexity. 

\begin{theorem}\label{ithm:info_upper_gen}[Statistical upper bound]
Given $m$ samples drawn according to the Adversarial Perturbation model with covariance $\Sigma^*$ satisfying $\norm{\Pi^*}_{\infty \to 2} \le \kappa$, there exists an algorithm that for any $\eps>0$ uses $m \geq C r^2 \kappa^2 \big(\tfrac{\lambda_1^2}{(\lambda_r-\lambda_{r+1})^2}\big) \log n/\eps^2  $ samples and outputs a rank-$r$ projection $\widehat{\Pi}$ with $\norm{\widehat{\Pi}}_{\infty \to 2} \le \kappa$, and an estimate $\widehat{\Sigma}_\bg$ (restricted to the subspace $\widehat{\Pi}$) s.t.
\begin{align}
\|\widehat{\Pi} -  \Pi^*\|_F^2& \le \eps_1 := \tfrac{\sqrt{\lambda_1}}{(\lambda_r-\lambda_{r+1})} \cdot O\big( \sqrt{ r} \cdot \kappa \delta \big) +\eps \text{ and }
 \|\widehat{\Sigma}_{\bg}-\Sigma_{\bg}\|_F^2 \le O(\lambda_1^2 \eps_1 + \lambda_1 \kappa^2 \delta^2).\nonumber
\end{align}
\end{theorem}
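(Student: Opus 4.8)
The plan is to mirror the computationally efficient algorithm of Theorem~\ref{ithm:comp_upper_gen} but replace the convex-programming / SDP relaxation step with a brute-force search over the relevant constraint set, thereby avoiding the relaxation gap that costs the extra factor of $\kappa^2$ in the sample complexity. Concretely, let $\tA \in \R^{m\times n}$ be the corrupted data and form the empirical second-moment (or centered covariance) matrix $\wh\Sigma = \tfrac1m \sum_j \tA_j \tA_j^\top$ (after the mean-estimation preprocessing of Proposition~\ref{prop:intro:robust_mean}, so we may assume $\mu$ is effectively handled). The key structural fact driving everything is that each uncorrupted sample lies within $\ell_q$-distance $\delta$ of its observation, so for any candidate rank-$r$ projection $\Pi$ with $\norm{\Pi}_{q\to 2}\le\kappa$, the perturbation contributes at most $O(\kappa\delta\sqrt{\lambda_1})$ in operator norm to $\Pi\wh\Sigma\Pi$ — this is exactly the $q\to2$ operator-norm bound being used to pay for the $\ell_q$-bounded noise. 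The estimator is: over all rank-$r$ projections $\Pi$ with $\norm{\Pi}_{q\to2}\le\kappa$, choose $\wh\Pi$ maximizing $\tr(\Pi\wh\Sigma)$ (equivalently minimizing a suitable proxy for $\sin\Theta$ distance), then set $\widehat{\Sigma}_\bg = \wh\Pi \wh\Sigma \wh\Pi$.

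The steps, in order. (1) \emph{Uniform concentration.} Using $m \ge C r^2\kappa^2 (\lambda_1^2/(\lambda_r-\lambda_{r+1})^2)\log n/\eps^2$ samples, show that $\norm{\Pi(\wh\Sigma - \Sigma^*)\Pi}$ is small \emph{uniformly} over all rank-$r$ projections $\Pi$ with $\norm{\Pi}_{q\to2}\le\kappa$; here the $\kappa^2$ (rather than $\kappa^4$) arises because we only need concentration of the \emph{value} $\tr(\Pi\wh\Sigma)$ rather than a spectral guarantee robust to a relaxation, and we can afford a covering-number / chaining argument over the set of such projections whose metric entropy scales with $r$ and $\log n$. (2) \emph{Accounting for adversarial noise.} Since each $\norm{\tA_j - A_j}_q\le\delta$, expand $\tA_j\tA_j^\top = A_jA_j^\top + A_j(\tA_j-A_j)^\top + (\tA_j-A_j)A_j^\top + (\tA_j-A_j)(\tA_j-A_j)^\top$; projecting by $\Pi$ and using $\norm{\Pi x}_2 \le \kappa\norm{x}_q$ shows each cross term is bounded by $\kappa\delta\cdot O(\sqrt{\lambda_1})$ and the quadratic term by $\kappa^2\delta^2$, uniformly over the candidate set. (3) \emph{Optimality of $\wh\Pi$.} Since $\Pi^*$ is feasible ($\norm{\Pi^*}_{q\to2}\le\kappa$ by hypothesis), the maximizer $\wh\Pi$ satisfies $\tr(\wh\Pi\wh\Sigma)\ge\tr(\Pi^*\wh\Sigma)$; combine with (1)–(2) to get $\tr(\wh\Pi\Sigma^*)\ge\tr(\Pi^*\Sigma^*) - O(\sqrt{r}\kappa\delta\sqrt{\lambda_1}) - \eps'$. (4) \emph{Eigengap to distance.} Use a Davis–Kahan / eigenvalue-gap argument: because $\Pi^*$ captures the top-$r$ eigenvalues and the gap is $\lambda_r - \lambda_{r+1}$, any near-maximizer of $\tr(\Pi\Sigma^*)$ must be close to $\Pi^*$, quantitatively $\norm{\wh\Pi - \Pi^*}_F^2 \le \tfrac{\sqrt{\lambda_1}}{\lambda_r-\lambda_{r+1}}\cdot O(\sqrt{r}\kappa\delta) + \eps$. (5) \emph{Covariance estimate.} Feed $\eps_1$ and the noise bound from (2) into $\widehat{\Sigma}_\bg = \wh\Pi\wh\Sigma\wh\Pi$ and expand $\norm{\widehat{\Sigma}_\bg - \Sigma_\bg}_F^2$ via the triangle inequality, separating the contribution of $\norm{\wh\Pi-\Pi^*}_F$ (giving $\lambda_1^2\eps_1$) from the residual noise on the fixed subspace (giving $\lambda_1\kappa^2\delta^2$).

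The main obstacle I anticipate is Step (1): obtaining a \emph{uniform} concentration bound over the infinite family of rank-$r$ projections with $q\to2$ operator norm at most $\kappa$ with the claimed sample complexity $r^2\kappa^2(\lambda_1/(\lambda_r-\lambda_{r+1}))^2\log n/\eps^2$. One has to bound the metric entropy of this set in the right norm — the natural $\eps$-net has size roughly $(n/\eps)^{O(r)}$ but the variance of $\tr(\Pi\wh\Sigma)$ along each direction must be controlled by $\kappa^2\lambda_1^2$ rather than $n\lambda_1^2$, which is precisely where the analytic-sparsity parameter $\kappa$ enters and where the Gaussian concentration (via Hanson–Wright or Gaussian chaos bounds for $\tr(\Pi \wh\Sigma)$) needs to be combined carefully with the $q\to2$-norm constraint rather than a raw dimension count. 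Steps (3) and (4) are essentially the standard argument that a value-maximizer over a feasible set containing the truth is close to the truth under an eigengap, and (5) is routine algebra once the pieces are in hand; the adversarial-noise bookkeeping in (2) is also routine given the $q\to2$ operator-norm inequality. The brute-force optimization in the estimator is what buys back the $\kappa^2$ factor relative to Theorem~\ref{ithm:comp_upper_gen}, at the expense of exponential running time, exactly as the theorem statement advertises.
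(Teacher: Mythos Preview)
Your proposal is correct and matches the paper's approach almost step for step: the paper's estimator is exactly $\wt{\Pi} = \argmax_{\Pi \in \mathcal{P}} \langle \wt{A}\wt{A}^\top, \Pi\rangle$ over $\mathcal{P} = \{\text{rank-}r\text{ projections with }\|\Pi\|_{q\to 2}\le\kappa\}$, and Steps~(2)--(5) are precisely Claim~\ref{clm:empirical_est_proj_gen}, Claim~\ref{clm:distance_X_proj}, and Theorem~\ref{thm:recover_top_covariance}. For Step~(1), the paper does not cover the set of projections directly; instead it decomposes $\Pi = \sum_{j=1}^r v_j v_j^\top$ (each eigenvector automatically satisfies $\|v_j\|_{q^*}\le\kappa$), applies a generic chaining bound of Mendelson (via \cite{VuLei12}) to the single-vector class $\{v:\|v\|_2=1,\|v\|_{q^*}\le\kappa\}$ whose Gaussian width is $\kappa\cdot\E\|g\|_q = O(\kappa\, n^{1/q}\sqrt{\log n})$, and then sums the $r$ rank-one pieces---this is exactly how the $\kappa$ (rather than $\sqrt{n}$) enters and what buys the $\kappa^2$ sample dependence.
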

See Theorem~\ref{thm:inf_recover_upper_gen} for the guarantees for general $q>2$. 
The dominant error of $O(\sqrt{ r}\kappa \delta)$ is the same for both Theorems~\ref{ithm:comp_upper_gen} and~\ref{ithm:info_upper_gen}, and represents the asymptotic error (error as $m \to \infty$). The main difference however is the number of samples $m$ needed as a function of $\kappa$ to drive the error to within $\eps$ of this asymptotic error. This gap of $\kappa^4$ vs $\kappa^2$ represents a computational vs statistical tradeoff that is unavoidable even when $r=1$ (and $q=\infty$), assuming the hardness of the Planted Clique problem. This follows directly from computational lower bounds for sparse PCA with a $k=\kappa^2$-sparse vector (combinatorial sparsity) assuming Planted Clique hardness~\citep{BerthetRigollet,gao2017sparse}. For smaller $q \in (2,\infty)$, there is an extra polynomial factor gap of $n^{2/q}$ in the sample complexity between Theorem~\ref{thm:comp_upper_gen} and Theorem~\ref{thm:inf_recover_upper_gen} that would be interesting to resolve. Finally 
the estimation error in the absence of any adversarial errors is comparable to the existing state of the art results that are known to be tight (minimax optimal)~\citep{VuLei13,ACCV}.

The following lower bound shows that our asymptotic error guarantees are almost optimal for {\em every} instance. 

\begin{theorem}\label{ithm:inf_recover_lower}[Lower Bound] \label{thm:inf_recover_lower}
Suppose we are given parameters $r\in \N, \kappa \ge 2r$ and $\delta>0$. 
In the notation of Theorem~\ref{ithm:info_upper_gen}, for any $\Sigma^*$, given $m$ samples $A_1, \dots, A_m$ generated i.i.d. from $\calN(0,\Sigma^*)$ with 
$\kappa=\norm{\Pi^*}_{\infty \to 2}$ satisfying $\sqrt{r \lambda_1} (\kappa/n) \le \delta \le \sqrt{r \lambda_1}/\kappa$, 
there exists a covariance matrix $\Sigma'$ with a projector $\Pi'$ onto its top-$r$ principal subspace, and an alternate 
dataset $A'_1, \dots, A'_m$ drawn i.i.d. from $\calN(0,\Sigma')$ satisfying   $\norm{\Pi'}_{\infty \to 2} \le (1+o(1))\kappa$, and $\norm{A'_j - A_j}_\infty \le \delta ~\forall j \in [m]$,  
\begin{align*}
     \text{but }\norm{\Pi^*-\Pi'}_F^2 &\ge \big(\tfrac{\Omega(1)}{\sqrt{\lambda_1} \log(rm) \log n}\big) \cdot \sqrt{r }\kappa \delta,  ~\text{and } & \norm{\Sigma'_\bg - \Sigma_\bg}_F^2 \ge \tfrac{(\lambda_1^2+\dots+\lambda_r^2)}{r} \cdot \norm{\Pi'-\Pi^*}_F^2
\end{align*}
 In particular, when $\Sigma_\bg=(1+\theta) \Pi^*$ then $\Sigma'_\bg=(1+\theta') \Pi'$ with $\theta'=(1+o(1))\theta$. 

\end{theorem}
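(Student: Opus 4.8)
The plan is to exhibit, for an arbitrary target $\Sigma^*$ with projector $\Pi^*$ of robustness parameter $\kappa$, a nearby covariance $\Sigma'$ whose top-$r$ subspace is $\Omega(\sqrt{r}\kappa\delta/\sqrt{\lambda_1})$ far (in squared Frobenius norm) from that of $\Sigma^*$, yet which is statistically indistinguishable after an $\ell_\infty$ perturbation of size $\delta$ to every sample. The key mechanism is that a rank-$r$ subspace with $\|\Pi^*\|_{\infty\to 2}=\kappa$ admits, by definition of the $\infty\to 2$ norm, a sign vector $y\in\{\pm 1\}^n$ with $\|\Pi^* y\|_2$ large (of order $\kappa$); writing $u = \Pi^* y$, the direction $u/\|u\|_2$ lies in the subspace and is ``spread out'' in the sense that moving the samples by $\delta\, y$ shifts a lot of mass along $u$. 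First I would construct $\Sigma'$ by rotating $\Pi^*$ slightly within the span of $\Pi^*$ and a carefully chosen direction, parametrized so that $\|\Pi'-\Pi^*\|_F^2 \approx \sqrt{r}\kappa\delta/\sqrt{\lambda_1}$ while keeping $\|\Pi'\|_{\infty\to 2}\le (1+o(1))\kappa$ — the $o(1)$ slack comes from the smallness of the rotation angle, controlled by the hypothesis $\delta\le \sqrt{r\lambda_1}/\kappa$. The lower bound on $\|\Sigma'_\bg-\Sigma_\bg\|_F^2$ then follows from $\|\Sigma'_\bg - \Sigma_\bg\|_F^2 \ge \big(\tfrac{1}{r}\sum_{i\le r}\lambda_i^2\big)\|\Pi'-\Pi^*\|_F^2$, which is essentially a statement comparing the smallest ``effective'' squared eigenvalue to the projector distance (and becomes an equality in the $\Sigma_\bg = (1+\theta)\Pi^*$ case, giving the stated $\theta' = (1+o(1))\theta$).

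The indistinguishability step is the heart of the argument. I would use the standard two-point / coupling formulation: it suffices to produce, for the given i.i.d.\ sample $A_1,\dots,A_m\sim\calN(0,\Sigma^*)$, a dataset $A'_1,\dots,A'_m$ that is simultaneously (i) a valid i.i.d.\ draw from $\calN(0,\Sigma')$ — or close enough in total variation over $m$ samples that no test distinguishes — and (ii) entrywise within $\delta$ in $\ell_\infty$ of the original sample. Concretely, because $\Sigma'$ differs from $\Sigma^*$ only by a small rank-$2$ (or low-rank) update along the spread direction $u$, one can write $A'_j = A_j + (\text{small linear correction along } u)$; the correction needed to convert a $\calN(0,\Sigma^*)$ sample into a $\calN(0,\Sigma')$ sample is, coordinatewise, bounded by roughly $\delta$ precisely because $u$ has $\|u\|_\infty$ controlled relative to $\|u\|_2$ via the sign-vector construction. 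The $\log(rm)\log n$ loss in the theorem statement is exactly what one pays to make this hold with high probability uniformly over all $m$ samples and all $n$ coordinates (union bound over Gaussian tails), and possibly an extra log from discretizing/net-ing the choice of rotation direction.

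I expect the main obstacle to be carrying out step (ii) cleanly: ensuring the map $A_j \mapsto A'_j$ is simultaneously a measure-preserving (or TV-close) transport from $\calN(0,\Sigma^*)^{\otimes m}$ to $\calN(0,\Sigma')^{\otimes m}$ \emph{and} an $\ell_\infty$-bounded perturbation with the right high-probability guarantee. The cleanest route is probably to choose $\Sigma'$ so that $\Sigma' = R\Sigma^* R^\top$ for an orthogonal $R$ close to identity (a rotation), so that $A'_j := R A_j$ is exactly distributed as $\calN(0,\Sigma')$; then $\|A'_j - A_j\|_\infty = \|(R-I)A_j\|_\infty$, and one must verify this is $\le\delta$ w.h.p., which reduces to bounding $\|R-I\|$ against the $\ell_\infty\to\ell_\infty$ behavior on Gaussian vectors — here the spread/sign-vector structure of the rotation plane enters, and one must check the rotation can be taken to act ``evenly'' across coordinates rather than concentrating its effect. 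A secondary technical point is verifying $\|\Pi'\|_{\infty\to 2}\le(1+o(1))\kappa$ after the rotation, since the $\infty\to2$ norm is not rotation-invariant; this should follow from a first-order expansion in the rotation angle together with the hypothesis $\delta\ge\sqrt{r\lambda_1}(\kappa/n)$, which guarantees $\kappa$ is not so small that the rotation plane is forced to be poorly spread.
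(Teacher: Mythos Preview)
Your high-level plan (produce a nearby $\Sigma'$ and a sample-wise coupling) is correct, but the concrete construction you propose has a genuine gap at the coupling step, and the paper's construction is materially different.

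The central problem is your rotation coupling $A'_j = RA_j$. If $R$ rotates by angle $\theta\approx\sqrt{\eps}$ in a plane spanned by some $v\in\mathrm{span}(\Pi^*)$ and $w\perp\Pi^*$, then $(R-I)A_j$ contains the first-order term $\sin\theta\cdot\iprod{w,A_j}\cdot v$, whose $\ell_\infty$ norm is $|\sin\theta|\cdot|\iprod{w,A_j}|\cdot\norm{v}_\infty$. You have no control on $\norm{v}_\infty$ from $\kappa=\norm{\Pi^*}_{\infty\to 2}$ alone: the $\infty\to 2$ norm bounds $\norm{v}_1\le\kappa$ for every unit $v$ in the subspace (via $\norm{\Pi^*}_{2\to 1}=\kappa$), but says nothing about $\norm{v}_\infty$. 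For a concrete counterexample take the rank-one projector onto $v=\tfrac{1}{\sqrt2}e_1+\tfrac{1}{\sqrt2}\cdot\tfrac{1}{\sqrt k}(e_2+\dots+e_{k+1})$: here $\kappa=\norm{v}_1=(1+\sqrt k)/\sqrt2$ can be made arbitrarily large, yet $\norm{v}_\infty=1/\sqrt2$. So your claim that ``$u$ has $\norm{u}_\infty$ controlled relative to $\norm{u}_2$ via the sign-vector construction'' is simply false; the sign-vector witness gives a direction in $\Pi^*$ with large $\ell_1$, not small $\ell_\infty$. Any orthogonal $R$ that moves $\Pi^*$ must carry some direction of $\Pi^*$ partly out and (by orthogonality) some orthogonal direction partly in, and the latter back-action lands along a $v\in\Pi^*$ whose $\ell_\infty$ norm you cannot bound.

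The paper avoids this by \emph{not} using a rotation coupling. For each $\ell\le r$ it chooses a perturbation direction $u_\ell$ that is (i) orthogonal to $\Pi^*$, (ii) supported on a block $S_\ell\subset[n]$ of size $k'\approx\sqrt{\lambda_1/r}\cdot(\kappa/\delta)$ \emph{disjoint} from the other blocks, and (iii) a normalized Gaussian projection onto $\R^{S_\ell}\cap(\Pi^*)^\perp$, so that $\norm{u_\ell}_\infty\lesssim\sqrt{(\log n)/k'}$ with high probability. It then sets $v'_\ell=(1-\eps)v_\ell+\sqrt{2\eps-\eps^2}\,u_\ell/\norm{u_\ell}_2$ and couples by reusing the \emph{same} Gaussian coefficients: writing $A_j=\sum_\ell\zeta^{(j)}_\ell\sqrt{\lambda_\ell}\,v_\ell$, one takes $A'_j=A_j+\sum_{\ell\le r}\zeta^{(j)}_\ell\sqrt{\lambda_\ell}\,c_\ell u_\ell$. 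The perturbation is therefore supported \emph{entirely} on the $u_\ell$'s---there is no $v_\ell$ component at all---and the disjoint supports make $\bignorm{\sum_\ell\alpha_\ell u_\ell}_\infty=\max_\ell|\alpha_\ell|\norm{u_\ell}_\infty$, which is what delivers the $\ell_\infty\le\delta$ bound after a union bound over $j,\ell$. The resulting $\Sigma'$ is \emph{not} an orthogonal conjugate of $\Sigma^*$ (the top eigenvalues scale by $(1-\eps)^{-2}$), which is exactly what lets the coupling sidestep the back-action term. Two further points your sketch misses: perturbing all $r$ eigendirections in orthogonal planes is what yields $\norm{\Pi'-\Pi^*}_F^2\approx r\eps$ (a single plane gives only $\approx\eps$, losing the $\sqrt r$ factor), and the bound $\norm{\Pi'}_{\infty\to 2}\le(1+o(1))\kappa$ is obtained by passing to $\norm{\Pi'}_{\infty\to 1}$ and using its monotonicity on PSD matrices, not by a first-order expansion.
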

See Section~\ref{sec:lower_bound} for more details and proof of the construction, and Theorem~\ref{thm:inf_recover_lower_q} for the extension to general $\ell_q$ norms.  
Consider the previous setting where $\lambda_r - \lambda_{r+1} =\Omega(\lambda_1)$ and think of $m$ as being any large polynomial in $n$.
The above lower bound on the error $\norm{\Pi' -\Pi^*}_F^2 = \tilde{\Omega}(\sqrt{r} \kappa \delta)$ nearly matches the error bound obtain by our algorithm in Theorem~\ref{ithm:comp_upper_gen} (as $m$ becomes a sufficiently large polynomial and hence $\eps \approx 0$) up to logarithmic factors, for {\em every} instance (i.e., every $\Pi^*,\Sigma^*$) i.e., our bounds are nearly {\em instance-optimal}. Note that this is much stronger than minimax optimality, which only requires the lower bounds to be tight for a specific choice of $\Sigma^*,\Pi^*$.      
Hence, Theorem~\ref{ithm:comp_upper_gen} and Theorem~\ref{ithm:inf_recover_lower}
together show that the $\infty \to 2$ norm of the projection matrix essentially {\em characterizes} the robustness to training errors bounded in $\ell_\infty$ norm.  

\paragraph{Discussion of the characterization.} Our characterization of the robustness to adversarial perturbations is in terms of the robustness parameter $\kappa=\norm{\Pi^*}_{\infty\to 2}$ ($\norm{\Pi^*}_{q \to 2}$ for general $q$), which generalizes analytic notions of sparsity. For a $r=1$-dimensional subspace, this exactly corresponds to the $\ell_1$ sparsity of the unit vector $v$ in that subspace. For higher-dimensional subspaces, there are several other notions of sparsity that have been explored~\citep{VuLei13, Liuetal}. For a fixed orthonormal basis $V \in \R^{n \times r}$ of the subspace (so $\Pi^*=VV^\top$), some of the notions that have been considered include the entry-wise norm $\norm{V}_1$ (the sum of the $\ell_1$ norms of the basis vectors), the maximum $\ell_1$ norm among the columns of $V$, the sparsity of the diagonal of $\Pi^*$ and the sum of the row $\ell_2$ norms of $V$, among other quantities. Many of these quantities are the same for $r=1$ but may vary by factors of $\sqrt{r}$ or more depending on the quantity. On the other hand, our robustness parameter $\kappa$ is a property only of the subspace and is basis independent. 
The $\norm{\Pi^*}_{\infty \to 2}$ of a projector is the largest $\ell_1$ norm among unit vectors (in $\ell_2$ norm) that belong to the subspace.

Consider three different subspaces (or projectors) given by the orthonormal basis $V_1, V_2, V_3 \in \R^{n \times r}$ of the following form (think of $\kappa = \sqrt{k}$, $r \ll \kappa$); assume that the signs of the entries are chosen randomly in a way that also satisfies the necessary orthogonality properties (e.g., random Fourier characters over $\set{\pm 1}^k$).
\begin{equation*}
V_1= 
\begin{pmatrix}
 \tfrac{\pm 1}{\sqrt{k}} &  \tfrac{\pm 1}{\sqrt{k}} & \cdots &  \tfrac{\pm 1}{\sqrt{k}} \\
 \tfrac{\pm 1}{\sqrt{k}} &  \tfrac{\pm 1}{\sqrt{k}} & \cdots &  \tfrac{\pm 1}{\sqrt{k}} \\
\vdots  & \vdots  & \ddots & \vdots  \\
 \frac{\pm 1}{\sqrt{k}} &  \frac{\pm 1}{\sqrt{k}} & \cdots &  \frac{\pm 1}{\sqrt{k}} \\
 0 & 0 & \cdots & 0 \\
\vdots & \vdots & \vdots & \vdots \\
0 & 0 & \cdots & 0 
\end{pmatrix},
~~~~
V_2= 
\begin{pmatrix}
\tfrac{\pm \sqrt{r}}{\sqrt{k}} & 0 & \cdots & 0 \\
\cdot & \cdot & \cdots & \cdot \\
\tfrac{\pm \sqrt{r}}{\sqrt{k}} & 0 & \cdots & 0 \\
0 &  \tfrac{\pm \sqrt{r}}{\sqrt{k}} & \cdots & 0 \\
\cdot & \cdot & \cdots & \cdot \\
0 & \tfrac{\pm \sqrt{r}}{\sqrt{k}} & \cdots & 0 \\
0 &  0 & \cdots & 0 \\
\vdots  & \vdots  & \ddots & \vdots  
\end{pmatrix},
~~~V_3= 
\begin{pmatrix}
\tfrac{\pm 1}{\sqrt{r}} & \tfrac{\pm 1}{\sqrt{r}} & \cdots & \tfrac{\pm 1}{\sqrt{r}}& \tfrac{\pm 1}{\sqrt{k}} \\
\cdot & \cdot & \cdots &\cdot & \cdot \\
\tfrac{\pm 1}{\sqrt{r}} & \tfrac{\pm 1}{\sqrt{r}} & \cdots & \tfrac{\pm 1}{\sqrt{r}}& \tfrac{\pm 1}{\sqrt{k}} \\
0 & 0 & \cdots&0 & \tfrac{\pm 1}{\sqrt{k}} \\
\cdot & \cdot & \cdots&\cdot & \cdot \\
0 & 0 & \cdots & 0& \tfrac{\pm 1}{\sqrt{k}} \\
0 &0 &\cdots & 0&0\\
\vdots  & \vdots  & \ddots & \vdots & \vdots  
\end{pmatrix}
\end{equation*} 
The main difference between $V_1, V_2$ is that in $V_2$ the sparse basis vectors have disjoint support, whereas in $V_1$ they are commonly supported. However, there is an alternate basis for the subspace $V_2$ which looks like $V_1$, but basis dependent quantities like the maximum $\ell_1$ norm among columns get very different values for $V_1, V_2$. In the third example, the first $r-1$ basis vectors are extremely sparse with $\ell_1$ norm $O(\sqrt{r})$, whereas only one of the basis vectors has $\ell_1$ sparsity $\sqrt{k}$. Many aggregate notions of sparsity like $\norm{V}_1$  or sum of the row $\ell_2$ norms have very different values for $V_1$ and $V_3$ that differ by a $\sqrt{r}$ factor. On the other hand, our robustness parameter $\kappa \approx \sqrt{k}$; this is because each of these subspaces are supported on at most $k$ co-ordinates (and a spread out vector of this form exists), so the maximum $\ell_1$ length among unit $\ell_2$ norm vector is $\sqrt{k}$. Hence, while our robustness parameter $\norm{\Pi^*}_{\infty \to 2}$ {\em characterizes} the asymptotic error that can be obtained in all of these different cases (using Theorem~\ref{ithm:comp_upper_gen} and Theorem~\ref{ithm:inf_recover_lower}), many other natural notions of sparsity are off by factors of $\sqrt{r}$ or more in at least one of these cases.

Finally, our robustness parameter $\kappa$ also satisfies other useful properties like monotonicity (see Lemma~\ref{lem:monotone}), that will be very useful in the algorithm and analysis (this is not satisfied by various other norms like $\norm{\cdot}_1$ etc.). While the $\infty \to 2$ operator norm is NP-hard to compute for PSD matrices, there exists polynomial time algorithms that can compute it up to a small constant factor (that corresponds to the Gr\"{o}thendieck constant for PSD matrices)~\citep[see ][]{nesterov1998semidefinite, alon2004approximating}.

\paragraph{Comparison to Prior Work and Related Work.}
\label{sec:comparison}

There are several other notions of robustness that have been explored in both unsupervised and supervised learning. We place our work in the context of these existing works in Section~\ref{sec:related}. The work that is closest to this paper is the recent work of \citet{ACCV}. Our work is inspired by \citet{ACCV} and builds on some of those techniques. However, our work differs significantly from \cite{ACCV} both in terms of the problem focus, and the nature of the results, as we explain below. The main problem considered in \cite{ACCV} is finding a low-rank projection of a given data matrix $A$ that achieves low approximation error, and is also robust to adversarial perturbations at {\em testing} time. Robustness at test time naturally places an upper bound constraint on the $q \to 2$ operator norm of the projection matrix.
The paper also consider this problem under adversarial perturbations at training-time, and use these results as a black-box to obtain some guarantees for mean estimation and clustering in the presence of adversarial perturbations. The paper mainly studies the worst-case setting which is computationally hard, and hence focus on multiplicative approximation guarantees for an objective (like low-rank approximation error), as opposed to estimation or recovery.

On the other hand, the main focus of this paper is adversarial perturbations at {\em training time}; there is no requirement of robustness at testing-time. Hence, it is not clear why $\kappa=\norm{\Pi}_{q \to 2}$ is a relevant parameter at all. The main message of this paper is that this parameter $\kappa$ indeed characterizes the robustness to adversarial perturbations at training time as well (this is even if test-time robustness is not a consideration)! Moreover  we focus on high-dimensional statistical estimation tasks where there is an underlying distribution for the uncorrupted data, and allows us to obtain the strong statistically optimal recovery guarantees. Hence the guarantees in the two works are incomparable. 
\section{Preliminaries}\label{sec:prelims}
\paragraph{Norms.} For a vector $v \in \R^n$ and any $q \ge 1$, we use $\|v\|_{q}$ to denote the $q$-norm: $\big( \sum_{i=1}^n |v(i)|^q \big)^{1/q}$. For any fixed $q \ge 1$, we use $\ell_{q^*}$ to denote the dual of $\ell_q$, where $1/q+1/q^*=1$. We also apply H\"older's inequality extensively:
$\forall q \ge 1 \text{ and } u,v \in \R^n, \big| \langle u, v \rangle \big| \le \|u\|_{q^*} \|v\|_q.
$ A direct corollary is that $\|v\|_q \le |\text{support}(v)|^{1/q-1/p} \cdot \|v\|_p$ for any vector $v$ and any $q<p$. In particular, $\|v\|_1 \le \sqrt{k}$ for a unit vector $v$ of sparsity $k$.

For a matrix $A \in \R^{n \times m}$ and $q \ge 1$, we will use $\|A\|_q$ to denote the entry-wise $\ell_q$ norm of $A$: $\big(\sum_{i,j} |A(i,j)|^q \big)^{1/q}$. When $q=2$, we will also use the Frobenius norm $\|A\|_F \overset{\text{def}}{=} \|A\|_2$ equipped with trace inner product $\langle A, B \rangle=\tr(A^{\top} B)$.

\paragraph{$p \to q$ norms.} For any $p$ and $q$, we define the operator $p \to q$ norm for a matrix $A \in \R^{n \times m}$:
\[
\|A\|_{p \to q}=\max_{v \in \R^m \setminus \set{0}} \|A v\|_q/\|v\|_p.
\]

For convenience, let $\|A\|$ denote the operator norm $\|A\|_{2 \to 2}$. A variational definition of the operator norm is as follows (See Section 4 in \cite{ACCV} for proofs).
\begin{fact}\label{fact:dual_norms}
For any $p$ and $q$, 
$\|A\|_{p \to q} = \underset{u \in \R^n \setminus \set{0}, v \in \R^m \setminus \set{0}}{\max} {u^{\top} A v}/{(\|u\|_{q^*} \|v\|_p)}$. Also, $\|A\|_{p \to q}=\|A^{\top}\|_{q^* \to p^*}$ and $\|A^\top A\|_{q \to q^*} = \|A\|^2_{q \to 2}.$ In particular, $\|\Pi\|_{\infty \to 2}=\|\Pi\|_{2 \to 1}$ and $\|\Pi\|_{q \to q^*}=\|\Pi\|_{q \to 2}^2$ for projection matrices.
\end{fact}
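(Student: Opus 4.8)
The plan is to deduce all four assertions from the single variational identity stated first in the fact, which itself is just the equality case of H\"older's inequality. First I would establish that variational formula: starting from $\|A\|_{p\to q}=\max_{v\neq 0}\|Av\|_q/\|v\|_p$, expand the numerator using the dual description of the $\ell_q$ norm, $\|w\|_q=\max_{u\neq 0}u^\top w/\|u\|_{q^*}$ for every $w$ (precisely tightness of H\"older, valid for $q\in[1,\infty]$). Plugging in $w=Av$ and swapping the two maxima --- both over compact sets once we normalize, hence attained --- gives $\|A\|_{p\to q}=\max_{u,v\neq 0}u^\top A v/(\|u\|_{q^*}\|v\|_p)$.

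Next, the transpose symmetry. Since $u^\top A v$ is a scalar, $u^\top A v=v^\top A^\top u$, so the expression above, after relabeling $u$ and $v$ and using $(p^*)^*=p$, is exactly what the variational formula produces for $\|A^\top\|_{q^*\to p^*}$; hence $\|A\|_{p\to q}=\|A^\top\|_{q^*\to p^*}$. Specializing to $p=\infty$, $q=2$, $A=\Pi$ with $\Pi^\top=\Pi$ (and $2^*=2$, $\infty^*=1$) yields $\|\Pi\|_{\infty\to2}=\|\Pi\|_{2\to1}$.

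Finally, the identity $\|A^\top A\|_{q\to q^*}=\|A\|_{q\to2}^2$: apply the variational formula to $A^\top A$ with input exponent $q$ and output exponent $q^*$, noting $(q^*)^*=q$, to get $\|A^\top A\|_{q\to q^*}=\max_{u,v\neq 0}u^\top A^\top A v/(\|u\|_q\|v\|_q)=\max_{u,v\neq 0}\iprod{Au,Av}/(\|u\|_q\|v\|_q)$. Cauchy--Schwarz bounds $\iprod{Au,Av}$ by $\|Au\|_2\|Av\|_2$, so this quantity is at most $\big(\max_u\|Au\|_2/\|u\|_q\big)^2=\|A\|_{q\to2}^2$; conversely, taking $u=v$ equal to a maximizer of $\|Av\|_2/\|v\|_q$ makes the ratio equal $\|A\|_{q\to2}^2$, so the two coincide. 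For a projector $\Pi=\Pi^\top=\Pi^2$ we have $\Pi^\top\Pi=\Pi$, giving $\|\Pi\|_{q\to q^*}=\|\Pi\|_{q\to2}^2$. There is essentially no obstacle here --- the argument is pure norm-duality bookkeeping --- the only step with any content being the tightness half of the $A^\top A$ identity, which requires the extremal choice $u=v$ rather than following formally from the variational formula, and which is also what drives the two ``in particular'' consequences for projectors.
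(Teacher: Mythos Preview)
Your argument is correct and complete: the variational formula via H\"older duality, the transpose symmetry by relabeling, and the $A^\top A$ identity via Cauchy--Schwarz plus the diagonal choice $u=v$ are exactly the standard proofs, and the projector specializations follow immediately. The paper does not actually give its own proof of this fact --- it simply cites Section~4 of \cite{ACCV} --- so there is no in-paper argument to compare against; your self-contained derivation is what one would expect to find behind that citation.
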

Due to the space constraint, we defer a few properties of the operator norm to Appendix~\ref{sec:prelims_app}.

\section{Computational Upper Bound}\label{sec:comp_upper}
In this section we present our computationally efficient algorithm for estimating the top-$r$ principal subspace. We state our main claim regarding the error guarantees associated with the algorithm and describe the key ideas used in the analysis. All the proofs are deferred to Appendices \ref{sec:comp-upper-app} and \ref{sec:comp_upper_app}.
A key subroutine in our algorithm is the following convex program that was proposed in \cite{ACCV}. We use the program will be run on the corrupted data $\tilde{A}$ and the bulk of our analysis will involve showing that the solution output by the program can be used for estimation in spite of adversarial perturbations.
The program takes in as parameters the rank $r$ and an upper bound for the robustness parameter $\kappa$, whose target solution is the projection $\Pi^*$ of $\Sigma^*$. 
\begin{align}
& \min \frac{1}{m}\norm{A}_F^2 - \frac{1}{m} \iprod{A A^{\top}, X} \label{sdp:spike}\\
\textit{ subject to } ~& \tr(X)\le r\\
& 0 \preceq X \preceq I\\
& \norm{X}_{q^*} \le r \kappa^{2} \label{sdp:totalnorm}\\
&\norm{X}_{q \to q^*} \le \kappa^2 \label{sdp:robustnorm}
\end{align}
One can use the Ellipsoid algorithm to efficiently solve the program above via an efficient separation oracle~(See Lemma~\ref{lem:sdpsolving:spike}). We briefly discuss the last two constraints in the above program and refer to \cite{ACCV} for a more detailed discussion: The constraint~\eqref{sdp:totalnorm} comes from the fact that the projection $\Pi^*=\sum_{i=1}^r v_i v_i^{\top}$ where each $\|v_i\|_{q^*} \le k$. At the same time, the last constraint~\eqref{sdp:robustnorm} is based on the monotonicity of $q \rightarrow q^*$ norms from Lemma~\ref{lem:monotone}.

Below is the algorithm that uses the SDP solution above to outputs a robust projection matrix $\widehat{\Pi}$ of rank at most $r$. 

\begin{algorithm}[H]
\caption{Finding Robust Low-Rank Projection}
\begin{algorithmic}[1]
\Function{RobustProjection}{data matrix $\tilde{A} \in \R^{m \times n}$, rank $r$, robustness $\kappa$, norm $q$}
\State Solve \eqref{sdp:spike} on $\tilde{A}$ with parameters $\kappa, q, r$ to find a solution $\widehat{X} \succeq 0$ (see Lemma~\ref{lem:sdpsolving:spike}). 
\State Use SVD on $\widehat{X}$ to find the subspace spanned by the top-$r$ eigenvectors of $\widehat{X}$. Output $\widehat{\Pi}$, the orthogonal projection matrix onto this subspace. 
\EndFunction \label{algo:projection}
\end{algorithmic}
\end{algorithm}

Finally, our algorithm for estimating the principal components of the covariance matrix in the presence of adversarial perturbations, described below, just uses {\sc RobustProjection} as an additional pre-processing step to find a suitable robust subspace for computing the empirical covariance. 

\begin{algorithm}[H]
\caption{Principal Subspace Estimation under Adversarial Perturbations}
\label{algo:covariance}
\begin{algorithmic}[1]
\Function{AdvRobustPCA}{$4m$ samples $\tilde{A}_1, \dots, \tilde{A}_{4m} \in \R^{n}$, rank $r$, robustness $\kappa$, $q$}
\State Split samples into two equal parts. Let $A^{(1)}, A^{(2)}$ denote these two datasets.   
\State For each $j \in [m]$, let $A'_j = \tfrac{1}{\sqrt{2}}(\tilde{A}_j - \tilde{A}_{m+j})$ and let $A''_j = \tfrac{1}{\sqrt{2}}(\tilde{A}_{2m+j} - \tilde{A}_{3m+j})$. \label{st:sym}
\State Run \textsc{RobustProjection}$(A', r, \kappa,q)$ to find a $r$-dimensional projection matrix $\widehat{\Pi}$. 
\State Output $\widehat{\Sigma}_r$ to be empirical covariance of $\widehat{\Pi} A''$. 
\EndFunction
\end{algorithmic}
\end{algorithm}
Next, we state our main theorem regarding the estimation error associated with the algorithm above. We state the guarantee for a general $q \geq 2$. Substituting $q = \infty$ recovers the guarantee stated in Theorem~\ref{ithm:comp_upper_gen}.

\begin{theorem}\label{thm:comp_upper_gen}
Given $q \geq 2$, $r$, and $\kappa$, let $\wt{A} \in \mathbb{R}^{n \times m}$ be a $\delta$-perturbation (in $\ell_q$ norm) of data points generated from $\calN(0,\Sigma^*)$. Let $\lambda_1 \ge \lambda_2 \ge \dots \ge \lambda_n$ be the eigenvalues of the covariance matrix $\Sigma^*$ and $\Pi^*$ be the projection matrix on to the top $r$ eigenspace of $\Sigma^*$. There exists a universal constant $C$ such that for any $\eps>0$, and $\kappa \delta \leq \frac{\lambda_r-\lambda_{r+1}}{C   \sqrt{r  \lambda_1}}$, Algorithm~\ref{algo:covariance} when provided with $m \ge C r^2 \kappa^4 \cdot \frac{\lambda_1^2}{(\lambda_r-\lambda_{r+1})^2} \log n \cdot \frac{n^{4/q}}{\eps^2}$ samples, outputs with probability at least $0.99$ $\wt{\Sigma}_{\bg}$ of rank $r$ and the projector onto its subspace $\wt{\Pi}$ that satisfies $\norm{\wt{\Pi}}_{q \to 2} = O(\kappa)$,
$$\norm{\wt{\Pi} - \Pi^*}_F^2 \le O\Big(\frac{\sqrt{ \lambda_{1}  r} \cdot \kappa \delta}{\lambda_r-\lambda_{r+1}} \Big) + \eps \text{ and } \| \wt{\Sigma}_{\bg}-\Sigma_{\bg} \|_F^2 \le O\Big(\lambda_1^2  \|\wt{\Pi}- \Pi^*\|_F^2 + \lambda_1 \kappa^2 \delta^2\Big) .$$ 
\end{theorem}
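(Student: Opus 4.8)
## Proof Proposal for Theorem~\ref{thm:comp_upper_gen}

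\textbf{Plan.} The proof splits into two parts mirroring the two guarantees: (i) bounding $\norm{\widehat\Pi - \Pi^*}_F^2$ via the SDP, and (ii) bounding the covariance estimation error $\norm{\widehat\Sigma_\bg - \Sigma_\bg}_F^2$ given a good subspace estimate. The symmetrization step (line~\ref{st:sym}), taking pairwise differences $A'_j = \tfrac{1}{\sqrt 2}(\tilde A_j - \tilde A_{m+j})$, is there so that the uncorrupted differences are distributed as $\calN(0,\Sigma^*)$ (mean cancels) while the adversarial perturbation of each difference still has $\ell_q$ norm at most $\sqrt 2\,\delta$; so WLOG we work with mean-zero data perturbed by $\sqrt 2\delta$. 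I would first record that $\Pi^*$ is \emph{feasible} for the convex program~\eqref{sdp:spike} up to the claimed constraints: $\tr(\Pi^*)=r$, $0\preceq \Pi^*\preceq I$, $\norm{\Pi^*}_{q^*}\le r\kappa^2$ (from writing $\Pi^*=\sum_i v_iv_i^\top$ with $\norm{v_i}_{q^*}\le \norm{\Pi^*}_{q^*\to?}$, using Fact~\ref{fact:dual_norms}) and $\norm{\Pi^*}_{q\to q^*}=\norm{\Pi^*}_{q\to 2}^2\le\kappa^2$. Hence the optimum $\widehat X$ has objective value no larger than that of $\Pi^*$.

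\textbf{Core of part (i): the error matrix bound.} Let $E = \tfrac1m A' A'^\top - \E[AA^\top] = \tfrac1m A'A'^\top - \Sigma^*$, where $A'$ is the (symmetrized, $\sqrt2\delta$-perturbed) data. Comparing objective values of $\widehat X$ and $\Pi^*$ and rearranging gives
\[
\iprod{\Sigma^*, \Pi^* - \widehat X} \le \iprod{E, \widehat X - \Pi^*} \le 2\sup_{X:\, \tr X\le r,\ 0\preceq X\preceq I,\ \norm{X}_{q\to q^*}\le\kappa^2} \abs{\iprod{E, X}}.
\]
The left side, by the eigenvalue gap $\lambda_r-\lambda_{r+1}$ and a Davis–Kahan / $\sin\Theta$ type argument (standard: $\iprod{\Sigma^*,\Pi^*-X}\ge \tfrac{\lambda_r-\lambda_{r+1}}{2}\norm{\Pi^*-X}_F^2$ for such $X$, then pass from $\widehat X$ to its top-$r$ projector $\widehat\Pi$), controls $\norm{\widehat\Pi-\Pi^*}_F^2$. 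The crux is the right side. I would decompose $E = E_{\text{stat}} + E_{\text{cross}} + E_{\text{pert}}$ where $E_{\text{stat}} = \tfrac1m AA^\top - \Sigma^*$ is the pure sampling error, $E_{\text{pert}}$ collects the pure perturbation-times-perturbation terms, and $E_{\text{cross}}$ the cross terms between clean data and perturbation. For $E_{\text{stat}}$, over the feasible set one gets roughly $\norm{A_j}$-type bounds times matrix-norm fluctuations; using the constraints $\norm{X}_{q^*}\le r\kappa^2$ and $\norm{X}_{q\to q^*}\le\kappa^2$ together with Hölder and Fact~\ref{fact:dual_norms} duality ($\iprod{E,X}\le\norm{E}_{?}\norm{X}_{?}$ in the appropriate dual pair), plus a matrix Bernstein / covering argument over the $n^{O(1)}$-entropy feasible set, this contributes the $\eps$ term once $m\ge \widetilde\Omega(r^2\kappa^4\lambda_1^2 n^{4/q}/((\lambda_r-\lambda_{r+1})^2\eps^2))$. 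For $E_{\text{pert}}$ and $E_{\text{cross}}$, I would bound $\abs{\iprod{E_{\text{cross}}, X}}$ by splitting as $\tfrac1m\sum_j \iprod{A_j z_j^\top + z_j A_j^\top, X}$ and using $\abs{\iprod{A_j z_j^\top, X}} = \abs{z_j^\top X A_j} \le \norm{z_j}_q \norm{X A_j}_{q^*} \le \delta \norm{X}_{q\to q^*} \cdot$ (something), ultimately $\le \delta \kappa \cdot \norm{X}_{q\to 2}\cdot\norm{A_j}_2$-style, and $\abs{\iprod{z_jz_j^\top,X}}\le\norm{z_j}_q^2\norm{X}_{q\to q^*}\le\delta^2\kappa^2$; averaging and using $\E\norm{A_j}$ bounds yields $O(\sqrt{\lambda_1 r}\,\kappa\delta)$ and $O(\kappa^2\delta^2)$ (lower order given $\kappa\delta$ small). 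This is the step I expect to be the \textbf{main obstacle}: getting these cross-term bounds \emph{tight} in both $r$ and $\kappa$ (not $r\kappa^2$ but $\sqrt r\,\kappa$) requires carefully pairing the right operator-norm constraint with the right Hölder dual and exploiting that $X$ is PSD with bounded trace, rather than bounding termwise.

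\textbf{Part (ii): covariance on the estimated subspace.} Given $\widehat\Pi$ with $\norm{\widehat\Pi-\Pi^*}_F^2\le\eps_1$ and $\norm{\widehat\Pi}_{q\to2}=O(\kappa)$, the algorithm outputs the empirical covariance of $\widehat\Pi A''$ on the fresh half. I would write $\widehat\Sigma_\bg - \Sigma_\bg = \widehat\Pi(\tfrac1m A''A''^\top)\widehat\Pi - \Pi^*\Sigma^*\Pi^*$ and telescope: replace $\tfrac1m A''A''^\top$ by $\Sigma^*$ (sampling plus perturbation error, controlled as above, contributing $O(\lambda_1\kappa^2\delta^2)$ from perturbations since $\widehat\Pi$ has bounded $q\to2$ norm, plus vanishing sampling error), then bound $\norm{\widehat\Pi\Sigma^*\widehat\Pi - \Pi^*\Sigma^*\Pi^*}_F$ by $O(\norm{\Sigma^*}\,\norm{\widehat\Pi-\Pi^*}_F) = O(\lambda_1\sqrt{\eps_1})$ using $\norm{\Sigma^*}=\lambda_1$ and submultiplicativity, and then note $\norm{\Pi^*\Sigma^*\Pi^* - \Sigma_\bg}_F$ is either zero (if $\Sigma_\bg$ is exactly the top-$r$ truncation restricted to the subspace) or negligible. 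Squaring gives $\norm{\widehat\Sigma_\bg-\Sigma_\bg}_F^2 \le O(\lambda_1^2\eps_1 + \lambda_1\kappa^2\delta^2)$. Finally I would collect the probability bounds (one failure event per concentration step, union bounded to $0.01$) and verify the constant in the hypothesis $\kappa\delta\le (\lambda_r-\lambda_{r+1})/(C\sqrt{r\lambda_1})$ is exactly what is needed to ensure the Davis–Kahan step applies and the error $\eps_1 = O(\sqrt{\lambda_1 r}\,\kappa\delta/(\lambda_r-\lambda_{r+1}))$ is below a constant.
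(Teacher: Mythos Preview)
Your overall architecture matches the paper: compare the SDP solution $\widehat X$ to the feasible point $\Pi^*$, convert $\iprod{\Sigma^*,\Pi^*-\widehat X}$ into a bound on $\norm{\widehat\Pi-\Pi^*}_F^2$ via the eigengap (your Davis--Kahan style inequality is equivalent to the paper's Claim~\ref{clm:distance_X_proj}), and then do a telescoping argument for $\widehat\Sigma_\bg$ on the fresh half. Part~(ii) is essentially right and is what the paper does in Theorem~\ref{thm:recover_top_covariance}.

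The genuine gap is exactly where you flag the ``main obstacle'': your proposed bound on the cross term is termwise H\"older,
\[
|z_j^\top X A_j| \le \norm{z_j}_q \norm{XA_j}_{q^*} \le \delta\,\norm{X}_{2\to q^*}\,\norm{A_j}_2 \le \delta\kappa\,\norm{A_j}_2,
\]
and averaging gives $\delta\kappa\cdot \tfrac1m\sum_j\norm{A_j}_2 \approx \delta\kappa\sqrt{\tr\Sigma^*}$, not $\delta\kappa\sqrt{r\lambda_1}$. This is off by a factor that can be as bad as $\sqrt{n/r}$ (e.g.\ for $\Sigma^*=\theta\Pi^*+I$). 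You correctly sense that one must ``exploit that $X$ is PSD with bounded trace rather than bounding termwise,'' but you do not say how. The paper's mechanism (Lemma~\ref{lem:mean_with_X_bound}) is to split $X=X^{1/2}X^{1/2}$ and apply Cauchy--Schwarz at the \emph{matrix} level:
\[
\tfrac1m\bigl|\iprod{AB^\top,X}\bigr| = \tfrac1m\bigl|\iprod{AB^\top,X^{1/2}X^{1/2}}\bigr| \le \tfrac1m\,\norm{X^{1/2}A}_F\,\norm{X^{1/2}B}_F.
\]
Then $\norm{X^{1/2}B}_F^2 = \sum_j\norm{X^{1/2}z_j}_2^2 \le m\kappa^2\delta^2$ from $\norm{X^{1/2}}_{q\to 2}^2=\norm{X}_{q\to q^*}\le\kappa^2$, while $\norm{X^{1/2}A}_F^2=\iprod{AA^\top,X}\approx m\iprod{\Sigma^*,X}\le m\lambda_1\tr(X)=mr\lambda_1$. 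Multiplying gives the claimed $\sqrt{r\lambda_1}\,\kappa\delta$. The point is that the trace constraint enters only after you have pushed $X^{1/2}$ onto the \emph{clean} data; any per-sample bound on $\norm{A_j}$ loses this.

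Two smaller points. For the pure statistical term $E_{\text{stat}}$ the paper does not use a covering argument; it uses the entrywise constraint $\norm{X}_{q^*}\le r\kappa^2$ directly via H\"older, $\bigl|\iprod{E_{\text{stat}},X}\bigr|\le\norm{E_{\text{stat}}}_q\norm{X}_{q^*}$, together with the coordinatewise concentration $\norm{\tfrac1mAA^\top-\Sigma^*}_q=O(\lambda_1 n^{2/q}\sqrt{\log n/m})$. This is where the $r\kappa^2$ (hence $\kappa^4$ in $m$) and the $n^{2/q}$ factors come from. And for the conclusion $\norm{\widehat\Pi}_{q\to 2}=O(\kappa)$, you need one more ingredient you do not mention: from $\sum_{i\le r}\lambda_i(\widehat X)\ge r-2\Delta/\theta$ you get $\lambda_r(\widehat X)\ge 1-2\Delta/\theta$, hence $\lambda_r(\widehat X)\,\widehat\Pi\preceq \widehat X$, and then the \emph{monotonicity} of the $q\to q^*$ norm on PSD matrices (Lemma~\ref{lem:monotone}) transfers the bound $\norm{\widehat X}_{q\to q^*}\le c(q)\kappa^2$ to $\widehat\Pi$.
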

Before the proof of Theorem~\ref{thm:comp_upper_gen}, We describe the key ideas and supporting claims that are used in our analysis.
The proof consists of three main steps. We first argue about the error of the estimated projection matrix $\wt{\Pi}$ with respect to $\Pi^*$. One can show that the optimal solution to the convex program~\eqref{sdp:spike} (that we will refer to as the SDP) on the ideal instance $\E[AA^\top]$ in fact recovers the projection $\Pi^*$. However the SDP is solved on the given instance $\E[AA^\top]+E$ where $E$ is the error matrix defined as $E := \frac{1}{m} \tilde{A}\tilde{A}^\top - \E[AA^\top]$ involving both the adversarial perturbations and sampling errors. The first part of the argument for the robustness of the SDP to adversarial perturbations is by providing an upper bound on $|\iprod{E,X}|$ over all feasible SDP solutions $X$.  Lemma~\ref{lem:sdp_recovery_gen} that is stated below  crucially uses the constraints on $\norm{X}_{q \to q^*}$ and $\norm{X}_{q^*}$ to provide the required bound.

\begin{lemma} \label{lem:sdp_recovery_gen}
Let $\tilde{A}$ be a $\delta$-perturbation (in $\ell_q$ norm) of the original data matrix $A$ where $\E[A A^\top]=\Sigma^*$. Let $
E:= \frac{1}{m} \tilde{A}\tilde{A}^\top - \E[AA^\top]$ denote the error matrix and define \[
\mathcal{P}_{c(q)} = \{X \in \R^{n \times n}: \tr(X)=r,
0 \preceq X \preceq I,
\norm{X}_{q^*} \le r \kappa^{2},
\norm{X}_{q \to q^*} \le c(q) \cdot \kappa^2\}\] 
as the set of all solutions that can be obtained by solving the SDP in \eqref{sdp:spike} via the Ellipsoid Algorithm~(see Lemma~\ref{lem:sdpsolving:spike}). 
With high probability, 
$\Delta:= \sup_{X \in \mathcal{P}_{c(q)}} |\iprod{E,X}|$ satisfies 
\[
\Delta \le  O\Big( \sqrt{r\cdot \lambda_{\max}(\Sigma^*)} \kappa \delta + \kappa^2 \delta^2+ \frac{r \kappa^2 \cdot \lambda_{\max}(\Sigma^*) \sqrt{\log n} \cdot n^{2/q}}{\sqrt{m}}\Big).
\]
\end{lemma}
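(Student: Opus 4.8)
## Proof Plan for Lemma~\ref{lem:sdp_recovery_gen}

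\textbf{Setup and decomposition.} The plan is to write $\tilde A = A + Z$, where $Z \in \R^{n \times m}$ is the adversarial perturbation matrix whose columns satisfy $\norm{Z_j}_q \le \delta$. Expanding $\tfrac1m \tilde A \tilde A^\top = \tfrac1m A A^\top + \tfrac1m (A Z^\top + Z A^\top) + \tfrac1m Z Z^\top$, and subtracting $\E[AA^\top] = \Sigma^*$, the error matrix splits as
\[
E = \Big(\tfrac1m A A^\top - \Sigma^*\Big) + \tfrac1m\big(A Z^\top + Z A^\top\big) + \tfrac1m Z Z^\top =: E_{\mathrm{samp}} + E_{\mathrm{cross}} + E_{\mathrm{quad}}.
\]
Then $\Delta \le \sup_X |\iprod{E_{\mathrm{samp}},X}| + \sup_X |\iprod{E_{\mathrm{cross}},X}| + \sup_X |\iprod{E_{\mathrm{quad}},X}|$ over $X \in \mathcal P_{c(q)}$, and I will bound each of the three terms separately, aiming to match them to the three summands of the claimed bound: the sampling term contributes the $r\kappa^2 \lambda_{\max}\sqrt{\log n}\, n^{2/q}/\sqrt m$ piece, the quadratic term the $\kappa^2\delta^2$ piece, and the cross term the dominant $\sqrt{r\lambda_{\max}}\,\kappa\delta$ piece.

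\textbf{The quadratic and sampling terms.} For $E_{\mathrm{quad}} = \tfrac1m Z Z^\top$, I will use $|\iprod{ZZ^\top, X}| \le \norm{X}_{q \to q^*} \cdot \norm{Z^\top}_{q \to 2}^2$ via Fact~\ref{fact:dual_norms} (recall $\norm{Z Z^\top}_{q\to q^*}$ is dual-paired with $\norm{X}$ under the $q,q^*$ pairing, or more directly $\iprod{ZZ^\top,X} = \sum_j Z_j^\top X Z_j \le \norm{X}_{q\to q^*}\sum_j\norm{Z_j}_q^2 \le \kappa^2 \cdot c(q)\cdot m\delta^2$), giving $\tfrac1m$ times this, i.e.\ $O(\kappa^2\delta^2)$. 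For $E_{\mathrm{samp}}$, I will invoke a standard concentration bound: for each feasible $X$, $\iprod{\tfrac1m AA^\top - \Sigma^*, X}$ concentrates, and a net/union bound over the constraint set $\mathcal P_{c(q)}$ (using $\tr(X)\le r$, $0\preceq X \preceq I$, and the entrywise $\norm{X}_{q^*}\le r\kappa^2$ bound to control the relevant covering number, introducing the $n^{2/q}$ and $\sqrt{\log n}$ factors) yields the uniform bound $O\big(r\kappa^2\lambda_{\max}\sqrt{\log n}\,n^{2/q}/\sqrt m\big)$. Here I expect to appeal to a matrix-concentration or Gaussian-chaos estimate already available in the ambient references; the $r\kappa^2$ scaling comes from $\iprod{\Sigma^*,X}\le r\lambda_{\max}$ combined with $\norm X$-type constraints.

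\textbf{The cross term — the main obstacle.} The heart of the proof is bounding $\sup_X |\iprod{A Z^\top + Z A^\top, X}| = 2\sup_X|\iprod{Z A^\top, X}| = 2\sup_X \big|\sum_j Z_j^\top X A_j\big|$. A naive bound $\sum_j \norm{Z_j}_q \norm{X A_j}_{q^*} \le \delta \sum_j \norm{X A_j}_{q^*}$ and then Cauchy--Schwarz over $j$ loses factors of $r$ and $\kappa$, so instead I will write $\iprod{ZA^\top, X} = \iprod{Z, X A}$ (trace inner product) and bound it as $\norm{X}_{q\to q^*}^{1/2}$-weighted quantities: specifically, since $X \preceq I$ and $X$ has rank effectively $r$ (via $\tr X \le r$, $X\preceq I$), I will factor $X = X^{1/2}X^{1/2}$ and write $\sum_j (X^{1/2}Z_j)^\top(X^{1/2}A_j) \le \big(\sum_j \norm{X^{1/2}Z_j}_2^2\big)^{1/2}\big(\sum_j\norm{X^{1/2}A_j}_2^2\big)^{1/2}$. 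The first factor is $\big(\sum_j Z_j^\top X Z_j\big)^{1/2} \le \big(\norm{X}_{q\to q^*}\sum_j\norm{Z_j}_q^2\big)^{1/2}\le \kappa\sqrt{m}\,\delta$ up to the $c(q)$ constant; the second factor is $\big(\sum_j A_j^\top X A_j\big)^{1/2} = \big(m\iprod{\tfrac1m AA^\top, X}\big)^{1/2}$, and using $\tfrac1m AA^\top \approx \Sigma^*$ (controlled by the same sampling concentration) together with $\iprod{\Sigma^*,X}\le \sum_{i\le r}\lambda_i \le r\lambda_{\max}$ gives $\sqrt{m}\cdot\sqrt{r\lambda_{\max}}$ up to lower-order terms. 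Multiplying and dividing by $m$ yields the dominant term $O(\sqrt{r\lambda_{\max}}\,\kappa\delta)$. The delicate point I will need to handle carefully is that replacing $\tfrac1m AA^\top$ by $\Sigma^*$ inside $\iprod{\cdot,X}$ must be done uniformly over $X$, which folds back into the sampling-error analysis and is where the constraint $\norm{X}_{q\to q^*}\le c(q)\kappa^2$ (as opposed to a weaker trace constraint) is essential to avoid an extra factor of $\kappa$. Collecting the three bounds gives the stated estimate.
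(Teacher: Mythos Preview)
Your proposal is correct and follows essentially the same three-term decomposition and the same Cauchy--Schwarz factorization $X = X^{1/2}X^{1/2}$ for the cross term as the paper's proof. The quadratic and cross terms match the paper's argument almost verbatim.

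The one genuine difference is your treatment of the sampling term $E_{\mathrm{samp}} = \tfrac1m AA^\top - \Sigma^*$. You propose a net/covering argument over $\mathcal P_{c(q)}$ to get uniformity in $X$. The paper instead applies H\"older's inequality directly:
\[
\big|\iprod{E_{\mathrm{samp}},X}\big| \le \norm{E_{\mathrm{samp}}}_q \cdot \norm{X}_{q^*} \le O\Big(\tfrac{\lambda_{\max}\, n^{2/q}\sqrt{\log n}}{\sqrt m}\Big)\cdot r\kappa^2,
\]
where $\norm{\tfrac1m AA^\top - \Sigma^*}_q$ is a single scalar random variable, bounded entrywise (Lemma~\ref{lem:covariance_concentration}) independently of $X$. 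This buys uniformity over $\mathcal P_{c(q)}$ for free, with no covering needed, and is why the entrywise constraint $\norm{X}_{q^*}\le r\kappa^2$ appears in the SDP at all. Your covering approach could presumably be pushed through, but it is unnecessarily heavy here.

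Relatedly, your closing remark about which constraint is ``essential to avoid an extra factor of $\kappa$'' in the cross term is slightly misattributed: when you replace $\tfrac1m AA^\top$ by $\Sigma^*$ inside $\iprod{\cdot,X}$, the deviation is controlled exactly by the sampling-term bound above, which uses the \emph{entrywise} $\norm{X}_{q^*}$ constraint; the $\norm{X}_{q\to q^*}$ constraint is what controls the $\norm{X^{1/2}Z}_F$ factor (and the quadratic term). Both constraints are doing work, just in different places.
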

A key technical lemma that helps to establish the above bound is stated below.
\begin{lemma}
\label{lem:mean_with_X_bound}
Let $A_1, A_2, \dots, A_m \in \mathbb{R}^n$ be generated i.i.d. from $\mathcal{N}(\mu, \Sigma^*)$. Let $A$ be the $n \times m$ matrix with the columns being the points $A_i$. Let $X$ be a solution to the SDP in program~\eqref{sdp:spike} and let $B$ be any matrix, potentially chosen based on $A$, with $\|B_j\|_q \leq \delta ~ \forall j \in [m]$. Then with probability at least $1-\frac{1}{\poly(n)}$ we have that
\begin{align}
\frac 1 m \Big|\iprod{(A-\E[A])B^T, X} \Big| &\leq O(\sqrt{r \|\Sigma^*\|} \kappa \delta) + O(\kappa^2 \delta^2) + O\Big(\frac{r\kappa^2\|\Sigma^*\| \sqrt{\log n} \cdot n^{2/q}}{\sqrt{m}}\Big).
\end{align}
\end{lemma}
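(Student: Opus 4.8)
The plan is to write $\iprod{(A-\E[A])B^\top, X} = \sum_{j=1}^m \iprod{(A_j - \mu) B_j^\top, X} = \sum_{j=1}^m (A_j-\mu)^\top X B_j$, and then control this sum via a decomposition that separates the contribution of the ``easy'' randomness (the part of $A_j-\mu$ living inside the good subspace captured by $X$) from the tails. Since $A_j - \mu \sim \calN(0,\Sigma^*)$ and $B_j$ may depend adversarially on all of $A$, I cannot treat the summands as independent mean-zero terms; instead I would first condition on $A$ and bound $\sup_{\|B_j\|_q\le\delta}$ deterministically using duality, then bound the resulting data-dependent quantity with high probability over $A$. Concretely, $\bigl|\sum_j (A_j-\mu)^\top X B_j\bigr| \le \sum_j \|X(A_j-\mu)\|_{q^*}\,\|B_j\|_q \le \delta \sum_j \|X(A_j-\mu)\|_{q^*}$. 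Now I would split $X(A_j-\mu)$ using $0 \preceq X \preceq I$: write $\|X(A_j-\mu)\|_{q^*}$, pass to $\ell_2$ via $\|X(A_j-\mu)\|_{q^*} \le n^{1/q^*} \|X(A_j-\mu)\|_2$ in the worst case, but that is too lossy — the point of the constraints $\|X\|_{q\to q^*}\le\kappa^2$ and $\|X\|_{q^*}\le r\kappa^2$ is to do better.

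The sharper route: decompose each summand as $(A_j-\mu)^\top X B_j$ and use $X = X^{1/2} X^{1/2}$ (valid since $X \succeq 0$). Then $(A_j-\mu)^\top X B_j = \langle X^{1/2}(A_j-\mu), X^{1/2} B_j\rangle$, so by Cauchy–Schwarz $\bigl|\sum_j\bigr| \le \bigl(\sum_j \|X^{1/2}(A_j-\mu)\|_2^2\bigr)^{1/2}\bigl(\sum_j \|X^{1/2}B_j\|_2^2\bigr)^{1/2} = \bigl(\sum_j (A_j-\mu)^\top X (A_j-\mu)\bigr)^{1/2}\bigl(\sum_j B_j^\top X B_j\bigr)^{1/2}$. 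For the first factor, $\E \sum_j (A_j-\mu)^\top X(A_j-\mu) = m\,\tr(X\Sigma^*)$; since $X\preceq I$ and $X$ is (essentially) rank-$r$-ish with $\tr X = r$, one gets $\tr(X\Sigma^*)\le r\lambda_1 = r\|\Sigma^*\|$, and a concentration argument (Hanson–Wright / Bernstein for quadratic forms of Gaussians, uniform over the bounded convex feasible set via a net or via the fact that the quadratic form is $1$-Lipschitz in the appropriate sense) gives $\sum_j (A_j-\mu)^\top X(A_j-\mu) \le m r\|\Sigma^*\|(1 + O(\sqrt{\log n / m}\cdot(\text{dim factor})))$ with high probability. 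For the second factor I need $\sum_j B_j^\top X B_j$; here $\|B_j\|_q\le\delta$ and $B_j^\top X B_j \le \|X B_j\|_{q^*}\|B_j\|_q \le \|X\|_{q\to q^*}\|B_j\|_q^2 \le \kappa^2\delta^2$, so $\sum_j B_j^\top X B_j \le m\kappa^2\delta^2$. Multiplying the two factors gives $\lesssim \sqrt{m r\|\Sigma^*\|}\cdot\sqrt{m\kappa^2\delta^2} = m\sqrt{r\|\Sigma^*\|}\,\kappa\delta$, and after dividing by $m$ this is the leading $O(\sqrt{r\|\Sigma^*\|}\kappa\delta)$ term. The other two terms ($\kappa^2\delta^2$ and the $\sqrt{\log n}\,n^{2/q}/\sqrt m$ term) come from the deviation of the first factor around its mean: splitting $\sum_j (A_j-\mu)^\top X(A_j-\mu) = m\tr(X\Sigma^*) + (\text{deviation } D)$, the cross term contributes $\sqrt{m\tr(X\Sigma^*)}\cdot\sqrt{|D|}$-type corrections, and $|D|$ is controlled by Gaussian quadratic-form concentration where the $n^{2/q}$ enters through converting the $\|X\|_{q^*}\le r\kappa^2$ constraint into a bound on $\|X\|_F$ and $\|X\|$ (namely $\|X\|_F \le \|X\|_{q^*}^{?}$ interpolated, giving the $n^{2/q}$ factor).

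The main obstacle I anticipate is the \emph{uniformity over $X$}: $X$ is itself a function of the (corrupted) data, so I cannot fix it before taking the high-probability bound; I need the quadratic-form concentration $\sum_j (A_j-\mu)^\top X(A_j-\mu) \approx m\tr(X\Sigma^*)$ to hold \emph{simultaneously} for all $X$ in the feasible polytope $\mathcal{P}_{c(q)}$. I would handle this either by (i) a covering/$\eps$-net argument over the convex body $\{0\preceq X\preceq I, \tr X = r, \|X\|_{q^*}\le r\kappa^2\}$ combined with a Lipschitz bound on $X\mapsto \sum_j (A_j-\mu)^\top X(A_j-\mu)$ in an appropriate norm (its gradient is $\sum_j (A_j-\mu)(A_j-\mu)^\top$, whose relevant dual norm is itself what we're bounding — so one bootstraps), or (ii) more cleanly, bounding $\sup_X |\sum_j (A_j-\mu)^\top X(A_j-\mu) - m\tr(X\Sigma^*)| = \sup_X |\langle \sum_j(A_j-\mu)(A_j-\mu)^\top - m\Sigma^*, X\rangle|$ by the dual norm of the empirical-minus-population covariance matrix paired against the feasible set, i.e.\ controlling $\|\frac1m\sum_j(A_j-\mu)(A_j-\mu)^\top - \Sigma^*\|_{(\text{dual of feasible-set gauge})}$, which is a standard (if delicate) matrix-concentration statement where the constraints $\|X\|_{q^*}\le r\kappa^2$, $\|X\|_{q\to q^*}\le\kappa^2$ are exactly what make the dual norm small enough to produce $r\kappa^2\|\Sigma^*\|\sqrt{\log n}\,n^{2/q}/\sqrt m$. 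I expect the bookkeeping of which constraint yields which factor ($r$, $\kappa^2$, $n^{2/q}$) in this matrix-deviation bound to be the most technical part, and I would isolate it as a sub-claim relying on Bernstein's inequality for the sum of the mean-zero matrices $(A_j-\mu)(A_j-\mu)^\top - \Sigma^*$ tested against the bounded family.
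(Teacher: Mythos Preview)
Your proposal is correct and matches the paper's proof almost exactly: the key step is the Cauchy--Schwarz split via $X=X^{1/2}X^{1/2}$, bounding $\sum_j B_j^\top X B_j \le m\kappa^2\delta^2$ from $\|X\|_{q\to q^*}\le\kappa^2$, and bounding $\sum_j (A_j-\mu)^\top X(A_j-\mu)=\langle \text{sample cov},X\rangle$ by population plus deviation. The one place you over-worry is uniformity over $X$: the paper handles your option~(ii) in one line via H\"older, pairing the entrywise concentration $\|\tfrac1m AA^\top-\Sigma^*\|_q\lesssim \|\Sigma^*\| n^{2/q}\sqrt{\log n/m}$ (Lemma~\ref{lem:covariance_concentration}, which holds for the fixed random matrix and needs no net or Bernstein) against the deterministic constraint $\|X\|_{q^*}\le r\kappa^2$, immediately yielding the $r\kappa^2\|\Sigma^*\| n^{2/q}\sqrt{\log n}/\sqrt m$ factor; the final $\kappa^2\delta^2$ term then drops out of AM--GM on $\kappa\delta\cdot\sqrt{\text{deviation}}$.
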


We defer the proof of Lemma~\ref{lem:sdp_recovery_gen} to Section~\ref{sec:proof_sdp_rec_gen}. The second step of the proof 
lower bounds the correlation of the SDP solution to $\Pi^*$ in terms of the value obtained by the SDP solution on the ideal instance $\Sigma^*=\E[AA^\top]$. 
This is established in following claim whose proof is deferred to Section~\ref{sec:distance_X_proj}.
\begin{claim}\label{clm:distance_X_proj}
Given a PSD matrix $\Sigma^*$, let $\Pi^*$ be the projection matrix on to the top $r$ eigenspace of $\Sigma^*$. For any matrix $X$ with $tr(X)=r$ and $0 \preceq X \preceq I$, it holds that 
\[
\langle X, \Pi^* \rangle \ge r - \frac{\langle \Pi^*, \E[AA^\top] \rangle - \langle X,\E[AA^\top] \rangle}{\lambda_{r}-\lambda_{r+1}} = r - \frac{\langle \Pi^*,\Sigma^* \rangle - \langle X,\Sigma^* \rangle}{\lambda_{r}-\lambda_{r+1}}.
\]
where $\lambda_{r}$ and $\lambda_{r+1}$ denote the $r$th and the $(r+1)$th largest eigenvalues of $\Sigma^*$ respectively.
\end{claim}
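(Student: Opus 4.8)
\textbf{Proof proposal for Claim~\ref{clm:distance_X_proj}.}

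The plan is to exploit the fact that $\Pi^*$ is the optimal rank-$r$ ``fractional'' projector for $\Sigma^*$ in the Ky~Fan sense, so that moving from $\Pi^*$ to any other feasible $X$ (with $\tr(X)=r$, $0 \preceq X \preceq I$) can only decrease the quadratic form $\langle \cdot, \Sigma^* \rangle$, and to quantify exactly how much ``mass'' of $X$ must have leaked outside the top-$r$ eigenspace in order to account for a given drop in objective value. Concretely, write $\Sigma^* = \sum_{i=1}^n \lambda_i u_i u_i^\top$ in its eigenbasis and set $x_i := u_i^\top X u_i \in [0,1]$, so that $\sum_i x_i = \tr(X) = r$ and $\langle X, \Sigma^* \rangle = \sum_i \lambda_i x_i$. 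Since the first equality in the claim is immediate from $\E[AA^\top] = \Sigma^*$, it suffices to prove the inequality with $\Sigma^*$ throughout.

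The key step is the following one-line bookkeeping argument. We have $\langle \Pi^*, \Sigma^* \rangle = \sum_{i \le r} \lambda_i$ and $\langle X, \Pi^* \rangle = \sum_{i \le r} x_i$. Let $t := r - \langle X, \Pi^* \rangle = \sum_{i \le r}(1 - x_i) = \sum_{i > r} x_i$, where the last equality uses $\sum_i x_i = r$; note $t \ge 0$ since each $x_i \le 1$. Then
\begin{align*}
\langle \Pi^*, \Sigma^* \rangle - \langle X, \Sigma^* \rangle
&= \sum_{i \le r} \lambda_i (1 - x_i) - \sum_{i > r} \lambda_i x_i
\ge \lambda_r \sum_{i \le r}(1 - x_i) - \lambda_{r+1} \sum_{i > r} x_i = (\lambda_r - \lambda_{r+1})\, t,
\end{align*}
using $\lambda_i \ge \lambda_r$ for $i \le r$ and $\lambda_i \le \lambda_{r+1}$ for $i > r$, together with $1 - x_i \ge 0$ and $x_i \ge 0$. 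Rearranging gives $t \le \big(\langle \Pi^*, \Sigma^* \rangle - \langle X, \Sigma^* \rangle\big)/(\lambda_r - \lambda_{r+1})$, which is exactly $\langle X, \Pi^* \rangle \ge r - \big(\langle \Pi^*, \Sigma^* \rangle - \langle X, \Sigma^* \rangle\big)/(\lambda_r - \lambda_{r+1})$, as desired.

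There is essentially no serious obstacle here; the only mild subtlety is the degenerate case $\lambda_r = \lambda_{r+1}$, where the right-hand side is vacuous (or $+\infty$) and the statement holds trivially — I would simply note that the bound is only meaningful when $\lambda_r > \lambda_{r+1}$, which is the regime the theorem assumes via the spectral-gap hypothesis. I would also double-check the sign conventions: $\langle \Pi^*, \Sigma^*\rangle \ge \langle X, \Sigma^*\rangle$ always holds (by Ky~Fan, $\Pi^*$ maximizes $\langle \cdot, \Sigma^*\rangle$ over the feasible set), so the numerator is nonnegative and the displayed inequality is a genuine lower bound on $\langle X, \Pi^*\rangle$ that degrades gracefully as the SDP objective degrades. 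The whole argument is just the standard ``eigenvalue interlacing via the gap'' trick, so I would present it in the three or four lines above without further elaboration.
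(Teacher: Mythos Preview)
Your proof is correct and is essentially the same argument as the paper's: both diagonalize in the eigenbasis of $\Sigma^*$, set $t=\eps=r-\langle X,\Pi^*\rangle=\sum_{i\le r}(1-x_i)=\sum_{i>r}x_i$, and bound $\langle \Pi^*,\Sigma^*\rangle-\langle X,\Sigma^*\rangle=\sum_{i\le r}\lambda_i(1-x_i)-\sum_{i>r}\lambda_i x_i\ge(\lambda_r-\lambda_{r+1})t$ using $\lambda_i\ge\lambda_r$ for $i\le r$ and $\lambda_i\le\lambda_{r+1}$ for $i>r$. Your write-up is in fact a bit cleaner than the paper's, which phrases the same two inequalities as separate upper bounds on $\langle X,\Sigma_{\bg}\rangle$ and $\langle X,\Sigma_{\sm}\rangle$ after an explicit change-of-basis step.
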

The above claim helps us argue that by truncating $X$ to its top-$r$ subspace we get a good approximation to $\Pi^*$. Finally, in the theorem below we show how to recover the top-$r$ principal component $\Sigma^*$ given $\wt{\Pi}$ that is a good estimate of $\Pi^*$.
\begin{theorem}\label{thm:recover_top_covariance}
Let $A_1,\ldots,A_m$ be data points drawn independently from $\calN(0,\Sigma^*)$ where the covariance matrix $\Sigma^*=\sum_{i=1}^n \lambda_i v_i v_i^{\top}$ with $\lambda_1 \ge \lambda_2 \ge \cdots \ge \lambda_n$. Let $\Sigma_{\bg} = \sum_{i=1}^r \lambda_i v_i v_i^{\top}$ and $\Pi^*$ denote the projection matrix on to the eigenspace of $\Sigma_{\bg}$. Furthermore, let $\Pi$ be a rank $r$ projection matrix with $\|\Pi-\Pi^*\|_F^2 \le \eps$. Then given a delta perturbation $\wt{A}_1,\ldots,\wt{A}_m$, with probability at least $0.99$ (over $A_1,\ldots,A_m$), the matrix $\wt{\Sigma}_{\bg}=\Pi \frac{1}{m}(\sum_{i=1}^m \wt{A}_i \wt{A}_i^{\top}) \Pi$ satisfies 
\[
\|\wt{\Sigma}_{\bg}-\Sigma_{\bg}\|_F^2 = O(\lambda_1^2 \eps +  \frac{\lambda_1^2 r^2}{m} + \kappa^4 \delta^4 + \lambda_1 \cdot \kappa^2 \delta^2) \text{ when } m = \Omega( \lambda_1^2 r^2).
\]
\end{theorem}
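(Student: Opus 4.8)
The plan is to write the estimation error $\wt\Sigma_\bg-\Sigma_\bg$ as a sum of four matrices --- a bilinear ``cross'' term between the clean data and the perturbations, a purely adversarial term, a sampling-error term, and a subspace-approximation term --- and to bound the Frobenius norm of each separately. Throughout I will work in the regime in which Algorithm~\ref{algo:covariance} invokes this statement: $\Pi$ is a rank-$r$ projector with $\norm{\Pi}_{q\to 2}=O(\kappa)$ that is \emph{independent} of the samples $A_1,\dots,A_m$ (in the algorithm $\wt\Pi$ is computed from a disjoint batch of samples). Write $\wt A_j=A_j+B_j$ with $\norm{B_j}_q\le\delta$ (the $B_j$ may depend on the $A_j$'s), and let $A,B\in\R^{n\times m}$ have these vectors as columns. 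Since $\Sigma_\bg=\Pi^*\Sigma^*\Pi^*$ and $\tfrac1m\wt A\wt A^\top=\tfrac1m AA^\top+\tfrac1m(AB^\top+BA^\top)+\tfrac1m BB^\top$,
\begin{align*}
\wt\Sigma_\bg-\Sigma_\bg &= \underbrace{\tfrac1m\Pi(AB^\top+BA^\top)\Pi}_{T_1} + \underbrace{\tfrac1m\Pi BB^\top\Pi}_{T_2}\\
&\quad + \underbrace{\Pi\big(\tfrac1m AA^\top-\Sigma^*\big)\Pi}_{T_3} + \underbrace{\Pi\Sigma^*\Pi-\Pi^*\Sigma^*\Pi^*}_{T_4}.
\end{align*}
It then suffices to show (each with failure probability a small constant) $\norm{T_1}_F=O(\sqrt{\lambda_1}\,\kappa\delta)$, $\norm{T_2}_F=O(\kappa^2\delta^2)$, $\norm{T_3}_F=O(\lambda_1 r/\sqrt m)$, and $\norm{T_4}_F=O(\lambda_1\sqrt\eps)$, since squaring $\norm{\wt\Sigma_\bg-\Sigma_\bg}_F\le\sum_i\norm{T_i}_F$ gives the claimed $O(\lambda_1^2\eps+\lambda_1^2r^2/m+\kappa^4\delta^4+\lambda_1\kappa^2\delta^2)$.

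The two perturbation terms I will control deterministically (given a bound on $\norm{\Pi A}$) using submultiplicativity $\norm{MN}_F\le\norm{M}\,\norm{N}_F$, so the adaptivity of $B$ is harmless. First, $\norm{\Pi B}_F^2=\sum_j\norm{\Pi B_j}_2^2\le\sum_j\norm{\Pi}_{q\to 2}^2\norm{B_j}_q^2= O(m\kappa^2\delta^2)$, hence $\norm{\Pi B}_F=O(\sqrt m\,\kappa\delta)$; note this uses only $\norm{\Pi}_{q\to2}=O(\kappa)$, so there is no $n^{2/q}$ loss for general $q$ here (unlike in the SDP analysis). Thus $\norm{T_2}_F=\tfrac1m\norm{(\Pi B)(\Pi B)^\top}_F\le\tfrac1m\norm{\Pi B}_F^2=O(\kappa^2\delta^2)$, and $\norm{T_1}_F\le\tfrac2m\norm{\Pi A}\cdot\norm{\Pi B}_F$. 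For $\norm{\Pi A}$, let $U\in\R^{n\times r}$ be an orthonormal basis of the range of $\Pi$; then $\norm{\Pi A}^2=\norm{(U^\top A)(U^\top A)^\top}=m\,\norm{\tfrac1m(U^\top A)(U^\top A)^\top}$, and since $U$ is independent of $A$ the columns $U^\top A_j$ are i.i.d.\ $\calN(0,U^\top\Sigma^*U)$ in $\R^r$ with $U^\top\Sigma^*U\preceq\lambda_1 I_r$; standard Gaussian covariance concentration (valid once $m=\Omega(r)$, which $m=\Omega(\lambda_1^2r^2)$ certainly implies) gives $\norm{\tfrac1m(U^\top A)(U^\top A)^\top}=O(\lambda_1)$ with probability $\ge0.999$. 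Hence $\norm{\Pi A}=O(\sqrt{m\lambda_1})$ and $\norm{T_1}_F=O(\sqrt{\lambda_1}\,\kappa\delta)$.

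For the sampling term $T_3$ the point is to keep it free of the ambient dimension $n$ by using that $\Pi$ is a data-independent rank-$r$ projector: with $U$ as above, $T_3=U\big(\tfrac1m(U^\top A)(U^\top A)^\top-U^\top\Sigma^*U\big)U^\top$, so $\norm{T_3}_F=\norm{\tfrac1m(U^\top A)(U^\top A)^\top-U^\top\Sigma^*U}_F\le\sqrt r\,\norm{\tfrac1m(U^\top A)(U^\top A)^\top-U^\top\Sigma^*U}$, and the $r$-dimensional Gaussian covariance deviation is $O\big(\lambda_1(\sqrt{r/m}+r/m)\big)=O(\lambda_1\sqrt{r/m})$ with probability $\ge0.999$ when $m=\Omega(r)$, giving $\norm{T_3}_F=O(\lambda_1 r/\sqrt m)$. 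The approximation term is immediate: $\Pi\Sigma^*\Pi-\Pi^*\Sigma^*\Pi^*=(\Pi-\Pi^*)\Sigma^*\Pi+\Pi^*\Sigma^*(\Pi-\Pi^*)$, so using $\norm{MN}_F\le\norm{M}\,\norm{N}_F$ (and its transpose), $\norm{\Pi},\norm{\Pi^*}\le1$, $\norm{\Sigma^*}=\lambda_1$, and $\norm{\Pi-\Pi^*}_F\le\sqrt\eps$ we get $\norm{T_4}_F\le 2\lambda_1\sqrt\eps$. A union bound over the two constant-probability concentration events then finishes the proof.

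The step I expect to be the main obstacle is getting the powers of $r$ right in $T_1$ (and, relatedly, the $n$-freeness of $T_3$): one must apply $\norm{MN}_F\le\norm{M}\,\norm{N}_F$ rather than $\norm{M}_F\norm{N}_F$ or the cruder $\sqrt r\,\norm{M}\,\norm{N}$, since the naive choices inflate the perturbation contribution to $O(r\lambda_1\kappa^2\delta^2)$ and would not match the stated bound. Both $\norm{\Pi A}=O(\sqrt{m\lambda_1})$ and the dimension-free concentration in $T_3$ rely on replacing $\Pi$ by its $r\times r$ compression $U^\top(\cdot)U$ and on $U$ being independent of the samples --- which is exactly why \textsc{AdvRobustPCA} splits its data before calling \textsc{RobustProjection}.
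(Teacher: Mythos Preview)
Your proof is correct and follows essentially the same approach as the paper: both expand $\wt A\wt A^\top=AA^\top+AB^\top+BA^\top+BB^\top$, bound the pure-perturbation piece by $O(\kappa^2\delta^2)$ via $\norm{\Pi B_j}_2\le O(\kappa\delta)$, the cross piece by $\tfrac{1}{m}\norm{\Pi A}\cdot\norm{\Pi B}_F=O(\sqrt{\lambda_1}\,\kappa\delta)$ using rank-$r$ covariance concentration for $\norm{\Pi A}$, and the clean-data piece by $O(\lambda_1\sqrt\eps+\lambda_1 r/\sqrt m)$ via telescoping $\Pi\Sigma^*\Pi-\Pi^*\Sigma^*\Pi^*$ plus $r$-dimensional Gaussian covariance concentration. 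The only cosmetic difference is that the paper packages your $T_3+T_4$ into a separate lemma (Lemma~\ref{lem:recover_sigma}, built on Fact~\ref{fact:recover_sigma}) and splits $\Sigma^*=\Sigma_\bg+\Sigma_\sm$ there, whereas you telescope $\Pi\Sigma^*\Pi-\Pi^*\Sigma^*\Pi^*$ directly; your route is marginally cleaner but equivalent.
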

We end the section with the proof of our main theorem (Theorem~\ref{thm:comp_upper_gen}) using the supporting claims discussed. We defer all other proofs to Appendix~\ref{sec:comp-upper-app} and Appendix~\ref{sec:comp_upper_app}.

\begin{proofof}{Theorem~\ref{thm:comp_upper_gen}}
Recall that we define $E = \frac{1}{m} \wt{A}\wt{A}^\top - \E[AA^\top]$. Let $X$ be the solution to the SDP in \eqref{sdp:spike}. From the optimality of $X$ we have that 
\[\langle X, \Sigma^* + E \rangle \ge \langle \Pi^*, \Sigma^* + E \rangle.    
\]
We bound $\langle X, E \rangle$ and $\langle \Pi^*,E \rangle$ by $\Delta:= O\Big( \sqrt{r} \kappa \delta \sqrt{\lambda_{1}} + \kappa^2 \delta^2+ \frac{r \kappa^2 \cdot \lambda_{1} \sqrt{\log n} \cdot n^{2/q}}{\sqrt{m}}\Big)$ using Lemma~\ref{lem:sdp_recovery_gen}. Hence we get that
$
\langle X, \Sigma^* \rangle \ge \langle \Pi^*, \Sigma^*  \rangle - 2\Delta.    
$
Then we apply Claim~\ref{clm:distance_X_proj} to obtain
\begin{equation}\label{eq:sdp_guarantee}
\langle X, \Pi^* \rangle \ge r - 2\Delta/(\lambda_r - \lambda_{r+1}) = r - 2\Delta/\theta,
\end{equation}
where $\theta :=\lambda_r - \lambda_{r+1}$. 
Let $X = \sum_{i=1}^n \lambda_i(X) u_i u_i^{\top}$ be the eigendecomposition of $X$ with $\lambda_1(X) \ge \lambda_2(X) \ge \cdots \ge \lambda_n(X)$ and let $\wt{\Pi}=\sum_{i=1}^r u_i u_i^{\top}$. Since $\Pi^*$ is a projection matrix, equation~\eqref{eq:sdp_guarantee} implies that
\[
\langle \Pi^*, X \rangle = \sum_{i=1}^n \lambda_i(X) \cdot \|\Pi^* u_i \|^2_2 \ge r-2 \Delta/\theta ~\text{ and }~ \langle \Pi^*, \wt{\Pi} \rangle = \sum_{i=1}^r \|\Pi^* u_i \|^2_2.
\]
Similarly since $\langle \wt{\Pi}, X \rangle \ge \langle \Pi^*, X \rangle \ge r - 2 \Delta/\theta$, we have that
\[
\sum_{i=1}^r \lambda_i(X)=\langle \wt{\Pi}, X \rangle \ge r - 2 \Delta/\theta.
\]
\noindent At the same time from the constraints of the SDP $\sum_{i=1}^n \lambda_i(X)=\tr(X)=r$. Hence 
$$
\sum_{i=r+1}^n \lambda_i(X) \cdot \|\Pi^* u_i \|^2_2 \le \sum_{i=r+1}^n \lambda_i(X) \le 2 \Delta/\theta.
$$

Using the above we get 
\begin{align*}
\langle \Pi^*, \wt{\Pi} \rangle &=\sum_{i=1}^r \|\Pi^* u_i \|^2_2 \ge \sum_{i=1}^r \lambda_i(X)  \|\Pi^* u_i \|^2_2 \\  &=\sum_{i} \lambda_i(X)  \|\Pi^* u_i \|^2_2 - \sum_{i=r+1}^n \lambda_i(X)  \|\Pi^* u_i \|^2_2 \ge r - \frac{4 \Delta}{\theta}. 
\end{align*}
This establishes $\|\wt{\Pi}^{\bot} \Pi^*\|_F^2= \frac{1}{2}\norm{\wt{\Pi} - \Pi^*}_F^2$ is at most $4\Delta/\theta$.

Finally we note $\lambda_r(X) \ge 1-2 \Delta/\theta$ since
$\sum_{i=1}^r (1-\lambda_i(X)) \le 2 \Delta/\theta$, which implies $\|\wt{\Pi}\|_{q \to 2} \le \|X\|_{q \to 2}/(1-2 \Delta/\theta)=O(\kappa)$. The correctness of $\wt{\Sigma}_{\bg}$ then follows from Theorem~\ref{thm:recover_top_covariance}. Note that $\lambda_1^2 r^2/m$ and $\kappa^4 \delta^4$ are always less than $\lambda_1^2 \eps$ and $\lambda_1 \cdot \kappa^2 \delta^2$ separately given our parameters.
\end{proofof}

\section{Statistical Lower Bound  and  Instance-Optimality}\label{sec:lower_bound}

We now describe the construction that establishes Theorem~\ref{thm:inf_recover_lower}, the instance-optimal lower bound for recovering the principal subspace of a covariance matrix under adversarial perturbations. 
Recall that we have an arbitrary covariance matrix $\Sigma^*$ with eigendecomposition $\Sigma^*=\sum_{i=1}^n \lambda_i v_i v_i^\top$ and $\Pi^*=\sum_{i=1}^r v_i v_i^\top$ being the projection matrix onto its top-$r$ subspace. We construct based on $\Pi$ another rank-$r$ projection matrix $\Pi'$ (and  a corresponding $\Sigma'$) s.t. 
$$\norm{\Pi' - \Pi^*}_F^2 \ge \frac{c\sqrt{r} \kappa \delta}{\sqrt{\lambda_1} \log(rm) \log n} \text{ and } \norm{\Sigma_{\bg}-\Sigma'_{\bg}}_F^2 = \Omega\Big(\frac{\lambda_1^2+\dots+\lambda_r^2}{r} \cdot \norm{\Pi' - \Pi^*}_F^2 \Big),$$
and $\norm{\Pi'}_{\infty \to 2} \le (1+o(1))\kappa$. 
Moreover, for any data matrix $A$ composed of $m$ samples generated from $\calN(0,\Sigma)$, we prove that with high probability, $\exists$ a coupled data matrix $A' \in \R^{n \times m}$ generated from  $\calN(0,\Sigma')$ satisfying $\norm{A_j - A'_j}_\infty \le \delta$.

We remark that our construction also extends in a straightforward fashion to general $\ell_q$ norms to also give the same asymptotic lower bound of $\tilde{\Omega}(\sqrt{r/\lambda_1}\cdot  \kappa \delta)$, where the $\tilde{\Omega}$ hides polylogarithmic factors. We sketch the differences in the intermediate claims between the $\ell_\infty$ and general $\ell_q$ norm in the appendix (see Section~\ref{sec:lower:details}).
\anote{Commented out theorem statement since it's repeated.}
To interpret the results, let $\lambda_1 = O(1)$, and let $\kappa \gg r$ (say $\kappa=n^{0.2}$ and $r=n^{0.1}$).  
The theorem gives a lower bound of $\tilde{\Omega}(\sqrt{r} \kappa \delta)$, which is meaningful when $\kappa \delta \le \sqrt{r}$; also $\delta$ can not be too small. The range of $\delta$ is quite natural (for e.g., it is $[n^{-0.85}, n^{-0.15}]$ for the above setting). 
Theorem~\ref{thm:inf_recover_lower} shows that the upper bounds are optimal up to poly-logarithmic factors for {\em every} principal subspace $\Pi^*$ with $\norm{\Pi^*}_{\infty \to 2}=\kappa$.   
The lower bound does not have the optimal dependence in terms of the gap between the eigenvalues $(\lambda_r - \lambda_{r+1})/\lambda_1$. Please also see Theorem~\ref{thm:minmax_lower} in the appendix for a simpler minimax lower bounds that achieves the correct dependence on the eigengap as well. 

\paragraph{Construction.} To construct $\Pi'$ we take the basis vectors $v_1, \dots, v_r$ and add carefully chosen small perturbations $u_1, \dots, u_r$ to them to get a new basis $v'_1, \dots, v'_r$. Set $k':= \sqrt{\tfrac{\lambda_1}{r}}\cdot \big(\frac{\kappa}{\delta}\big)$   and   $\eps := \tfrac{c}{\log(rm)\log n}(\delta \kappa/ \sqrt{r \lambda_1}) $
for a small constant $c>0$. Note that $\eps \in [0,\tfrac{1}{4})$ 
and $2r \le k' \le n/r$ 
from our choice of parameters.
 Let $S_1, S_2, \dots, S_r \subset \set{1,\dots,n}$ be arbitrary disjoint subsets of size $k'$ each. 
Let for each $\ell \in [r]$, $T_\ell$ denote the subspace of dimension $d_\ell \ge k'-r \ge k'/2$ that corresponds to the subspace of $\R^{S_\ell}$ that is orthogonal to $\Pi^*$ and let $\Pi^{\perp}_\ell \in \R^{n \times n}$ be its projector. Then we define the eigenvectors $v'_1,\ldots,v'_r$ of $\Sigma'$, while $v'_{r+1}=v_{r+1},\ldots,v'_n=v_n$. 
\begin{align}
\forall \ell \in [r], &~~ u_\ell = \Big(\frac{1}{\sqrt{d_\ell}}\Big) \Pi^{\perp}_\ell g_\ell, ~ \text{ where } g_\ell \sim N(0,I_{n \times n}) ~~\text{independently}.\\
\text{ Define, }\forall \ell \in [r],&~~v'_\ell= (1-\eps) v_\ell + \Big(\frac{\sqrt{2\eps-\eps^2}}{\norm{u_\ell}_2}\Big) ~ u_\ell. \label{eq:lb:udef}
\end{align}
\noindent Let $\Pi'$ be the orthogonal projector on the subspace spanned by $v'_1, \dots, v'_\ell$. 
Recall
$\forall j \in [m], A_j = \sum_{\ell = 1}^n \zeta^{(j)}_\ell \sqrt{\lambda_\ell} \cdot v_\ell$ 
where $\zeta^{(j)}_\ell \sim N(0,1)$.
We construct the alternate dataset $A'$:
\begin{equation}\label{eq:constr:newA}
    A'_j = \sum_{\ell = 1}^r \zeta^{(j)}_\ell \sqrt{\lambda_\ell} \cdot \left(v_\ell + \Big(\frac{\sqrt{2\eps-\eps^2}}{(1-\eps) \norm{u_\ell}_2}\Big) u_\ell \right)+ \sum_{\ell = r+1}^n \zeta^{(j)}_\ell \sqrt{\lambda_\ell} \cdot v_\ell.
\end{equation}
(Note that the randomness in $A_j$ and $A'_j$ are coupled using the random variables $\set{\zeta^{(j)}_\ell: \ell \in [r]}, j \in [m]$.)
Observe that each sample $A'_j$ is also drawn independently from $\calN(0,\Sigma')$ with
$$\Sigma'= \sum_{\ell=1}^r \lambda_\ell \Big(v_\ell + \big(\frac{\sqrt{2\eps-\eps^2}}{(1-\eps)\norm{u_\ell}_2}\big) u_\ell \Big)\Big(v_\ell + \big(\frac{\sqrt{2\eps-\eps^2}}{(1-\eps)\norm{u_\ell}_2}\big) u_\ell \Big)^\top + \sum_{\ell=r+1}^n \lambda_{\ell} v_{\ell} v_{\ell}^{\top}.  $$ 
Its best rank-$r$ approximation is $\Sigma'_{\bg}:=\frac{1}{(1-\eps)^2}\sum_{\ell=1}^r \lambda_\ell v'_\ell (v'_\ell)^\top$, where $v'_{\ell}$ is defined in~\eqref{eq:lb:udef}.
Moreover $v'_1, \dots, v'_r$ are orthonormal (since $u_1, \dots, u_r$ are mutually orthonormal and orthogonal to $\Pi^*$). Hence $\Pi'=\sum_{\ell=1}^r v'_\ell (v'_\ell)^\top$, and the top $r$ eigenvalues of $\Sigma'$ are $\set{\lambda_\ell/(1-\eps)^2:\ell \in [r]}$.   

For our construction to work the $u_i$ vectors must simultaneously satisfy a few properties. They must be (i) orthogonal to the given $\Pi^*$, (ii) have disjoint support, (iii) be sufficiently sparse, and (iv) and have sufficiently small $\ell_\infty$ norm. Ensuring these properties requires a careful balancing act, and the following lemma gives an appropriate random distribution that satisfies these properties. 
\begin{lemma}\label{lem:lb:uproperties}
The vectors $u_1, u_2, \dots, u_r \in \R^n$ have disjoint supports $S_1, S_2, \dots, S_r \subset [n]$, and  $\Pi^* u_1 = \Pi^* u_2 =\dots = \Pi^* u_r =0$. Moreover given $k'\ge 2r$, for any $\eta<1$, with probability at least $(1 - \eta)$ we have
    \begin{align}
        \forall \ell \in [r], ~~~~~& \Big| \norm{u_\ell}_2^2 -1 \Big| \le 3\sqrt{ \log(r/\eta)/k'}+ 4 \log(r/\eta)/k' \label{eq:lb:uproperties:1}\\
         & \norm{u_\ell}_\infty \le 3\sqrt{\log(r k'/\eta)/k'}. \label{eq:lb:uproperties:2}
         ~~\text{ and }~~ \norm{u_\ell}_1 \le 2 \sqrt{k'}.
    \end{align}
\end{lemma}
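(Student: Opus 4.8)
The plan is to establish the four claimed properties of $u_1, \dots, u_r$ in turn, with the two probabilistic concentration bounds being the main content. First, the \emph{deterministic} properties are immediate from the construction: each $u_\ell = (1/\sqrt{d_\ell}) \Pi^{\perp}_\ell g_\ell$ lies in the range of $\Pi^{\perp}_\ell$, which by definition is the subspace of $\R^{S_\ell}$ orthogonal to $\Pi^*$; hence $\mathrm{supp}(u_\ell) \subseteq S_\ell$ (giving disjoint supports since the $S_\ell$ are disjoint) and $\Pi^* u_\ell = 0$. So only the norm estimates~\eqref{eq:lb:uproperties:1},~\eqref{eq:lb:uproperties:2} and the $\ell_1$ bound remain, and these are just Gaussian concentration statements for the projection of a standard Gaussian onto a fixed subspace of dimension $d_\ell$ with $k'/2 \le k' - r \le d_\ell \le k'$.

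For~\eqref{eq:lb:uproperties:1}, note $\norm{u_\ell}_2^2 = \frac{1}{d_\ell}\norm{\Pi^\perp_\ell g_\ell}_2^2$, and $\norm{\Pi^\perp_\ell g_\ell}_2^2$ is a $\chi^2$ random variable with $d_\ell$ degrees of freedom. Standard $\chi^2$ tail bounds (Laurent–Massart) give $\Pr[|\norm{\Pi^\perp_\ell g_\ell}_2^2 - d_\ell| \ge 2\sqrt{d_\ell t} + 2t] \le 2e^{-t}$; taking $t = \log(2r/\eta)$ and dividing by $d_\ell$ yields $|\norm{u_\ell}_2^2 - d_\ell/d_\ell| = |\norm{u_\ell}_2^2 - 1| \lesssim \sqrt{t/d_\ell} + t/d_\ell$, and since $d_\ell \ge k'/2$ this gives the stated $3\sqrt{\log(r/\eta)/k'} + 4\log(r/\eta)/k'$ after absorbing constants. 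A union bound over the $r$ indices $\ell$ costs a factor $r$ in the failure probability, which is why $\log(r/\eta)$ (not $\log(1/\eta)$) appears. For~\eqref{eq:lb:uproperties:2}, fix $\ell$ and a coordinate $i \in S_\ell$: the entry $u_\ell(i) = (1/\sqrt{d_\ell}) (\Pi^\perp_\ell g_\ell)(i) = (1/\sqrt{d_\ell}) \langle e_i, \Pi^\perp_\ell g_\ell\rangle = (1/\sqrt{d_\ell})\langle \Pi^\perp_\ell e_i, g_\ell\rangle$ is a centered Gaussian with variance $\norm{\Pi^\perp_\ell e_i}_2^2/d_\ell \le 1/d_\ell$, so $\Pr[|u_\ell(i)| \ge s] \le 2e^{-s^2 d_\ell/2}$. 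Union-bounding over the at most $k' r$ pairs $(\ell,i)$ with $\mathrm{supp}(u_\ell)\subseteq S_\ell$, $|S_\ell|=k'$, and setting $s = 3\sqrt{\log(rk'/\eta)/k'}$ (using $d_\ell \ge k'/2$) makes the total failure probability at most $\eta$; combined with the event from~\eqref{eq:lb:uproperties:1}, a final union bound with appropriately split $\eta/2$ budgets gives all bounds simultaneously with probability $\ge 1-\eta$. Finally, the $\ell_1$ bound follows deterministically from the others on this good event: $\norm{u_\ell}_1 \le \sqrt{|\mathrm{supp}(u_\ell)|}\cdot\norm{u_\ell}_2 \le \sqrt{k'}\cdot\sqrt{1 + o(1)} \le 2\sqrt{k'}$ by Cauchy–Schwarz, using $k' \ge 2r$ (which makes the $\chi^2$ fluctuation term small enough that $\norm{u_\ell}_2 \le 2$, say, though in fact $\le \sqrt{2}$ suffices for large $k'$).

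I expect no serious obstacle here — this is a routine concentration argument — but the one point requiring a little care is bookkeeping the union bounds and the exact constants so that the stated inequalities come out with the displayed constants $3$ and $4$ and the arguments $\log(r/\eta)$ versus $\log(rk'/\eta)$; one has to be mildly careful that the $\chi^2$ bound is applied to $d_\ell$ (not $k'$) degrees of freedom and then the substitution $d_\ell \ge k'/2$ is used to convert to a clean statement in $k'$, and that the $\ell_\infty$ union bound runs over $k'$ coordinates per block (hence the extra $k'$ inside the log) rather than over all $n$. Allocating, say, $\eta/2$ to the event in~\eqref{eq:lb:uproperties:1} and $\eta/2$ to that in~\eqref{eq:lb:uproperties:2} and rescaling the logarithms accordingly handles this cleanly.
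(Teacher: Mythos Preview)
Your proposal is correct and follows essentially the same argument as the paper: the deterministic properties are read off from the construction, \eqref{eq:lb:uproperties:1} comes from the Laurent--Massart $\chi^2$ tail bound applied to $\norm{\Pi^\perp_\ell g_\ell}_2^2$ with $d_\ell \ge k'/2$ degrees of freedom and a union bound over $\ell\in[r]$, \eqref{eq:lb:uproperties:2} comes from the per-coordinate Gaussian tail (variance $\le 1/d_\ell$ since rows of a projector have norm $\le 1$) with a union bound over the $rk'$ support coordinates, and the $\ell_1$ bound is Cauchy--Schwarz on a $k'$-sparse vector. Your bookkeeping of the $\eta$-splitting is slightly more explicit than the paper's, but the content is identical.
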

The final hurdle in the construction comes from arguing that $\norm{\Pi'}_{\infty \to 2}$ is comparable to $\norm{\Pi}_{\infty \to 2}$. We argue this by analyzing the related $\norm{\Pi'}_{\infty \to 1}$ norm instead which is known to have good monotonicity properties (see Lemma~\ref{lem:monotone}), and by using properties of $v_1, \dots, v_r$ that follow from $\norm{\Pi^*}_{\infty \to 2}=\kappa$. 
Please see Section~\ref{app:lbproof} for the proof of the theorem, and Section~\ref{app:lb:aux} for proofs of the related lemmas.

\section*{Acknowledgement}
The authors would like to think Sivaraman Balakrishnan for several helpful discussions, and for suggesting the thresholding algorithm for mean estimation. 

\bibliographystyle{apalike}

\bibliography{main}
\appendix
\section{Related Work}
\label{sec:related}

\paragraph{Robustness in Supervised Learning.} In the context of supervised learning problems such as classification and regression various models of robustness have been studied in the literature. These include the classical random classification noise model \citep{angluin1988learning}, the statistical query model \citep{kearns1998efficient}, and the agnostic learning \citep{kearns1994toward} framework for modeling corruptions to the training labels. Model such as malicious noise \citep{kearns1993learning} and nasty noise \citep{diakonikolas2018learning} study settings where both the training data and the training labels could be corrupted. Typically these models assume that only a small $\epsilon$ fraction of the training data can be corrupted by an adversary. The study of these models has been very fruitful leading to a variety of algorithmic insights \citep{blum1998polynomial, dunagan2008simple, kalai2008agnostically, klivans2009learning, kalai2008agnostic, kalai2012reliable, awasthi2014power, diakonikolas2018learning}.

Recently, motivated from properties of deep neural networks, there has also been a lot in interest in modeling robustness to adversarial perturbations of the test input \citep{madry2017towards, schmidt2018adversarially, nakkiran2019adversarial, khim2018adversarial, yin2018rademacher, tsipras2018robustness, awasthi2019robustness}. While these works also model the noise as $\ell_p$ perturbations to the input, the theory of test time robustness is poorly understood and we lack provably robust algorithms for many fundamental tasks.

\paragraph{Robustness in Unsupervised Learning.} There is a large body of literature in the machine learning and statistics community on the design and study of robust algorithms for unsupervised learning tasks. Perhaps the most popular and widely studied model in this context is Huber's $\epsilon$-contamination model \citep{huber2011robust}. Here is it assumed that a given data set is generated from a mixture: $(1-\epsilon)P + \epsilon Q$ where $P$ is the true distribution about which we want to reason and $Q$ is an arbitrary distribution. Various works have studied the computational and statistical tradeoffs under Huber's model for fundamental tasks such as mean/covariance estimation \citep{yatracos1985rates, chen2016general, diakonikolas2019robust,  diakonikolas2018robustly, charikar2017learning, steinhardt2017resilience, balakrishnan2017computationally, li2017robust}, regression \citep{prasad2018robust, klivans2018efficient} and more general stochastic convex optimization \citep{prasad2018robust, diakonikolas2018sever}. Dutta et al.~\cite{Duttaetal} consider a notion of additive perturbation stability for Euclidean $k$-means clustering, where the optimal clustering is stable even when each point is perturbed by a small amount in $\ell_2$ norm. Our results together indicate that the $\infty \to 2$ norm of the principal may analogously capture a notion of stability for the subspace estimation problem when the perturbations are measured in $\ell_\infty$ norm (or $\ell_q$ for $q>2$). \anote{Added this line.}  

\paragraph{Principal Subspace Estimation in High Dimensions.} The results of our paper characterize the robustness to adversarial perturbations for estimating the top $r$-principal subspace of the covariance matrix in terms of the sparsity of the subspace. In the area of high dimensional statistics questions of estimating mean and covariance with rates depending on various notions of sparsity have been widely studied. These works however assume that the dataset is indeed generated from the idealized model. There is a long line work on the classical problem of sparse mean estimation in high dimensions \citep{donoho1992maximum, donoho1994minimax}. For the case of covariance estimation the sparse PCA formulation has been well studied and essentially corresponds to estimating the top principal component assuming that it is $\ell_0$ or $\ell_1$ sparse \citep{johnstone2001distribution, BerthetRigollet, aminiwainwright}. The works of \citet{VuLei12,VuLei13, ma2013sparse, Liuetal} extend this to estimating the top-$r$ principal subspace with rates depending on certain notions of sparsity of the subspace. Similar to our work, semidefinite programming~(SDP) based approaches have been proposed for such sparse estimation problems \citep{d2005direct}. 

Another related setting is the robust PCA formulation that has received significant interest in recent years \citep{de2003framework, candes2011robust, chandrasekaran2011rank}. Here one assumes that a given data matrix is the sum of a low rank matrix and a sparse matrix, i.e., the one with very few non-zero entries. In this case it can be shown that if true signal~(the low rank component) is well spread out then estimation is possible. In contrast, in our setting every data point could be corrupted and hence the data matrix $\tilde{A}$ cannot be written as the sum of low rank plus a sparse component. In fact, our characterization implies that under our model of perturbations, estimation is possible if and only if the signal is localized, i.e., is sparse.

\paragraph{Robustness in Combinatorial Settings.} There is also a large body of work in the theoretical computer science community studying robust algorithm design for various combinatorial problems such as graph partitioning, independent set etc. A popular framework that is used in such contexts is {\em semi-random} models \citep{BS92}. Semi-random models assume that the input is generated from an ideal distribution and then perturbed by an adversary in a non-worst case manner. The study of such models has led to the design of robust algorithms for many problems such as coloring \citep{BS92}, independent set \citep{FK99}, graph partitioning \citep{MMV12} and lately for machine learning problems as well \citep{MPW15, vijayaraghavan2018clustering, cheng2018semirandom, awasthi2018towards}.

\section{Preliminaries} \label{sec:prelims_app}


We discuss a few properties about the operator $p \to q$ norm, robust projections, and $\sin \Theta$ distance between subspaces and projections in this section.

A useful fact of the operator norms is the efficient approximation algorithms.
\begin{lemma}[\cite{nesterov1998semidefinite,steinberg2005computation}]\label{lem:approxalgos}
For any $q \le 2 \le p$, there exists an efficient randomized algorithm with an input matrix $A$ that approximates $\|A\|_{p \to q}$ within a constant factor $C_{p,q} \le 3$. Moreover for any $q \ge 2$, and for PSD matrices $M$, there exists polynomial time algorithms that approximates $\norm{M}_{q \to q^*}$ within a $1/\gamma_{q^*}^2$ factor where $\gamma_{q^*}$ is the expected $\ell_{q^*}$ norm of a standard normal r.v. In particular for $q=\infty$, this gives a $\pi/2$ approximation. 
\end{lemma}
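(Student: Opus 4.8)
The plan is to derive both claims from a single template: recast the operator norm as the maximum of a low-degree polynomial over a product of $\ell_s$-balls whose exponents are all $\ge 2$, relax this (non-convex) problem to a semidefinite program with a genuinely convex feasible region, and round the SDP optimum back to a feasible point by a Gaussian projection in the style of Grothendieck's inequality. For the first claim, Fact~\ref{fact:dual_norms} gives $\|A\|_{p\to q}=\max\{\,u^\top A v:\ \|u\|_{q^*}\le1,\ \|v\|_p\le1\,\}$; since $q\le2\le p$, both $q^*=q/(q-1)$ and $p$ lie in $[2,\infty]$. For the second claim, write the PSD matrix as $M=B^\top B$ (e.g.\ $B=M^{1/2}$); then $\|M\|_{q\to q^*}=\|B\|^2_{q\to2}$ by Fact~\ref{fact:dual_norms}, and $\|B\|^2_{q\to2}=\max\{\,\langle M,vv^\top\rangle:\ \|v\|_q\le1\,\}$, a quadratic form over the $\ell_q$-ball with $q\ge2$.

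Next I would write down the relaxations. For the bilinear problem, lift $(u,v)$ to the rank-one PSD matrix $Z=\binom{u}{v}\binom{u}{v}^{\!\top}$: the objective becomes linear in $Z$, and since $Z\succeq0$ forces the diagonal entries to be nonnegative while $q^*/2,\,p/2\ge1$, the constraints $\sum_i Z_{ii}^{q^*/2}\le1$ over the $u$-block and $\sum_j Z_{jj}^{p/2}\le1$ over the $v$-block are convex; a constraint $\sum_i t_i^{s}\le1$ with $s\ge1$ on a nonnegative diagonal has an obvious separation oracle, so after dropping only the rank constraint the ellipsoid method solves the resulting convex program to arbitrary accuracy. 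For the quadratic problem the analogous lift is $Y=vv^\top\succeq0$ with the single convex constraint $\sum_i Y_{ii}^{q/2}\le1$ and objective $\langle M,Y\rangle$.

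The approximation factor, and the real work, is in the rounding. Take an optimal $Z$ (resp.\ $Y$), a Gram factorization $Z_{ij}=\langle x_i,x_j\rangle$, a fresh $g\sim\calN(0,I)$, and set $\widehat u_i=\langle x_i,g\rangle$, $\widehat v_j=\langle x_j,g\rangle$, except on a coordinate whose constraint exponent is $\infty$ where one uses $\widehat v_j=\mathrm{sign}\langle x_j,g\rangle$ instead. Then $\E[\widehat u_i\widehat v_j]=\langle x_i,x_j\rangle$, so the rounded objective equals the SDP value in expectation, while, writing $\gamma_s:=\bigl(\E_{z\sim\calN(0,1)}|z|^s\bigr)^{1/s}$ (so $\gamma_2=1$ and $\gamma_1=\sqrt{2/\pi}$), one computes $\E\|\widehat u\|_{q^*}^{q^*}\le\gamma_{q^*}^{q^*}$ and $\E\|\widehat v\|_p^p\le\gamma_p^p$. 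The remaining step is to pass from these expectation bounds to an honestly feasible point on which the objective is within the stated factor of the optimum; this is done either crudely (Markov plus a union bound over the two norm events, resampling $g$ a constant number of times) or sharply via the entrywise operator inequality $\tfrac2\pi\arcsin Y\succeq Y$ for correlation matrices $Y$, which upgrades ``feasible in expectation after rescaling'' to a genuine lower bound that loses only a Gaussian-moment constant. The sharp constants are exactly the content of \cite{nesterov1998semidefinite,steinberg2005computation}: the uniform bound $C_{p,q}\le3$ over the range $q\le2\le p$ for the first claim, and the $1/\gamma_{q^*}^2$ factor for the $q\to q^*$ norm of PSD matrices (the sign-rounding analysis of the cube being the source of the $\arcsin$ identity), which for $q=\infty$ gives $1/\gamma_1^2=\pi/2$.

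The main obstacle is the final step---converting the random rounded vector into a feasible near-optimal point with the precise constant; the reformulations via Fact~\ref{fact:dual_norms}, the convexity and ellipsoid-solvability of the SDPs, and the expectation computations for the rounding are all routine. In the write-up I would carry out those in detail and import the concentration/$\arcsin$ analysis and the exact constants $C_{p,q}\le3$ and $1/\gamma_{q^*}^2$ from \cite{nesterov1998semidefinite,steinberg2005computation}.
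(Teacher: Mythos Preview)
The paper does not prove this lemma at all; it is stated as a known result with citations to \cite{nesterov1998semidefinite,steinberg2005computation} and no proof is given in the body or appendix. Your proposal is a reasonable sketch of the standard argument from those references: the bilinear reformulation via Fact~\ref{fact:dual_norms}, the SDP lift with convex diagonal-moment constraints (valid precisely because the relevant exponents are at least $2$), ellipsoid solvability, and Gaussian rounding with the $\arcsin$/sign analysis yielding the Gaussian-moment constants. Since the paper simply imports the result, there is nothing to compare against beyond noting that your outline matches the content of the cited works; if you were to write this up, the only nontrivial step you correctly flag as the ``main obstacle'' (turning the in-expectation feasibility into a feasible near-optimal point with the exact constants $C_{p,q}\le3$ and $1/\gamma_{q^*}^2$) is precisely what those references supply, so citing them for that step is appropriate.
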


One crucial property in the rounding algorithm of the convex program~\eqref{sdp:spike} is the monotonicity of $q \to q^*$ norm stated below (See Section 5 in \cite{ACCV} for a proof, and counter examples for other norms).
\begin{lemma}\label{lem:monotone}
For any $q>2$, $q \to q^*$ norm is monotone for PSD matrices: for any $A, B \succeq 0$, $\|A+B\|_{q \to q^*} \ge \|A\|_{q \to q^*}$. 
\end{lemma}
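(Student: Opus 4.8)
The plan is to reduce the $q\to q^*$ operator norm of a symmetric matrix to the maximization of a single quadratic form over the $\ell_q$ unit ball, after which the monotonicity claim becomes a one-line consequence of positive semidefiniteness. First I would invoke the bilinear characterization of operator norms from Fact~\ref{fact:dual_norms}: for any symmetric $M$,
\[
\|M\|_{q\to q^*}\;=\;\max_{u,v\neq 0}\frac{u^\top M v}{\|u\|_{(q^*)^*}\,\|v\|_q}\;=\;\max\bigl\{\,u^\top M v\;:\;\|u\|_q=\|v\|_q=1\,\bigr\},
\]
where we used $(q^*)^*=q$. The structural point worth emphasizing is that in this self-dual pairing the two vectors $u,v$ are constrained by the \emph{same} norm $\|\cdot\|_q$; this is exactly what makes the next step legitimate, and it is precisely what breaks for a general $p\to q$ norm (there $u$ ranges over $\ell_{q^*}$ and $v$ over $\ell_p$), which is consistent with the counterexamples to monotonicity for other norms.

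Next I would prove, for PSD $M$, the identity $\|M\|_{q\to q^*}=\max_{\|w\|_q\le 1} w^\top M w$. The direction ``$\ge$'' is immediate by setting $u=v=w$ in the formula above. For ``$\le$'', positive semidefiniteness gives $(u-v)^\top M(u-v)\ge 0$ and symmetry gives $u^\top M v=v^\top M u$, so $2\,u^\top M v\le u^\top M u+v^\top M v$; hence whenever $\|u\|_q,\|v\|_q\le 1$ we get $u^\top M v\le\max(u^\top M u,\,v^\top M v)\le\max_{\|w\|_q\le 1}w^\top M w$. The only bookkeeping is to note that $w\mapsto w^\top M w$ is continuous, nonnegative and $2$-homogeneous on the compact $\ell_q$ ball, so the maxima are attained and the maximum over the ball coincides with the maximum over the unit sphere.

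The lemma then follows at once: for $A,B\succeq 0$ and any $w$, $w^\top(A+B)w=w^\top A w+w^\top B w\ge w^\top A w$ since $w^\top B w\ge 0$; taking the maximum over $\|w\|_q\le 1$ and applying the identity to $A+B$ and to $A$ yields $\|A+B\|_{q\to q^*}\ge\|A\|_{q\to q^*}$. I do not expect a genuine obstacle here: the only part needing any care is the reduction to a quadratic form, i.e., the polarization inequality together with the observation that it crucially exploits the self-dual structure of the $q\to q^*$ norm; in particular the argument never uses $q>2$, so the stated hypothesis could in fact be relaxed.
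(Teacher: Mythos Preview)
Your argument is correct. The key identity $\|M\|_{q\to q^*}=\max_{\|w\|_q\le 1}w^\top M w$ for PSD $M$ via the polarization inequality $(u-v)^\top M(u-v)\ge 0$ is exactly the right reduction, and once established the monotonicity is immediate. Your observation that the self-dual structure of the $q\to q^*$ norm (both test vectors living in the same $\ell_q$ ball) is what enables the diagonalization step, and that this is what fails for general $p\to q$ norms, is also on point and matches the paper's remark about counterexamples for other norms.

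Note that the paper itself does not give a proof of this lemma: it simply cites Section~5 of \cite{ACCV}. Your self-contained argument is the standard one and is essentially what that reference contains; you are also right that the hypothesis $q>2$ is not actually needed for the monotonicity claim (it holds for all $q\ge 1$), though in the paper's context only $q\ge 2$ is relevant.
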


\paragraph{Robust projections.} We show basic properties of a projection matrix $\Pi$ in terms of its $q \to 2$ norm.
\begin{fact}\label{fact:operator_norm_pro}
Given any projection matrix $\Pi$ with $\|\Pi\|_{q \to 2} \le \kappa$ for $q>2$, we have the following properties.
\begin{enumerate}
    \item For any $\delta$ and vectors $u$ and $v$ with $\|u-v\|_{q} \le \delta$, $\|\Pi u - \Pi v\|_2 \le \kappa \delta$.
    \item Any vector $v$ in this subspace has $\|v\|_{q^*}/\|v\|_2 \le \kappa$. Moreover $\|\Pi\|_{q^*} \le \text{rank}(\Pi) \cdot \kappa^2$.
\end{enumerate}
\end{fact}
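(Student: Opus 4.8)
The plan is to obtain both items directly from the definition of the $q\to 2$ operator norm, the duality relations recorded in Fact~\ref{fact:dual_norms}, and H\"older's inequality. For the first item, I would just use linearity of $\Pi$: since $\|u-v\|_q\le\delta$, the defining inequality of the operator norm gives $\|\Pi u-\Pi v\|_2=\|\Pi(u-v)\|_2\le\|\Pi\|_{q\to 2}\,\|u-v\|_q\le\kappa\delta$.

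For the second item, I would first note that any $v$ in the range of $\Pi$ satisfies $\Pi v=v$. Since $\Pi$ is symmetric, Fact~\ref{fact:dual_norms} (applied as $\|A\|_{p\to q}=\|A^\top\|_{q^*\to p^*}$ with $A=\Pi$) yields $\|\Pi\|_{q\to 2}=\|\Pi\|_{2\to q^*}$, and hence $\|v\|_{q^*}=\|\Pi v\|_{q^*}\le\|\Pi\|_{2\to q^*}\,\|v\|_2=\|\Pi\|_{q\to 2}\,\|v\|_2\le\kappa\|v\|_2$; dividing by $\|v\|_2$ gives the first part of item~2. Alternatively one can avoid the duality identity and argue by H\"older directly: $\|v\|_{q^*}=\sup_{\|z\|_q\le 1}\langle z,\Pi v\rangle=\sup_{\|z\|_q\le 1}\langle\Pi z,v\rangle\le\sup_{\|z\|_q\le 1}\|\Pi z\|_2\,\|v\|_2\le\kappa\|v\|_2$.

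For the bound on the entrywise norm $\|\Pi\|_{q^*}$, I would fix an orthonormal basis $v_1,\dots,v_r$ of the range of $\Pi$, where $r=\mathrm{rank}(\Pi)$, so that $\Pi=\sum_{i=1}^r v_iv_i^\top$. Each $v_i$ is a unit vector in the subspace, so the previous paragraph gives $\|v_i\|_{q^*}\le\kappa$. The entrywise $\ell_{q^*}$ norm of a rank-one matrix factors, $\|v_iv_i^\top\|_{q^*}=\|v_i\|_{q^*}^2\le\kappa^2$, and the triangle inequality for the entrywise norm then gives $\|\Pi\|_{q^*}\le\sum_{i=1}^r\|v_iv_i^\top\|_{q^*}\le r\kappa^2$.

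I do not expect any genuine obstacle: the statement is essentially a repackaging of H\"older's inequality together with the duality identities already established in Fact~\ref{fact:dual_norms}. The only steps worth stating carefully are that $\Pi v=v$ for $v$ in the range, the rank-one factorization $\|v_iv_i^\top\|_{q^*}=\|v_i\|_{q^*}^2$, and the observation that the final estimate is basis-independent since it depends only on $r$ and $\kappa$.
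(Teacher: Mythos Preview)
Your proposal is correct and follows essentially the same route as the paper: item~1 is immediate from the definition of the $q\to 2$ norm, and item~2 uses $\Pi v=v$ together with the duality $\|\Pi\|_{q\to 2}=\|\Pi\|_{2\to q^*}$ to get $\|v\|_{q^*}\le\kappa\|v\|_2$, followed by the orthonormal decomposition $\Pi=\sum_i v_iv_i^\top$ and the triangle inequality for the entrywise $\ell_{q^*}$ norm. Your write-up is in fact slightly more careful than the paper's, which compresses the duality step (and has a typo writing $\|v\|_q$ where $\|v\|_{q^*}$ is meant).
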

\begin{proof}
The first property follows from the definition of $q \to 2$ norm.

For the second property, $\|v\|_q = \|\Pi v\|_q \le \kappa \|v\|_2$ by definition. Morever, we could choose a orthonormal basis $v_1,\ldots,v_r$ for $\Pi$ such that
$\|\Pi\|_{q^*}=\|\sum_{i=1}^r v_i v_i^{\top}\|_{q^*} \le \sum_{i=1}^r \|v_i v_i^{\top}\|_{q^*}=r \kappa^2$.
\end{proof}
The constraint~\eqref{sdp:totalnorm} in the convex program essentially comes from the 2nd property in the above fact.\xnote{31st: move this sent to next sec?}

\paragraph{$\sin \Theta$ distance of subspaces.} Given two subspaces $S$ and $S^*$ of the same dimension, we always measure their distance in terms of the Frobenius norm of the $\sin {\Theta}(S,S^*)$ matrix, where $\Theta$ corresponds to the principal angles between the subspaces. This has a simple expression in terms of the projection matrices $\Pi, \Pi^*$ when both have the same rank:
\begin{equation*}
    \sin \Theta(S,S^*) = \Pi^{\perp} \Pi^*. \text{ Hence } \norm{\sin \Theta(S,S^*)}_F^2 = \norm{\Pi^{\perp} \Pi^*}_F^2 = \norm{\Pi^*}_F^2 - \iprod{\Pi, \Pi^*}= \tfrac{1}{2}\norm{\Pi - \Pi^*}_F^2.
\end{equation*} 
In particular, when we measure the distance between two projection matrices $\Pi$ and $\Pi^*$ of rank $r$, we will also use the following form 
\begin{equation}\label{eq:sin_theta}
    \norm{\sin \Theta(\Pi,\Pi^*)}_F^2=\norm{\Pi^{\perp} \Pi^*}_F^2=r-\langle \Pi,\Pi^*\rangle.
\end{equation}

\section{Solving the convex program \eqref{sdp:spike}} \label{app:sdpsolving}  
\begin{lemma}\label{lem:sdpsolving:spike}
For any $q \ge 2$, there exists a constant $c=c(q)\ge 1$ such that the following holds. There is a randomized polynomial time algorithm that given an instance $A \in \R^{n \times m}$ with an optimal solution $X^*$ to the relaxation \eqref{sdp:spike}-\eqref{sdp:robustnorm}, with high probability finds a solution $\widehat{X}$ that is arbitrarily close in objective value compared to $X^*$ such that $\norm{\widehat{X}}_{q \to q^*} \le c \kappa^2$. 
\end{lemma}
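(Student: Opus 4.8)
The plan is to solve the convex program \eqref{sdp:spike}--\eqref{sdp:robustnorm} by the Ellipsoid method, using an approximate separation oracle. The only obstacle to a direct application of Ellipsoid is the constraint \eqref{sdp:robustnorm}, namely $\norm{X}_{q \to q^*} \le \kappa^2$: computing the $q \to q^*$ operator norm of a PSD matrix is NP-hard in general, so we cannot test this constraint exactly. However, by Lemma~\ref{lem:approxalgos} there is a polynomial-time algorithm that, given a PSD matrix $M$, computes a quantity $N(M)$ with $\norm{M}_{q \to q^*} \le N(M) \le c(q) \cdot \norm{M}_{q \to q^*}$ for a constant $c(q) = 1/\gamma_{q^*}^2 \le \pi/2$. (For the constraints \eqref{sdp:totalnorm}, $\tr(X) \le r$, and $0 \preceq X \preceq I$, exact polynomial-time separation oracles are standard: the entrywise $\ell_{q^*}$ norm is efficiently computable, the trace is linear, and the spectral constraints are checked by eigendecomposition.)

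First I would set up the weak separation oracle for the ``relaxed'' feasible region $\calP_{c(q)} = \{X : \tr(X) = r,\ 0 \preceq X \preceq I,\ \norm{X}_{q^*} \le r\kappa^2,\ \norm{X}_{q \to q^*} \le c(q)\kappa^2\}$. On input $X$: check the linear/spectral/entrywise constraints exactly and return a valid separating hyperplane if any is violated; otherwise compute $N(X)$ and if $N(X) > c(q)\kappa^2$ declare $X$ infeasible, while if $N(X) \le c(q)\kappa^2$ declare $X$ feasible (note $N(X) \le c(q)\kappa^2$ forces $\norm{X}_{q \to q^*} \le c(q)\kappa^2$, so the reported ``feasible'' points genuinely lie in $\calP_{c(q)}$). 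When $N(X) > c(q)\kappa^2$ we need an actual separating hyperplane; here I would use the variational/subgradient structure of $\norm{\cdot}_{q \to q^*}$: the approximation algorithm of \cite{nesterov1998semidefinite} in fact produces (via its SDP/Grothendieck rounding) a rank-one certificate $yy^\top$ with $\iprod{X, yy^\top} \ge N(X) \cdot \Omega(1)$ and $\norm{yy^\top}_{(q \to q^*)^\circ} \le 1$ in the appropriate dual norm, giving the linear functional $X \mapsto \iprod{X, yy^\top}$ that is large at the current point but bounded on all of $\calP_{c(q)}$; this furnishes the hyperplane separating $X$ from $\calP_{c(q)}$.

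Second, I would run the Ellipsoid algorithm with this oracle to optimize the linear objective $\tfrac1m\norm{A}_F^2 - \tfrac1m\iprod{AA^\top, X}$ over $\calP_{c(q)}$. Since the true optimum $X^*$ of the original program \eqref{sdp:spike} satisfies all the exact constraints and has $\norm{X^*}_{q \to q^*} \le \kappa^2 \le c(q)\kappa^2$, it lies strictly inside (up to the usual $\eps$-interior / smoothing argument) the region our oracle certifies as feasible, so the value returned is at least as good as (indeed arbitrarily close to, after the standard rounding-error bookkeeping) the optimal value of \eqref{sdp:spike}; and the returned point $\widehat X$ is certified feasible by the oracle, hence satisfies $\norm{\widehat X}_{q \to q^*} \le c(q)\kappa^2$ as claimed. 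The bit-complexity / polynomial-time bound follows from the usual guarantees for Ellipsoid with a polynomial-time weak separation oracle over a region contained in a ball of radius $\poly(n)$ and containing a small ball around a feasible point (one can add a tiny multiple of $\tfrac rn I$ to handle the latter).

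The main obstacle I anticipate is the separation step: turning the \emph{approximation} guarantee for $\norm{X}_{q \to q^*}$ into an honest separating hyperplane for $\calP_{c(q)}$, i.e. extracting from the approximation algorithm a dual certificate (the vector $y$ above) and verifying it has bounded dual norm so that $\iprod{\,\cdot\,, yy^\top}$ is genuinely bounded on $\calP_{c(q)}$ while exceeding the threshold at the query point. Everything else — the other oracles, the Ellipsoid machinery, the interior-point/feasibility bookkeeping, and the fact that relaxing the norm bound by the factor $c(q)$ only helps the objective — is routine. For $q = \infty$ this recovers $c(\infty) = \pi/2$, matching the statement.
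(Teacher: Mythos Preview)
Your proposal is correct and follows essentially the same approach as the paper: run the Ellipsoid method with an approximate separation oracle, using the constant-factor approximation for the $q \to q^*$ norm (Lemma~\ref{lem:approxalgos}) to handle constraint \eqref{sdp:robustnorm}, and standard oracles for the remaining constraints. One small clarification that dissolves your ``main obstacle'': the paper frames the separation as producing a hyperplane valid for the \emph{original} feasible set rather than for $\calP_{c(q)}$---the approximation algorithm outputs witnesses $y',z'$ with $\norm{y'}_q,\norm{z'}_q \le 1$ and $\iprod{y'(z')^\top, X} > \kappa^2$, while every truly feasible $X'$ satisfies $\iprod{y'(z')^\top, X'} \le \norm{X'}_{q\to q^*}\le \kappa^2$; since the optimum $X^*$ lies in the original set this already suffices for Ellipsoid, and any point the oracle accepts automatically lies in $\calP_{c(q)}$.
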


\begin{proof}
We first observe that the feasible set of the program is convex. We now show how to use the Ellipsoid algorithm to approximately it. We will design an approximate hyperplane separation oracle for \eqref{sdp:robustnorm} and \eqref{sdp:totalnorm}. The constraint \eqref{sdp:robustnorm} can be rewritten as $\iprod{yz^\top, X} \le \kappa^2$ for all $y,z \in \R^n$ such that $\norm{y}_q,\norm{z}_q \le 1$.
As described in Lemma~\ref{lem:approxalgos}, there exists SDP-based polynomial time algorithms that give constant factor $c=c(q)$ approximations for computing the $q \to q^*$ matrix operator norm. Such an approximation algorithm immediately gives a $c(q)$-factor approximate separation oracle; when $\norm{X}_{q \to q^*} > c \kappa^2$, the solution $y',z'$ output by the algorithm gives a separating hyperplane of the form $\iprod{y'(z')^\top,X} \le \kappa^2$. Finally, the constraint \eqref{sdp:totalnorm} is also convex and can be efficiently separated using the gradient at the given point $X$.
\end{proof}

\section{Computational Upper Bounds}
\label{sec:comp-upper-app}
In this section we provide proofs of the supporting clams that were used in establishing our main theorem~(Theorem~\ref{thm:comp_upper_gen}). We start with proving claims regarding the error term over all SDP solutions.


\subsection{Bounding Error over SDP Solutions}\label{sec:proof_sdp_rec_gen}
Here we provide the proof of Lemma~\ref{lem:sdp_recovery_gen}. We first state and prove a few useful claims.
\begin{claim}\label{clm:X_to_half_perturbation}
For any $X$ in $\mathcal{P}_{c(q)}$ and let $\wt{A}$ be an $\delta$-perturbation of $A$. Then we always have that
$$\|X^{1/2} (\wt{A}-A)\|_F \le \sqrt{c(q) m} \cdot \kappa \delta$$ 
and 
$$\langle (\wt{A}-A) (\wt{A}-A)^{\top}, X \rangle \le c(q) m \cdot \kappa^2 \delta^2.
$$
\end{claim}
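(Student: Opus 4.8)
The plan is to exploit the structure of the perturbation matrix $B := \wt{A} - A \in \R^{n \times m}$, whose columns $B_j$ satisfy $\norm{B_j}_q \le \delta$ for every $j \in [m]$, together with the feasibility constraint $\norm{X}_{q \to q^*} \le c(q)\kappa^2$ from the definition of $\mathcal{P}_{c(q)}$. First I would observe that both claimed inequalities are really the same statement: since $X \succeq 0$, we have $X^{1/2}$ well-defined and $\norm{X^{1/2} B}_F^2 = \tr(B^\top X B) = \iprod{B B^\top, X}$, so it suffices to bound $\iprod{B B^\top, X} = \sum_{j=1}^m B_j^\top X B_j$. Then the second displayed inequality follows immediately, and the first is its square root (using $\sqrt{c(q)} \ge$ nothing extra — just $\sqrt{c(q)m}\,\kappa\delta = \sqrt{c(q)m\kappa^2\delta^2}$).

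The key step is to bound a single quadratic form $B_j^\top X B_j$ by $c(q)\kappa^2\delta^2$. Using the variational characterization of the operator norm (Fact~\ref{fact:dual_norms}), for any vector $y$ we have $\abs{y^\top X y} \le \norm{X}_{q \to q^*}\norm{y}_q^2$ — indeed $\norm{X}_{q \to q^*} = \max_{u,v \neq 0} u^\top X v / (\norm{u}_{q^*}^*\norm{v}_q)$, wait, more directly: $\norm{X y}_{q^*} \le \norm{X}_{q \to q^*}\norm{y}_q$ and then $y^\top X y \le \norm{y}_q \norm{X y}_{q^*} \le \norm{X}_{q \to q^*}\norm{y}_q^2$ by H\"older. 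Applying this with $y = B_j$ and the feasibility bound $\norm{X}_{q\to q^*} \le c(q)\kappa^2$ gives $B_j^\top X B_j \le c(q)\kappa^2 \norm{B_j}_q^2 \le c(q)\kappa^2\delta^2$. Summing over $j \in [m]$ yields $\iprod{B B^\top, X} \le c(q) m \kappa^2 \delta^2$, which is the second inequality, and taking square roots gives $\norm{X^{1/2}B}_F \le \sqrt{c(q)m}\,\kappa\delta$, the first.

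I do not expect any genuine obstacle here; this is essentially a bookkeeping lemma that converts the $q \to q^*$ norm constraint on the SDP variable into a uniform bound on how much the adversarial perturbation can be "seen" through $X$. The only mild subtlety is making sure the one-sided bound $y^\top X y \le \norm{X}_{q \to q^*}\norm{y}_q^2$ is applied correctly — this uses $X \succeq 0$ (so the quadratic form is nonnegative and we do not need absolute values) together with H\"older's inequality $\abs{y^\top(Xy)} \le \norm{y}_q\norm{Xy}_{q^*}$ and the definition of the $q \to q^*$ operator norm. Once that inequality is in hand, the rest is summation over the $m$ columns.
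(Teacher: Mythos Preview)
Your proof is correct and follows essentially the same approach as the paper: both bound the Frobenius norm column-by-column using the constraint $\norm{X}_{q\to q^*}\le c(q)\kappa^2$ and the identity $\norm{X^{1/2}B}_F^2=\iprod{BB^\top,X}=\sum_j B_j^\top X B_j$. The only cosmetic difference is that the paper first converts the $q\to q^*$ bound on $X$ into the equivalent bound $\norm{X^{1/2}}_{q\to 2}\le\sqrt{c(q)}\,\kappa$ (via Fact~\ref{fact:dual_norms}) and then uses $\norm{X^{1/2}B_j}_2\le\norm{X^{1/2}}_{q\to 2}\norm{B_j}_q$, whereas you apply H\"older directly to $B_j^\top(XB_j)\le\norm{B_j}_q\norm{XB_j}_{q^*}\le\norm{X}_{q\to q^*}\norm{B_j}_q^2$; these are the same inequality.
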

\begin{proof}
Define $B=(\wt{A}-A)$. The norm bound $\|X\|_{q \to q^*} \leq c(q) \kappa^2$ along with the fact that for any matrix $M$, $\|M^\top M\|_{q \to q^*} = \|M\|^2_{q \to 2}$, implies that $\|X^{\frac 1 2}\|_{q \to 2} \leq \sqrt{c(q)} \cdot \kappa$. Denoting $B_i$ to be the $i$th column of $B$, we get that $\|B_i\|_{q} \le \delta$ and that
\begin{align*}
    \|X^{\frac 1 2}B\|_F^2 &= \sum_{i=1}^m \|X^{\frac 1 2}B_i\|^2
    \leq \sum_{i=1}^m c(q) \cdot \kappa^2 \delta^2 = m \cdot c(q) \kappa^2 \delta^2.
\end{align*}
Next, note that $\langle (\wt{A}-A) (\wt{A}-A)^{\top}, X \rangle = \langle B B^{\top}, X \rangle = \|X^{1/2} B\|_F^2 \le c(q) \cdot m  \kappa^2 \delta^2$.
\end{proof}

We will also use the following standard fact extensively.
\begin{fact}\label{fact:eig_trace_ineq}
For any two PSD matrices $A$ and $B$, $\lambda_{\min}(A) \cdot \tr(B) \le \iprod{A,B} \le \lambda_{\max}(A) \cdot \tr(B)$.
\end{fact}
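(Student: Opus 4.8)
The plan is to reduce the claim to the standard Rayleigh-quotient bounds for a single symmetric matrix, by diagonalizing the matrix $B$ that enters the inner product linearly. First I would use the symmetry of $A$ to write $\iprod{A,B} = \tr(A^\top B) = \tr(AB)$. Next, since $B$ is PSD, take its spectral decomposition $B = \sum_{i=1}^n \mu_i w_i w_i^\top$, where $w_1,\dots,w_n$ form an orthonormal basis and each $\mu_i \ge 0$. By linearity and cyclicity of the trace, $\tr(AB) = \sum_{i=1}^n \mu_i \tr(A w_i w_i^\top) = \sum_{i=1}^n \mu_i\, w_i^\top A w_i$.

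The key step is then that for every unit vector $w$ the Rayleigh quotient satisfies $\lambda_{\min}(A) \le w^\top A w \le \lambda_{\max}(A)$, which is immediate from the spectral decomposition of $A$. Since each coefficient $\mu_i$ is nonnegative, multiplying each inequality by $\mu_i$ and summing preserves both directions, giving
\[
\lambda_{\min}(A) \sum_{i=1}^n \mu_i \;\le\; \sum_{i=1}^n \mu_i\, w_i^\top A w_i \;\le\; \lambda_{\max}(A) \sum_{i=1}^n \mu_i .
\]
Finally $\sum_{i=1}^n \mu_i = \tr(B)$, which yields the claimed two-sided bound $\lambda_{\min}(A)\,\tr(B) \le \iprod{A,B} \le \lambda_{\max}(A)\,\tr(B)$.

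There is essentially no serious obstacle: the statement is a routine consequence of using one spectral decomposition together with the nonnegativity of the eigenvalues of the other matrix. The only points needing (minor) care are choosing to diagonalize $B$ rather than $A$, so that the bounding constants come out as eigenvalues of $A$ while the residual sum collapses to $\tr(B)$; and observing that it is precisely the nonnegativity of the $\mu_i$ (i.e., that $B$, and not merely $A$, is PSD) that lets the inequalities be scaled and added termwise without reversing.
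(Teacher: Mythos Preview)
Your argument is correct. The paper takes a slightly different but equally short route: it writes $\iprod{A,B}=\tr(AB)=\|A^{1/2}B^{1/2}\|_F^2$ and then bounds each column contribution $\|A^{1/2}x\|_2^2$ between $\lambda_{\min}(A)\|x\|_2^2$ and $\lambda_{\max}(A)\|x\|_2^2$, so that the sum collapses to $\lambda_{\min}(A)\|B^{1/2}\|_F^2$ and $\lambda_{\max}(A)\|B^{1/2}\|_F^2$, i.e., $\lambda_{\min}(A)\tr(B)$ and $\lambda_{\max}(A)\tr(B)$. Your version diagonalizes $B$ explicitly and applies the Rayleigh-quotient bound eigenvector by eigenvector; the paper's version hides the same Rayleigh-quotient step inside the operator bound $\lambda_{\min}(A)I\preceq A\preceq\lambda_{\max}(A)I$ applied to $A^{1/2}$ acting on the columns of $B^{1/2}$. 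Both use PSD-ness of $B$ in the same essential way (existence of $B^{1/2}$, respectively nonnegativity of the $\mu_i$), and neither buys anything the other does not; they are two phrasings of the same one-line fact.
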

\begin{proof}
We rewrite $\langle A,B \rangle = \|A^{1/2} B^{1/2}\|_F^2$, which is sandwiched by
$\lambda_{\min}(A^{1/2})^2 \cdot \|B^{1/2}\|_F^2 = \lambda_{\min}(A) \cdot \tr(B)$
and $\lambda_{\max}(A^{1/2})^2 \cdot \|B^{1/2}\|_F^2 = \lambda_{\max}(A) \cdot \tr(B)$.
\end{proof}

We will use the following standard concentration bound on the moments of the covariance matrix of Gaussian random variables (see Lemma 8.12 in \cite{ACCV} for a proof).

\begin{lemma}
\label{lem:covariance_concentration}
Let $A_1, A_2, \dots, A_m \in \mathbb{R}^n$ be generated i.i.d. from $\mathcal{N}(0, \Sigma^*)$. Let $A$ be the $n \times m$ matrix with the columns being the points $A_i$. Then with probability at least $1-\frac{1}{\poly(n)}$
\begin{align}
    \Bignorm{\frac{1}{m} AA^\top - \E[AA^\top]}_\infty &\leq c\frac{\|\Sigma\| \sqrt{\log n}}{\sqrt{m}} \text{ and } 
\|\frac{1}{m} A A^{\top} - \Sigma^*\|_q \le c \frac{\|\Sigma\| \cdot n^{2/q} \sqrt{\log n}}{\sqrt{m}}.
\end{align}
\end{lemma}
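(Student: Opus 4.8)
The plan is to reduce everything to a single entrywise concentration estimate. Writing $E := \frac{1}{m}AA^\top - \Sigma^*$ (I use $\Sigma = \Sigma^*$ and $\|\Sigma^*\|$ for the operator norm $\lambda_{\max}(\Sigma^*)$ throughout), the second inequality follows immediately from the first: for any $M \in \R^{n\times n}$ the entrywise norms satisfy $\|M\|_q = \big(\sum_{i,j}|M_{ij}|^q\big)^{1/q} \le (n^2)^{1/q}\,\|M\|_\infty = n^{2/q}\|M\|_\infty$. So it suffices to show that $\|E\|_\infty \le c\,\|\Sigma^*\|\sqrt{(\log n)/m}$ with probability $1 - 1/\poly(n)$.

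First I would fix a pair $i,j \in [n]$ and note that $E_{ij} = \frac{1}{m}\sum_{k=1}^m\big(A_k(i)A_k(j) - \Sigma^*_{ij}\big)$ is an average of $m$ i.i.d.\ centered random variables. To get a usable tail bound on the summand I would use the polarization identity $A_k(i)A_k(j) = \tfrac14\big[(A_k(i)+A_k(j))^2 - (A_k(i)-A_k(j))^2\big]$: since $(A_k(i),A_k(j))$ is jointly Gaussian, $A_k(i)\pm A_k(j)$ is a centered Gaussian of variance $\Sigma^*_{ii}+\Sigma^*_{jj}\pm 2\Sigma^*_{ij} \le 4\|\Sigma^*\|$ (using $|\Sigma^*_{ij}|\le\sqrt{\Sigma^*_{ii}\Sigma^*_{jj}}\le\|\Sigma^*\|$), so each square is a scaled $\chi^2_1$ variable, hence sub-exponential with sub-exponential parameter $O(\|\Sigma^*\|)$; consequently $A_k(i)A_k(j)-\Sigma^*_{ij}$ is sub-exponential with the same parameter and variance $O(\|\Sigma^*\|^2)$ (by Isserlis, $\mathrm{Var}(A_k(i)A_k(j)) = \Sigma^*_{ii}\Sigma^*_{jj} + (\Sigma^*_{ij})^2 \le 2\|\Sigma^*\|^2$). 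Alternatively one can quote Hanson--Wright directly since $A_k(i)A_k(j)$ is a quadratic form in a standard Gaussian.

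Next I would invoke Bernstein's inequality for sums of i.i.d.\ sub-exponential variables: there is an absolute constant $c_0>0$ with $\Pr[|E_{ij}|>t]\le 2\exp\!\big(-c_0 m\min\{t^2/\|\Sigma^*\|^2,\,t/\|\Sigma^*\|\}\big)$. Taking $t = c\,\|\Sigma^*\|\sqrt{(\log n)/m}$ for a sufficiently large constant $c$, the quadratic term is the minimizer (this is where we use $m = \Omega(\log n)$), and the probability is at most $n^{-10}$. A union bound over the $n^2$ pairs $(i,j)$ then gives $\|E\|_\infty \le c\,\|\Sigma^*\|\sqrt{(\log n)/m}$ except with probability $n^{-8}$, proving the first inequality; combined with $\|E\|_q \le n^{2/q}\|E\|_\infty$ this proves the second.

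The hard part here is essentially bookkeeping: one must make sure $t$ lands in the sub-Gaussian regime of the Bernstein bound rather than its heavier-tailed linear regime, which is exactly the condition $m \gtrsim \log n$ --- harmless in all our applications, where $m$ is taken to be a large polynomial in $n$. Everything else is a standard covariance concentration estimate; a fully worked version appears as Lemma~8.12 of \cite{ACCV}, which we may also simply cite.
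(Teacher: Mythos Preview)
Your proposal is correct and is exactly the standard argument one would expect; the paper itself does not prove this lemma but simply cites Lemma~8.12 of \cite{ACCV}, which you already flag as an option in your final sentence. Your entrywise Bernstein-plus-union-bound argument is the standard route to that cited result, so there is nothing to compare.
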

We now proceed to the proof of the main lemma that upper bounds $|\iprod{E,X}|$. 

\begin{proofof}{Lemma~\ref{lem:sdp_recovery_gen}}
Using that fact that $\E[A]=0$ and $B=\wt{A}-A$ we rewrite
\begin{align*}
    E&=\frac{1}{m}(A+B)(A+B)^\top - \E[AA^\top]\\
    &= \frac{1}{m} \Big( BB^\top +B (A-\E[A])^{\top} + (A-\E[A])B^{\top}+  AA^\top \Big) - \E[AA^\top].
\end{align*}
Hence we get that
\begin{equation}\label{eq:comp_error}
    |\iprod{E,X}| \leq \underbrace{\frac{1}{m} \iprod{BB^T, X}}_{T_1} + \underbrace{\frac{2}{m} \left|\Bigiprod{(A-\E[A])B^T, X}\right|}_{T_2} +  \underbrace{\left|\Bigiprod{\frac{1}{m}AA^\top  - \E[AA^\top], X}\right|}_{T_3}.
\end{equation}
Next we separately bound each of the terms above. Using Claim~\ref{clm:X_to_half_perturbation}, 
\begin{align*}
    T_1=\frac{1}{m} \iprod{BB^T, X} = \frac{1}{m} \|X^{1/2} B\|^2_F \leq  c(q) \kappa^2 \delta^2. 
\end{align*}

Using the concentration bound from Lemma~\ref{lem:covariance_concentration} on $\|\frac{1}{m}AA^\top  - \E[AA^\top]\|_q$, $t_3$ can be bounded as
\begin{align*}
    T_3=\Bigiprod{\frac{1}{m}AA^\top  - \E[AA^\top], X} &\leq \|\frac{1}{m}AA^\top  - \E[AA^\top]\|_q \cdot \|X\|_{q^*}\\
    &=O\Big( \frac{\lambda_{\max}(\Sigma^*) \cdot n^{2/q} \sqrt{\log n} \cdot  r\kappa^2}{\sqrt{m}} \Big).
\end{align*}

The second term $T_2$ in \eqref{eq:comp_error} is the crucial term to upper bound, and contributes the dominant term of $\sqrt{\lambda_1 r} \kappa \delta$ in the guarantees of Theorem~\ref{thm:comp_upper_gen}. A naive upper bound on $T_2$ can be obtained by $|\iprod{M_1, M_2}| \le \norm{M_1}_{q^*} \norm{M_2}_{q}$ as we did for $T_3$, but this leads to sub-optimal bounds that are off by factors involving $r$. The following technical claim which is a restatement of Lemma~\ref{lem:mean_with_X_bound} from Section~\ref{sec:comp_upper} crucially uses the constraint on $\norm{X}_{q \to q^*}$. Its proof is deferred to  Appendix~\ref{sec:comp_upper_app}. 
\begin{lemma}
\label{lem:mean_with_X_bound_app}
Let $A_1, A_2, \dots, A_m \in \mathbb{R}^n$ be generated i.i.d. from $\mathcal{N}(\mu, \Sigma^*)$. Let $A$ be the $n \times m$ matrix with the columns being the points $A_i$. Let $X$ be a solution to the SDP in program \eqref{sdp:spike} and let $B$ be any matrix, potentially chosen based on $A$, with $\|B_j\|_q \leq \delta ~ \forall j \in [m]$. Then with probability at least $1-\frac{1}{\poly(n)}$ we have that
\begin{align}
\frac 1 m \Big|\iprod{(A-\E[A])B^T, X} \Big| &\leq O(\sqrt{r \|\Sigma^*\|} \kappa \delta) + O(\kappa^2 \delta^2) + O\Big(\frac{r\kappa^2\|\Sigma^*\| \sqrt{\log n} \cdot n^{2/q}}{\sqrt{m}}\Big).
\end{align}
\end{lemma}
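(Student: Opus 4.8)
\textbf{Proof proposal for Lemma~\ref{lem:mean_with_X_bound_app}.}

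The plan is to write $X = \sum_{i=1}^r \sigma_i(X) w_i w_i^\top$ via its eigendecomposition (recall $\mathrm{tr}(X) = r$, $0 \preceq X \preceq I$, so effectively at most $r$ dominant directions, though I will keep all nonzero eigenvalues), and expand
\[
\frac{1}{m}\iprod{(A-\E[A])B^\top, X} = \frac{1}{m}\sum_{j=1}^m \iprod{X^{1/2}(A_j - \mu),\, X^{1/2} B_j}.
\]
By Cauchy--Schwarz over the $m$ samples, this is at most $\frac{1}{m}\big(\sum_j \norm{X^{1/2}(A_j-\mu)}_2^2\big)^{1/2}\big(\sum_j \norm{X^{1/2}B_j}_2^2\big)^{1/2}$. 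The second factor is controlled by Claim~\ref{clm:X_to_half_perturbation}: $\sum_j \norm{X^{1/2}B_j}_2^2 = \iprod{BB^\top,X} \le c(q)m\kappa^2\delta^2$. So the naive route would give $O(\kappa\delta)\cdot\sqrt{\frac{1}{m}\sum_j \norm{X^{1/2}(A_j-\mu)}_2^2}$, and since $\E \norm{X^{1/2}(A_j-\mu)}_2^2 = \iprod{\Sigma^*, X} \le \lambda_{\max}(\Sigma^*)\,\mathrm{tr}(X) = r\lambda_{\max}$ (Fact~\ref{fact:eig_trace_ineq}), we would morally expect $O(\sqrt{r\lambda_{\max}}\,\kappa\delta)$ — but only if $\frac{1}{m}\sum_j \norm{X^{1/2}(A_j-\mu)}_2^2$ concentrates around its mean. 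That concentration is the heart of the matter, because $B_j$ can be chosen adversarially depending on $A$, so I cannot just take a fresh expectation; I need a uniform (over all feasible $X$) high-probability bound on $\frac{1}{m}\sum_j \norm{X^{1/2}(A_j-\mu)}_2^2 = \iprod{\frac{1}{m}\sum_j (A_j-\mu)(A_j-\mu)^\top,\, X} = \iprod{\frac{1}{m}\wt{A}_c\wt{A}_c^\top, X}$ where $\wt A_c$ is the centered data.

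Concretely, I would bound $\iprod{\frac{1}{m}\sum_j(A_j-\mu)(A_j-\mu)^\top, X} \le \iprod{\Sigma^*,X} + \big|\iprod{\frac{1}{m}\sum_j(A_j-\mu)(A_j-\mu)^\top - \Sigma^*,\, X}\big|$. The first term is $\le r\lambda_{\max}$ as above. For the second (a recentered version of the empirical-covariance deviation), apply the standard concentration bound Lemma~\ref{lem:covariance_concentration}: $\norm{\frac{1}{m}\sum_j(A_j-\mu)(A_j-\mu)^\top - \Sigma^*}_q \le c\,\lambda_{\max} n^{2/q}\sqrt{\log n}/\sqrt m$, and pair it with H\"older $\iprod{M,X}\le \norm{M}_q\norm{X}_{q^*}\le \norm{M}_q\cdot r\kappa^2$ using the SDP constraint \eqref{sdp:totalnorm}. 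This yields $\frac{1}{m}\sum_j \norm{X^{1/2}(A_j-\mu)}_2^2 \le r\lambda_{\max} + O(r\kappa^2\lambda_{\max}n^{2/q}\sqrt{\log n}/\sqrt m)$ simultaneously for all feasible $X$, with high probability. Substituting into the Cauchy--Schwarz bound above gives
\[
\frac{1}{m}\bigabs{\iprod{(A-\E[A])B^\top,X}} \le O(\kappa\delta)\cdot\sqrt{r\lambda_{\max} + O\big(r\kappa^2\lambda_{\max}n^{2/q}\sqrt{\log n}/\sqrt m\big)},
\]
and using $\sqrt{a+b}\le\sqrt a + \sqrt b$ together with $\sqrt{ab}\le \frac12(a+b)$ on the cross term $\kappa\delta\cdot\sqrt{r\kappa^2\lambda_{\max}n^{2/q}\sqrt{\log n}/\sqrt m} = \sqrt{\kappa^2\delta^2 \cdot r\kappa^2\lambda_{\max}n^{2/q}\sqrt{\log n}/\sqrt m}$, split it as $O(\kappa^2\delta^2) + O(r\kappa^2\lambda_{\max}n^{2/q}\sqrt{\log n}/\sqrt m)$, which reproduces exactly the three claimed terms $O(\sqrt{r\lambda_{\max}}\,\kappa\delta) + O(\kappa^2\delta^2) + O(r\kappa^2\lambda_{\max}n^{2/q}\sqrt{\log n}/\sqrt m)$.

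The main obstacle is getting the concentration of $\iprod{\frac1m \wt A_c \wt A_c^\top, X}$ to hold \emph{uniformly} over all feasible $X$ rather than for a fixed $X$, since $B$ depends on $A$; the clean way around this is precisely to route everything through the deviation $\norm{\frac1m \wt A_c\wt A_c^\top - \Sigma^*}_q$ (a quantity not depending on $X$ at all) and then pay the $\norm{X}_{q^*}\le r\kappa^2$ factor via H\"older — which is why the SDP constraint \eqref{sdp:totalnorm} is built in. A secondary subtlety is that centering at the \emph{true} mean $\mu$ (rather than the empirical mean) is what makes $\E[A_j-\mu]=0$ exactly, so Lemma~\ref{lem:covariance_concentration} applies verbatim to the centered points; this is consistent with how the lemma is invoked later (the symmetrization step in Algorithm~\ref{algo:covariance} makes the effective mean zero anyway). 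I would double-check that the constant $c(q)$ from Claim~\ref{clm:X_to_half_perturbation} and the constant from Lemma~\ref{lem:covariance_concentration} are absorbed into the $O(\cdot)$, and that the failure probabilities ($1/\poly(n)$ each) combine by a union bound.
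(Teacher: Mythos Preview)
Your proposal is correct and follows essentially the same route as the paper's proof. The paper phrases the first step as the matrix Cauchy--Schwarz $\iprod{M_1 M_2, Q_1 Q_2} \le \norm{M_1^\top Q_1}_F \norm{M_2 Q_2^\top}_F$ with $Q_1=Q_2=X^{1/2}$, which is exactly your sample-wise Cauchy--Schwarz written in Frobenius form; it then bounds $\norm{X^{1/2}B}_F$ via Claim~\ref{clm:X_to_half_perturbation}, bounds $\norm{(A-\E[A])^\top X^{1/2}}_F^2=\iprod{\tfrac1m\text{(centered empirical covariance)},X}\cdot m$ by splitting into $\iprod{\Sigma^*,X}\le r\lambda_{\max}$ (Fact~\ref{fact:eig_trace_ineq}) plus the deviation paired against $\norm{X}_{q^*}\le r\kappa^2$ via H\"older and Lemma~\ref{lem:covariance_concentration}, and finishes with $\sqrt{a+b}\le\sqrt a+\sqrt b$ and AM--GM---precisely your argument. (Your opening eigendecomposition remark is a harmless aside, since $X$ need not have rank $r$, but you never actually use it.)
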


Combining the above bounds and using the fact that $\norm{X}_{q \to q^*} \leq c(q) \kappa^2$ we get that
\begin{align*}
\Delta \le c(q) \cdot O\Big(\kappa^2 \delta^2 + \sqrt{ r\lambda_{\max}(\Sigma^*)}\kappa \delta + \frac{r\kappa^2 \lambda_{\max}(\Sigma^*) \sqrt{\log n} \cdot n^{2/q}}{\sqrt{m}}\Big).
\end{align*}
\end{proofof}



\subsection{Bounding Correlation with the Subspace [Proof of Claim~\ref{clm:distance_X_proj}]}\label{sec:distance_X_proj}
In this section we provide the proof of Claim~\ref{clm:distance_X_proj}. For convenience, let $\eps$ denote the gap $\eps:=r - \langle X, \Pi^* \rangle$. Hence the goal is to show $\eps \le (\langle \Pi^*, \Sigma^* \rangle - \langle X,\Sigma^* \rangle)/(\lambda_r - \lambda_{r-1})$. To show this we will obtain an upper bound $\iprod{X, \Sigma^*}$ in terms of $\eps, (\lambda_r - \lambda_{r+1})$ and $\iprod{\Pi^*, \Sigma^*}$.

Given the eigen-decomposition $\Sigma^*=\sum_{i=1}^n \lambda_i v_i v_i^{\top}$ with $\lambda_1 \ge \cdots \ge \lambda_n$, we define $\Sigma_{\bg}=\sum_{i=1}^r \lambda_i v_i v_i^{\top}$ and $\Sigma_{\sm}=\sum_{i=r+1}^n \lambda_i v_i v_i^{\top}$. Note $\langle \Pi^*, \Sigma^* \rangle=\langle \Pi^*,\Sigma_{\bg} \rangle = \tr(\Sigma_{\bg})$. We will upper bound $\langle X, \Sigma^* \rangle$ as $\langle X, \Sigma_\bg + \Sigma_{\sm} \rangle$ given $\langle X, \Pi^* \rangle = r - \eps$. Let $V=[v_1,\ldots,v_n]$ denote the matrix with columns being the eigenvectors of $\Sigma^*$. For convenience, we rewrite 
\begin{align*}
\Sigma^* & =\sum_{j=1}^n \lambda_j v_j v_j^{\top} = V   \diag[\lambda_1,\ldots,\lambda_n]  V^{\top}, \\ 
\Sigma_{\bg} & = V   \diag[\lambda_1,\ldots,\lambda_r,0,\ldots,]  V^{\top}, \\
\Pi^* & = V  \diag[1,\ldots,1,0,\ldots,]  V^{\top}.
\end{align*}
The above implies that
\[
r-\eps = \langle X, \Pi^* \rangle = \langle X, V  \diag[1,\ldots,1,0,\ldots,]  V^{\top} \rangle = \langle X', \diag[1,\ldots,1,0,\ldots,] \rangle 
\]
where $X'=V X V^{\top}$. Similarly, $\langle X, \Sigma^* \rangle = \langle X', \diag[\lambda_1,\ldots,\lambda_r,0,\ldots] \rangle$. Since $X'$ also satisfies $0 \preceq X' \preceq I$, we have that
\[
\langle X, \Sigma_{\bg} \rangle = \langle X', \diag[\lambda_1,\ldots,\lambda_r,0,\ldots] \rangle \le \tr(\Sigma_\bg) - \eps \cdot \lambda_{r}
\] 
as $\langle X', \diag[1,\dots,1,0,\dots,0] \rangle = r - \eps$. Similarly, we have $\langle X', \diag[0,\dots,0,1,\dots,1] \rangle = \eps$, so 
 \[
\langle X, \Sigma_{\sm} \rangle = \langle X', \diag[0,\ldots,0,\lambda_{r+1},\ldots,\lambda_n] \rangle \le \eps \cdot \lambda_{r+1}.
\]
 The above two bounds show that
\[
\langle X,\Sigma^* \rangle \le \tr(\Sigma_\bg)-\eps \lambda_r + \eps \lambda_{r+1}.
\]
Hence we get that
\begin{align*}
\langle X, \Sigma^* \rangle &\le \langle \Pi^*, \Sigma^* \rangle - \eps(\lambda_r- \lambda_{r+1}),\\
\eps &\le \frac{\langle \Sigma, \Pi^* \rangle - \langle \Sigma^*, X \rangle}{\lambda_{r}-\lambda_{r+1}}.
\end{align*}


\subsection{Covariance Matrix Recovery}\label{sec:recover_covariance}
We end the section by showing how to recover a good approximation to the top-$r$ subspace of $\Sigma^*$ given a good approximation to $\Pi^*$. As stated before this is formalized in Theorem~\ref{thm:recover_top_covariance} which we prove below. We first state the following standard fact to bound the Frobenius error of the covariance estimation (see Theorem~4.7.1 in \cite{Vershynin} for a proof).
\begin{fact}\label{fact:covariance_est}
Let $\Sigma^*$ be a covariance matrix of rank $r$ and largest eigenvalue $\lambda_1$. For any $m$, and vectors $A_1,\ldots,A_m \sim N(0,\Sigma^*)$, it holds with probability at least $1-10^{-3}$, that $\|\wt{\Sigma}-\Sigma^*\| = \lambda_1 \cdot  O(\sqrt{r/m}+r/m)$ for $\wt{\Sigma}=\frac{1}{m} \sum_i A_i A_i^{\top}$. Moreover, if $m=O(\lambda_1^2 r^2/\beta)$ then with prob. at least $1-10^{-3}$, $\|\wt{\Sigma}-\Sigma\|_{F}^2 \le \beta$ for $\beta<r$.
\end{fact}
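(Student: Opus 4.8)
The plan is to reduce the claim to the classical non-asymptotic bound on the empirical covariance of isotropic Gaussian vectors, and then pay a factor of $\lambda_1$ (for the operator norm) or $\sqrt{r}\,\lambda_1$ (for the Frobenius norm) to account for the scaling by $\Sigma^*$.

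\textbf{Step 1 (reduction to the isotropic case).} Write the eigendecomposition $\Sigma^* = U \Lambda U^\top$, where $U \in \R^{n \times r}$ has orthonormal columns and $\Lambda = \diag(\lambda_1, \dots, \lambda_r)$. Since $\Sigma^*$ has rank $r$, each sample $A_i \sim N(0,\Sigma^*)$ can be written as $A_i = U \Lambda^{1/2} z_i$ with $z_i \sim N(0, I_r)$ i.i.d. Setting $W := \tfrac{1}{m}\sum_{i=1}^m z_i z_i^\top$, the empirical covariance becomes $\wt{\Sigma} = U \Lambda^{1/2} W \Lambda^{1/2} U^\top$, so that $\wt{\Sigma} - \Sigma^* = U \Lambda^{1/2}(W - I_r)\Lambda^{1/2} U^\top$. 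Since $U$ has orthonormal columns and $\|\Lambda^{1/2}\| = \sqrt{\lambda_1}$, this immediately gives
\[ \|\wt{\Sigma} - \Sigma^*\| \le \lambda_1 \|W - I_r\| \qquad\text{and}\qquad \|\wt{\Sigma} - \Sigma^*\|_F \le \lambda_1 \|W - I_r\|_F \le \sqrt{r}\,\lambda_1 \|W - I_r\|, \]
where the last inequality uses $\mathrm{rank}(W - I_r) \le r$.

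\textbf{Step 2 (concentration of $\|W - I_r\|$).} This is the only substantive ingredient and is the standard bound for sample covariance of Gaussians (e.g.\ Theorem~4.6.1 / Corollary~4.7.3 in \cite{Vershynin}). Sketch: by a $\tfrac14$-net $\calN$ of the unit sphere $S^{r-1}$ with $|\calN| \le 9^r$, one has $\|W - I_r\| \le 2\max_{x \in \calN} |x^\top(W - I_r)x|$; for each fixed unit $x$, $x^\top(W - I_r)x = \tfrac{1}{m}\sum_i\big((x^\top z_i)^2 - 1\big)$ is an average of i.i.d.\ centered sub-exponential variables, so Bernstein's inequality gives $\Pr[|x^\top(W - I_r)x| > t] \le 2\exp(-c\,m\min(t^2,t))$. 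A union bound over $\calN$ together with the choice $t = C(\sqrt{r/m} + r/m)$ yields, with probability at least $1 - 10^{-3}$, $\|W - I_r\| = O(\sqrt{r/m} + r/m)$. Combined with Step~1 this proves the operator-norm claim $\|\wt{\Sigma} - \Sigma^*\| = \lambda_1 \cdot O(\sqrt{r/m} + r/m)$.

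\textbf{Step 3 (Frobenius bound and conclusion).} Plugging the Step~2 bound into the Frobenius estimate of Step~1, on the same high-probability event,
\[ \|\wt{\Sigma} - \Sigma^*\|_F^2 \le r\,\lambda_1^2\,\|W - I_r\|^2 = \lambda_1^2 r \cdot O\!\big(r/m + r^2/m^2\big). \]
In the regime of the statement (where $\lambda_1 \ge 1$ and $\beta < r$), the condition $m \ge C\lambda_1^2 r^2/\beta$ forces $m \ge r$, so the $r^2/m^2$ term is dominated by the $r/m$ term, and the right-hand side is at most $\beta$ for a suitable absolute constant $C$. This gives the claimed Frobenius bound. \textbf{Main obstacle.} Essentially all the content is in Step~2, the classical sub-exponential concentration of a Gaussian sample covariance matrix; Steps~1 and~3 are routine algebra (the reduction to $r$ dimensions via rank-$r$-ness of $\Sigma^*$, and the $\|\cdot\|_F \le \sqrt{r}\,\|\cdot\|$ bound). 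Since the statement is recorded as a Fact with a textbook reference, it is legitimate to invoke Step~2 as a black box, but the $\epsilon$-net plus Bernstein argument sketched above makes the proof self-contained.
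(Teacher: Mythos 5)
Your proposal is correct and takes essentially the same route the paper does: the paper records this as a Fact and refers to Theorem~4.7.1 (equivalently Theorem~4.6.1/Corollary~4.7.3) in \cite{Vershynin}, which is exactly the operator-norm bound you invoke in Step~2, while your Steps~1 and~3 are the routine algebra (pass to the isotropic $r$-dimensional picture via $A_i = U\Lambda^{1/2}z_i$, and use $\|M\|_F \le \sqrt{r}\,\|M\|$ for $r\times r$ matrices) needed to get from there to the stated operator-norm and Frobenius claims. You are right that the Frobenius conclusion implicitly needs $m \ge r$ (so the $r^2/m^2$ term is dominated), which is consistent with the regime $m = \Omega(\lambda_1^2 r^2/\beta)$ in which the paper applies the Fact.
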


We also use the following lemma showing how to recover a good approximation to the top-$r$ subspace of $\Sigma^*$ in the absence of noise.

\begin{lemma}\label{lem:recover_sigma}
For any covariance matrix $\Sigma^*=\sum_{i=1}^n \lambda_i v_i v_i^{\top}$ with eigenvalues $\lambda_1 \ge \lambda_2 \ge \cdots \ge \lambda_n$, let $\Sigma_{\bg}=\sum_{i=1}^r \lambda_i v_i v_i^{\top}$ and $\Pi^*$ be the projection matrix on to the top $r$ eigenspace of $\Sigma^*$. Given any rank $r$ projection matrix $\Pi$ with $\|\Pi^*-\Pi\|_F^2 \le \eps$, and $m = \Omega( \lambda_1^2 \cdot r^2 )$, we have that with probability at least $1-10^{-3}$, $\|\wt{\Sigma} - \Sigma_{\bg} \|_F^2 =O( \lambda_1^2 \cdot \eps+\frac{\lambda_1^2 r^2}{m})$ for $\wt{\Sigma}=\Pi (\frac{1}{m} \sum_{i=1}^m A_i A_i^{\top}) \Pi$ and $A_1, \dots, A_m$ are generated i.i.d. from $N(0,\Sigma^*)$.
\end{lemma}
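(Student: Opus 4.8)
The plan is to split the estimation error into a \emph{sampling-error} term and a \emph{bias} (approximation) term and control each separately. Writing $\wh{\Sigma} := \frac1m \sum_{i=1}^m A_i A_i^\top$ for the empirical covariance and using that $\Pi^* \Sigma^* \Pi^* = \Sigma_\bg$ (since $\Pi^*$ projects onto the top-$r$ eigenspace of $\Sigma^*$), I would decompose
\begin{equation*}
\wt{\Sigma} - \Sigma_\bg = \big(\Pi \wh{\Sigma} \Pi - \Pi \Sigma^* \Pi\big) + \big(\Pi \Sigma^* \Pi - \Pi^* \Sigma^* \Pi^*\big),
\end{equation*}
so that by the triangle inequality and $(a+b)^2 \le 2a^2 + 2b^2$ it suffices to bound the Frobenius norms of the two terms by $O(\lambda_1 r/\sqrt{m})$ and $O(\lambda_1\sqrt{\eps})$ respectively.

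For the sampling-error term, the key observation is that $\Pi \wh{\Sigma} \Pi = \frac1m\sum_{i=1}^m (\Pi A_i)(\Pi A_i)^\top$ is exactly the empirical covariance of the projected samples $\Pi A_1, \dots, \Pi A_m$, which are i.i.d.\ from $\calN(0, \Pi\Sigma^*\Pi)$. This Gaussian is supported on $\mathrm{range}(\Pi)$, a subspace of dimension $r$, and has largest eigenvalue at most $\|\Sigma^*\| = \lambda_1$. Working inside $\mathrm{range}(\Pi)$ and applying Fact~\ref{fact:covariance_est} with effective dimension $r$ and eigenvalue bound $\lambda_1$, I would conclude that for $m = \Omega(\lambda_1^2 r^2)$ (which in particular keeps the target accuracy below $r$), with probability at least $1-10^{-3}$,
\begin{equation*}
\|\Pi\wh{\Sigma}\Pi - \Pi\Sigma^*\Pi\|_F^2 = O\!\left(\frac{\lambda_1^2 r^2}{m}\right).
\end{equation*}
Projecting before concentrating is essential: bounding $\|\wh{\Sigma}-\Sigma^*\|$ in the ambient space would need $m \gtrsim n$, whereas after projection the relevant covariance is effectively $r$-dimensional.

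For the bias term, I would use $\Pi^*\Sigma^*\Pi^* = \Sigma_\bg$ together with the identity
\begin{equation*}
\Pi\Sigma^*\Pi - \Pi^*\Sigma^*\Pi^* = \Pi\Sigma^*(\Pi - \Pi^*) + (\Pi - \Pi^*)\Sigma^*\Pi^*,
\end{equation*}
and then apply $\|MN\|_F \le \|M\|\,\|N\|_F$, $\|MN\|_F \le \|M\|_F\,\|N\|$, $\|\Pi\Sigma^*\| \le \lambda_1$ and $\|\Sigma^*\Pi^*\| \le \lambda_1$ to get
\begin{equation*}
\|\Pi\Sigma^*\Pi - \Sigma_\bg\|_F \le \|\Pi\Sigma^*\|\,\|\Pi-\Pi^*\|_F + \|\Pi-\Pi^*\|_F\,\|\Sigma^*\Pi^*\| \le 2\lambda_1 \|\Pi - \Pi^*\|_F \le 2\lambda_1\sqrt{\eps}.
\end{equation*}
Combining the two bounds gives $\|\wt{\Sigma}-\Sigma_\bg\|_F^2 = O(\lambda_1^2\eps + \lambda_1^2 r^2/m)$, as claimed.

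The step needing the most care is the sampling bound: one must invoke the covariance-concentration fact with the effective dimension $r$ of the projected data rather than the ambient dimension $n$, and accommodate the possibility that $\Pi\Sigma^*\Pi$ is degenerate by arguing entirely within the $r$-dimensional subspace $\mathrm{range}(\Pi)$ (on which $\Pi A_i$ is a genuine, possibly lower-rank, Gaussian with top eigenvalue at most $\lambda_1$). Everything else is routine sub-multiplicativity of operator and Frobenius norms and the triangle inequality, so I do not anticipate any further difficulty.
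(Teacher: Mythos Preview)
Your proof is correct and follows essentially the same strategy as the paper: split $\wt{\Sigma}-\Sigma_\bg$ into a sampling term and a bias term, handle the sampling term by applying Fact~\ref{fact:covariance_est} to the $r$-dimensional projected samples $\Pi A_i \sim \calN(0,\Pi\Sigma^*\Pi)$, and bound the bias via submultiplicativity of operator and Frobenius norms. The only difference is cosmetic: the paper routes the bias through the intermediate point $\Pi\Sigma_\bg\Pi$ (which forces it to separately bound $\|\Pi\Sigma_{\sm}\Pi\|_F$), whereas you go directly through $\Pi\Sigma^*\Pi$ using the telescoping identity $\Pi\Sigma^*\Pi-\Pi^*\Sigma^*\Pi^* = \Pi\Sigma^*(\Pi-\Pi^*)+(\Pi-\Pi^*)\Sigma^*\Pi^*$, which is slightly more streamlined but yields the same $O(\lambda_1^2\eps)$ bound.
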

The above lemma is an extension of Lemma 8.2 in \citep{ACCV}. For completeness, we provide the proof in Appendix~\ref{sec:comp_upper_app}. 
Next we establish Theorem~\ref{thm:recover_top_covariance} showing covariance recovery in the presence of adversarial perturbations.

\begin{proofof}{Theorem~\ref{thm:recover_top_covariance}}
For the estimate 
$\Pi (\frac{1}{m} \sum_{i=1}^m \wt{A}_i \wt{A}_i^{\top}) \Pi$ output by the algorithm we have that
\begin{align*}
    & \|\Pi (\frac{1}{m} \sum_{i=1}^m \wt{A}_i \wt{A}_i^{\top}) \Pi - \Pi^* \Sigma^* \Pi^*\|^2_F\\
    \le & 2 \|\Pi (\frac{1}{m} \sum_{i=1}^m \wt{A}_i \wt{A}_i^{\top}) \Pi - \Pi (\frac{1}{m} \sum_{i=1}^m A_i A_i^{\top}) \Pi\|^2_F + 2\|\Pi (\frac{1}{m} \sum_{i=1}^m A_i A_i^{\top}) \Pi - \Pi^* \Sigma^* \Pi^*\|^2_F\\
    \le & 2\|\Pi (\frac{1}{m} \sum_{i=1}^m \wt{A}_i \wt{A}_i^{\top}) \Pi - \Pi (\frac{1}{m} \sum_{i=1}^m A_i A_i^{\top}) \Pi\|^2_F + O(\lambda_1^2 \eps + \frac{\lambda_1^2 r^2}{m}),
\end{align*}
where we use Lemma~\ref{lem:recover_sigma} in the last step. Let $\wt{A}_i=A_i+B_i$ such that $B_i$ is the perturbation of the $i$th data point. We can rewrite the first term above as
\begin{align*}
& \|\Pi (\frac{1}{m} \sum_{i=1}^m \wt{A}_i \wt{A}_i^{\top}) \Pi - \Pi (\frac{1}{m} \sum_{i=1}^m A_i A_i^{\top}) \Pi\|_F \\
= & \|\Pi (\frac{1}{m} \sum_{i=1}^m (A_i + B_i) (A_i + B_i)^{\top} \Pi - \Pi (\frac{1}{m} \sum_{i=1}^m A_i A_i^{\top}) \Pi\|_F\\
\le & \|\Pi (\frac{1}{m} \sum_{i=1}^m B_i B_i^{\top}) \Pi\|_F + \| \Pi (\frac{1}{m} \sum_{i=1}^m A_i B_i^{\top}) \Pi\|_F + \| \Pi (\frac{1}{m} \sum_{i=1}^m B_i A_i^{\top}) \Pi\|_F.
\end{align*}
Now we bound each Frobenius norm separately. For the first term we have
\[\|\Pi (\frac{1}{m} \sum_{i=1}^m B_i B_i^{\top}) \Pi\|_F \le \frac{1}{m} \sum_{i=1}^m \| \Pi B_i B_i^{\top} \Pi \|_F = O(\kappa^2 \delta^2)
\]
where we have used the fact that $\Pi B_i$ is a vector of norm at most $O(\kappa \delta)$. We bound the second term $\| \Pi (\frac{1}{m} \sum_{i=1}^m A_i B_i^{\top}) \Pi\|_F$ (and similary for the third one), by
\[
\frac{1}{m} \|\Pi A\| \cdot \|B^{\top} \Pi\|_F \le \frac{1}{m} \cdot \sqrt{\lambda_1 m} \cdot O(1 + \sqrt{r/m})  \cdot \sqrt{m} \kappa \delta = \sqrt{\lambda_1} \cdot O(\sqrt{r/m}+1) \kappa \delta
\]
where we bound $\|B^{\top} \Pi\|_F^2 \le \sqrt{m} \kappa \delta$ from the above bound on $\|\Pi (\frac{1}{m} \sum_{i=1}^m B_i B_i^{\top}) \Pi\|_F$ and $\|\Pi A\| \le \sqrt{\lambda_1 m} \cdot (1 + \sqrt{r/m})$ as follows: rank$(\Pi A)=r$ and Fact~\ref{fact:covariance_est} implies that with probability at least  $1-10^{-3}$, $\|\Pi \frac{1}{m} A A^{\top} \Pi - \E[(\Pi A)\cdot (\Pi A)^{\top}]\| \le O(\lambda_1 \cdot \sqrt{r/m})$. Since $\| \E[(\Pi A)\cdot (\Pi A)^{\top}] \| \le \|\E[A A^{\top}]\| \le \lambda_1$, $\|\Pi \frac{1}{m} A A^{\top} \Pi\| \le \lambda_1 + \lambda_1 \cdot O(\sqrt{r/m})$.
\pnote{I don't understand the two lines above.}

Combining the above bounds we get that $\|\Pi (\frac{1}{m} \sum_{i=1}^m \wt{A}_i \wt{A}_i^{\top}) \Pi - \Pi^* \Sigma^* \Pi^*\|^2_F$ can be bounded by
\[
O(\lambda_1^2 \eps + \lambda_1^2 \cdot r^2/m) + O(\kappa^4 \delta^4) + O(\lambda_1 \cdot (1+r/m) \cdot \kappa^2 \delta^2).
\]
\end{proofof}

\section{Additional Proofs from Section~\ref{sec:comp_upper}}
\label{sec:comp_upper_app}
\begin{proofof}{Lemma~\ref{lem:mean_with_X_bound_app}}
We use the fact that for matrices $M_1, M_2, Q_1,$ and $Q_2$, it holds that
$$
\iprod{M_1 M_2, Q_1 Q_2} \leq \|M_1^\top Q_1\|_F \|M_2 Q_2^\top\|_F
$$
to rewrite
\begin{align*}
    \frac{1}{m}\iprod{(A-\E[A])B^T, X} &= \frac{1}{m}\iprod{(A-\E[A])B^T, X^{\frac 1 2}X^{\frac 1 2}}\\
    &\leq \frac{1}{m} \|(A-\E[A])^\top X^{\frac 1 2}\|_F \|X^{\frac 1 2} B \|_F
\end{align*}

By Claim~\ref{clm:X_to_half_perturbation}, $\|X^{\frac 1 2} B \|_F \le \sqrt{m} \kappa \delta$. Note that $
\|(A-\E[A])^\top X^{\frac 1 2}\|^2_F= \langle A A^{\top} ,X\rangle$ given $\E[A]=0$. Then we split it into 
\[
\langle A A^{\top} ,X\rangle = \langle A A^{\top} - m \cdot E[A A^{\top}], X\rangle + \langle m \cdot \E[A A^{\top}], X \rangle = O\left( r\kappa^2 \cdot \|\Sigma^*\| \sqrt{\log n} \cdot n^{2/q} \cdot \sqrt{m} + \|\Sigma^*\| \cdot r m \right),
\]
where the first bound comes from the above proof of Lemma~\ref{lem:sdp_recovery_gen} for the last term in \eqref{eq:comp_error} and the second bound comes from Fact~\ref{fact:eig_trace_ineq}.

We finish the proof by combining the above bounds:
\begin{align*}
    \frac{2}{m}\iprod{(A-\E[A])B^T, X} &\leq \frac{1}{m} \cdot \sqrt{m} \kappa \delta \cdot O\left( \|\Sigma^*\| \cdot rm + r\kappa^2 \cdot \|\Sigma^*\| \sqrt{\log n} \cdot n^{2/q} \sqrt{m} \right)^{1/2} \\
    & = O(\sqrt{ r \|\Sigma^*\|}\kappa \delta) + \kappa \delta \cdot O\left(\frac{ \|\Sigma^*\| r \kappa^2 \sqrt{\log n} \cdot  n^{2/q}}{\sqrt{m}}\right)^{1/2}\\
    & \le O(\sqrt{ r\|\Sigma^*\|}\kappa \delta) + O(\kappa^2 \delta^2) + O\left( \frac{\|\Sigma^*\| r \kappa^2 \sqrt{\log n} \cdot n^{2/q}}{\sqrt{m}} \right).
\end{align*}
\end{proofof}

\subsection{Proof of Lemma~\ref{lem:recover_sigma}}\label{sec:proof_recover_sigma}
We will use the following fact to apply triangle inequality.
\begin{fact}\label{fact:recover_sigma}
Given a rank $r$ covariance matrix $\Sigma^*$ with all eigenvalues upper bounded by $\lambda_{\max}$ and projection matrix $\Pi^*$, for any rank $r$ projection $\Pi$ with $\|\Pi^*-\Pi\|_F^2 \le \eps$ and any $\wt{\Sigma}$ (not necessarily rank $r$), we have 
\[
\|\Sigma^*-\Pi \wt{\Sigma} \Pi \|_F^2 \le 8 \lambda^2_{\max} \cdot \eps+2 \| \Pi \Sigma^* \Pi\ - \Pi \wt{\Sigma} \Pi \|_F^2.
\]
\end{fact}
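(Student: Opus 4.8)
The plan is to reduce everything to two applications of the triangle inequality. The key starting observation is that since $\Sigma^*$ has rank $r$ and $\Pi^*$ is the orthogonal projector onto its range (the top-$r$ eigenspace, which is all of $\mathrm{range}(\Sigma^*)$ here), we have the identity $\Sigma^* = \Pi^* \Sigma^* \Pi^*$. Hence I would write
$$\Sigma^* - \Pi \wt{\Sigma} \Pi = \big(\Pi^*\Sigma^*\Pi^* - \Pi \Sigma^* \Pi\big) + \big(\Pi \Sigma^* \Pi - \Pi \wt{\Sigma} \Pi\big),$$
and apply $(a+b)^2 \le 2a^2 + 2b^2$ with the Frobenius norm. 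The second summand is exactly the term $\|\Pi \Sigma^* \Pi - \Pi \wt{\Sigma}\Pi\|_F$ appearing on the right-hand side, so it remains only to show $\|\Pi^*\Sigma^*\Pi^* - \Pi\Sigma^*\Pi\|_F^2 \le 4\lambda_{\max}^2 \eps$, which combined with the factor $2$ gives the claimed $8\lambda_{\max}^2\eps$.

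For that bound I would telescope the difference through the matrix $\Pi \Sigma^* \Pi^*$:
$$\Pi^*\Sigma^*\Pi^* - \Pi\Sigma^*\Pi = (\Pi^* - \Pi)\,\Sigma^*\Pi^* + \Pi\Sigma^*\,(\Pi^* - \Pi).$$
Then I use submultiplicativity of the Frobenius norm against the operator norm, i.e. $\|MN\|_F \le \|M\|_F\,\|N\|_{2\to 2}$ and $\|MN\|_F \le \|M\|_{2\to 2}\,\|N\|_F$, together with $\|\Sigma^*\Pi^*\|_{2\to 2} \le \|\Sigma^*\|_{2\to 2} = \lambda_{\max}$ and $\|\Pi\Sigma^*\|_{2\to 2} \le \|\Sigma^*\|_{2\to 2} = \lambda_{\max}$ (here $\|\Pi\|_{2\to 2}=1$ is absorbed into these estimates). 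This bounds each of the two summands by $\lambda_{\max}\|\Pi^* - \Pi\|_F$, so by the triangle inequality $\|\Pi^*\Sigma^*\Pi^* - \Pi\Sigma^*\Pi\|_F \le 2\lambda_{\max}\|\Pi^* - \Pi\|_F$; squaring and using $\|\Pi^* - \Pi\|_F^2 \le \eps$ completes the proof.

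There is no genuine obstacle here — the argument is purely norm manipulations. The only things to watch are (i) tracking the constants so the bound comes out exactly as $8\lambda_{\max}^2\eps + 2\|\Pi\Sigma^*\Pi - \Pi\wt{\Sigma}\Pi\|_F^2$ and not something larger, and (ii) applying the operator-norm submultiplicativity to the correct factor in each of the two telescoped terms so that $\Pi$ (with $\|\Pi\|_{2\to2}=1$) and $\Pi^*$ drop out cleanly. Choosing the intermediate matrix $\Pi\Sigma^*\Pi^*$ for the telescoping is what makes this bookkeeping painless; telescoping in a less symmetric way would still work but would introduce more cross terms to control.
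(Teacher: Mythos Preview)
Your proposal is correct and follows essentially the same route as the paper: both split $\Sigma^* - \Pi\wt{\Sigma}\Pi$ through $\Pi\Sigma^*\Pi$, use $\Sigma^* = \Pi^*\Sigma^*\Pi^*$, and then telescope $\Pi^*\Sigma^*\Pi^* - \Pi\Sigma^*\Pi$ through the intermediate $\Pi\Sigma^*\Pi^*$ with operator-norm submultiplicativity. The only cosmetic difference is that the paper applies $(a+b)^2 \le 2a^2+2b^2$ to the telescoped pair rather than the triangle inequality followed by squaring, but both yield the identical bound $4\lambda_{\max}^2\eps$ for that piece.
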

\begin{proof}
At first, we have $\|\Sigma^* - \Pi \wt{\Sigma} \Pi \|_F^2 \le 2 \|\Sigma^* - \Pi \Sigma^* \Pi\|_F^2 + 2 \| \Pi \Sigma^* \Pi\ - \Pi \wt{\Sigma} \Pi \|_F^2$.

Since $\Pi^*$ is the projection matrix of $\Sigma^*$, we have
\[
\|\Sigma^* - \Pi \Sigma^* \Pi \|_F^2 = \|\Pi^* \Sigma^* \Pi^* - \Pi \Sigma^* \Pi \|_F^2 \le 2 ( \|\Pi^* \Sigma^* \Pi^* -\Pi \Sigma^* \Pi^* \|_F^2 + \|\Pi \Sigma^* \Pi^* -\Pi \Sigma^* \Pi \|_F^2 ).
\]
Since $\|AB\|_{F}^2 \le \|A\|^2_{op} \cdot \|B\|_F^2$, we further simplify it as 
\[
2 (\|\Pi^*-\Pi\|^2_F \cdot \|\Sigma^*\|^2_{op} + \|\Sigma^*\|^2_{op} \cdot \|\Pi^*-\Pi\|^2_F) \le 4 \lambda^2_{\max} \cdot \eps.
\]
\end{proof}

We finish the proof of Lemma~\ref{lem:recover_sigma}.

\begin{proofof}{Lemma~\ref{lem:recover_sigma}}
Given $A_i \sim N(0,\Sigma^*)$, we know $\Pi A_i$ is a random vector generated by $N\big( 0, \Pi \Sigma^* \Pi \big)$. So we apply Fact~\ref{fact:covariance_est} to bound $\|\Pi \Sigma \Pi - \Pi (\frac{1}{m} \sum_{i=1}^m A_i A_i^{\top}) \Pi \|_{F}^2 \le \delta$. Then we consider $\Sigma_{\bg}$:
\[
\|  \Pi \Sigma \Pi - \Pi \Sigma_{\bg} \Pi\ \|_F = \| \Pi \Sigma_{\sm} \Pi \|_F \le \| (\Pi-\Pi^*) \Sigma_{\sm} \Pi \|_F + \| \Pi^* \Sigma_{\sm} \Pi \|_F.
\]
Since $\Pi^*$ is the projection matrix of $\Sigma_{\bg}$, $\Pi^* \Sigma_{\sm}=0$ such that the second term becomes 0. For the first term $\| (\Pi-\Pi^*) \Sigma_{\sm} \Pi \|_F$, we upper bound it by 
\[
\| (\Pi-\Pi^*) \Sigma_{\sm} \Pi \|_F \le \|\Pi-\Pi^*\|_F \cdot \|\Sigma_{\sm}\|_{op} \cdot \|\Pi\|_{op} \le \lambda_1 \cdot \sqrt{\eps}.
\]
From the above discussion, we have 
\[
\|\Pi \Sigma_{\bg} \Pi - \Pi (\frac{1}{m} \sum_{i=1}^m A_i A_i^{\top}) \Pi \|_{F}^2 \le 2\|\Pi \Sigma \Pi - \Pi (\frac{1}{m} \sum_{i=1}^m A_i A_i^{\top}) \Pi \|_{F}^2 + 2\| (\Pi-\Pi^*) \Sigma_{\sm} \Pi \|^2_F = O( \delta + \lambda_1^2 \eps ).
\]
The final bound follows from Fact~\ref{fact:recover_sigma} with $\Sigma^*=\Sigma_{\bg}$ there.
\end{proofof}

\section{Statistical Lower Bound on the Error and Instance-Optimality} \label{app:lb}

\subsection{Auxiliary claims and Proofs.} \label{app:lb:aux}

\begin{proof}[Proof of Lemma~\ref{lem:lb:uproperties}]
By construction $u_1, \dots, u_r$ have disjoint supports, and for each $\ell \in [r]$, $\Pi^* \Pi^{\perp}_\ell=0$; hence $\Pi^* u_\ell=0$. 
We now show \eqref{eq:lb:uproperties:1}. Note that $\Pi^{\perp}_\ell g_\ell$ is distributed according to the Gaussian $\calN(0, \Pi^{\perp}_\ell)$. Hence $\E[\norm{\Pi^{\perp}_\ell g_\ell}_2^2]= \tr(\Pi^{\perp}_\ell) = d_\ell$.  For a fixed $\ell \in [r]$, using concentration bounds for $\chi^2$ distributions we have for any $t>0$
\begin{align*}
\Pr\Big[\big| \norm{u_\ell}_2^2 - 1 \big| > 2\sqrt{\frac{t}{d_\ell}} + 2 \frac{t}{d_\ell} \Big] & = \Pr\Big[\big| \norm{\Pi^{\perp}_\ell g_\ell}_2^2 - d_\ell \big| > 2\sqrt{d_\ell t } + 2 t \Big] \le \exp(-t).
\end{align*}
Substituting $t=\log(r/\eta)$, along with $d_\ell \ge k'-r \ge k'/2$ and a union bound over all $\ell \in [r]$ establishes \eqref{eq:lb:uproperties:1}. Then the last property of $\|u_{\ell}\|_1 \le 2 \sqrt{k'}$ follows from the Cauchy-Schwartz inequality with the fact that the support size of $u_{\ell}$ is at most $k'$.

Now we upper bound $\norm{u_\ell}_\infty$. For each coordinate $i$ and $\ell$, \[u_{\ell}(i)=\frac{1}{\sqrt{d_{\ell}}} \iprod{\Pi^{\perp}_\ell(i,:), g_\ell} \text{ where } \Pi^{\perp}_\ell(i,:) \text{ represents the $i$th row of } \Pi^{\perp}_\ell.\] This is a Gaussian random variable. 
Hence for a fixed $\ell \in [r]$, with probability at least  $1-\frac{\eta}{r}$,
\begin{align*}
    \norm{u_\ell}_\infty & = \frac{1}{\sqrt{d_{\ell}}} \max_{i \in [n]} |\iprod{\Pi^{\perp}_\ell(i,:), g_\ell}| \le 2 \sqrt{\log(rk'/\eta)} \cdot \frac{\max_{i \in [n]}\norm{\Pi^{\perp}_\ell(i,:)} }{\sqrt{d_{\ell}}} \le 2\sqrt{ \log(rk'/\eta)} \cdot \frac{1}{\sqrt{k'/2}},
\end{align*}
since $\Pi^{\perp}_\ell$ is an orthogonal projection matrix. After a union bound over $\ell \in [r]$, \eqref{eq:lb:uproperties:2} follows. 
\end{proof}

\begin{proof}[Proof of Lemma~\ref{lem:lb:helper}]
The proof just uses norm duality and relations between different norms.  
\begin{align*}
\Bignorm{\sum_\ell u_\ell v_\ell^\top }_{q \to q^*}&= \max_{\substack{x,y: \norm{x}_q \le 1\\ \norm{y}_q \le 1}} \sum_{\ell=1}^r \iprod{x,u_\ell} \iprod{v_\ell, y} \le \sum_\ell  \norm{u_\ell}_{q^*} |\iprod{v_\ell, y}| \\
& \le \max_\ell \norm{u_\ell}_{q^*} \cdot \sum_\ell   |\iprod{v_\ell, y}| = \max_{\ell \in [r]} \norm{u_\ell}_{q^*} \cdot \norm{V^\top}_{q \to q^*}  \\
&= \max_{\ell \in [r]} \norm{u_\ell}_{q^*} \cdot \norm{V}_{q \to q^*} \le  r^{1/2-1/q} \norm{V}_{q \to 2} \cdot \max_{\ell \in [r]} \norm{u_\ell}_{q^*}.
\end{align*}
The last inequality follows since $\norm{Vy}_{q^*} \le r^{1/2-1/q} \norm{Vy}_2$ for any $y \in \R^n$ since $V$ has $r$ columns (see Section~\ref{sec:prelims}).

For the second statement, we have $\norm{UU^\top}_{q \to q^*}= \norm{U^\top}_{q \to 2}^2=\norm{U}_{2 \to q^*}^2$ using the variational characterization of operator norms and norm duality (see Section~\ref{sec:prelims}). We now upper bound $\norm{U}_{2 \to q^*}$. Consider any $y \in \R^r$ with $\norm{y}_2=1$. Then because of the disjoint supports of the columns of $U$ 
\begin{align*}
    \norm{Uy}_{q^*}^{q^*}&= \Big(\sum_{\ell=1}^r |y_\ell|^{q^*} \norm{u_\ell}_{q^*}^{q^*} \Big) \le  \max_{\ell \in [r]} \norm{u_\ell}_{q^*}^{q^*} \cdot \norm{y}_{q^*}^{q^{*}} \\
    \norm{Uy}_{q^*}&\le \norm{y}_{q^*} \cdot \max_{\ell \in [r]} \norm{u_\ell}_{q^*} \le r^{1/q^* - 1/2} \norm{y}_2 \cdot \max_{\ell \in [r]} \norm{u_\ell}_{q^*} \le r^{1/2-1/q} \max_{\ell \in [r]} \norm{u_\ell}_{q^*}. 
\end{align*}
Hence the lemma holds. 
\end{proof}


The following simple lemma will be in upper bounding the magnitude of the perturbation for each sample point.
\begin{lemma}\label{lem:helper:perturb}
Given any $u_1, \dots, u_r \in \R^n$ with disjoint support, and any $\alpha_1, \dots, \alpha_r \in \R$,  we have
$$ \Bignorm{\sum_{\ell=1}^r \alpha_\ell u_\ell }_q \le r^{1/q} \max_{\ell \in [r]} |\alpha_\ell| \norm{u_\ell}_{q} .$$
\end{lemma}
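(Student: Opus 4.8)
The plan is to exploit the disjoint supports, which make the $\ell_q$ norm behave additively across the $r$ pieces. Since the sets $\text{support}(u_\ell)$ for $\ell \in [r]$ are pairwise disjoint, for every coordinate $i \in [n]$ at most one of the terms $\alpha_\ell u_\ell(i)$ is nonzero. Hence
\[
\Bignorm{\sum_{\ell=1}^r \alpha_\ell u_\ell}_q^q = \sum_{i=1}^n \Big| \sum_{\ell=1}^r \alpha_\ell u_\ell(i) \Big|^q = \sum_{\ell=1}^r \sum_{i \in \text{support}(u_\ell)} |\alpha_\ell|^q |u_\ell(i)|^q = \sum_{\ell=1}^r |\alpha_\ell|^q \, \norm{u_\ell}_q^q .
\]
The next step is to bound each summand by the maximum one: $|\alpha_\ell|^q \norm{u_\ell}_q^q \le \big(\max_{\ell' \in [r]} |\alpha_{\ell'}| \, \norm{u_{\ell'}}_q\big)^q$, so the sum is at most $r \cdot \big(\max_{\ell \in [r]} |\alpha_\ell| \, \norm{u_\ell}_q\big)^q$. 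Taking $q$-th roots yields $\norm{\sum_\ell \alpha_\ell u_\ell}_q \le r^{1/q} \max_{\ell \in [r]} |\alpha_\ell| \, \norm{u_\ell}_q$, as claimed. (For $q = \infty$ the first display reads $\norm{\sum_\ell \alpha_\ell u_\ell}_\infty = \max_\ell |\alpha_\ell|\,\norm{u_\ell}_\infty$, consistent with $r^{1/\infty}=1$.)

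There is no real obstacle here: the statement is elementary. The only point worth flagging is that disjointness of the supports is essential for the first display — without it one could only invoke Minkowski's inequality and the triangle inequality, which would lose an additional factor of up to $r^{1-1/q}$, giving the much weaker bound $r \max_\ell |\alpha_\ell|\,\norm{u_\ell}_q$. In the application (the construction in Section~\ref{sec:lower_bound}) the $u_\ell$ are built precisely to have disjoint supports $S_1,\dots,S_r$, so this lemma applies directly with $\alpha_\ell = \sqrt{\lambda_\ell}\,\zeta_\ell^{(j)} \cdot \tfrac{\sqrt{2\eps-\eps^2}}{(1-\eps)\norm{u_\ell}_2}$, and the factor $r^{1/q}$ (which is $1$ for $q=\infty$) is exactly what keeps the per-sample perturbation within the allotted budget $\delta$.
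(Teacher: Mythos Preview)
Your proof is correct and follows exactly the same approach as the paper's: use disjoint supports to write $\bignorm{\sum_\ell \alpha_\ell u_\ell}_q^q = \sum_\ell |\alpha_\ell|^q \norm{u_\ell}_q^q$, bound by $r$ times the maximum term, and take $q$-th roots. The paper's proof is just the one-line version of what you wrote.
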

\begin{proof}
Since $u_1, \dots, u_r$ have disjoint support,
\begin{align*}
          \Bignorm{\sum_{\ell=1}^r \alpha_\ell u_\ell }_q^q &= \sum_{\ell=1}^r |\alpha_\ell|^q \norm{u_\ell}_q^q \le r \Big(\max_{\ell \in [r]} |\alpha_\ell| \norm{u_\ell}_q \Big)^q, \text{   as required.} 
\end{align*}
\end{proof}

The following lemma is also useful to upper bound the $\infty \to 2$ operator norm of the alternate subspace projector $\Pi'$. 
\begin{lemma}\label{lem:lb:helper}
Given any vectors $u_1, \dots, u_r$ and vectors $v_1, \dots, v_r$ that form the columns of $U, V \in \R^{n \times r}$ separately, then for any $q \ge 1$
\begin{equation}\Bignorm{UV^\top}_{q \to q^*} \le \norm{V}_{q \to q^*} \Big(\max_{\ell \in [r]} \norm{u_\ell}_{q^*} \Big) \le r^{1/2-1/q} \norm{V}_{q \to 2} \Big(\max_{\ell \in [r]} \norm{u_\ell}_{q^*} \Big)  .\label{eq:helper:1}\end{equation}
Moreover if $u_1, \dots, u_r$ have disjoint support then
\begin{equation}\norm{ UU^\top}_{q \to q^*} = \norm{U}_{2 \to q^*}^2 \le r^{1-2/q} \Big(\max_{\ell \in [r]} \norm{u_\ell}_{q^*}^2 \Big)  .\label{eq:helper:2}\end{equation}
\end{lemma}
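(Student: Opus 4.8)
The plan is to obtain both inequalities from the bilinear (variational) description of operator norms in Fact~\ref{fact:dual_norms}, together with H\"older's inequality and the elementary comparison $\|z\|_{q^*} \le r^{1/q^* - 1/2}\|z\|_2 = r^{1/2-1/q}\|z\|_2$ valid for every $z \in \R^r$ (this is the corollary of H\"older recorded in Section~\ref{sec:prelims}, using that $q^* \le 2$ when $q \ge 2$).

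For \eqref{eq:helper:1}, I would first write $UV^\top = \sum_{\ell=1}^r u_\ell v_\ell^\top$ and, using Fact~\ref{fact:dual_norms} with $(q^*)^* = q$, express $\|UV^\top\|_{q\to q^*}$ as the maximum of $x^\top U V^\top y = \sum_\ell \langle x, u_\ell\rangle\langle v_\ell, y\rangle$ over $\|x\|_q, \|y\|_q \le 1$. H\"older gives $|\langle x, u_\ell\rangle| \le \|u_\ell\|_{q^*}\|x\|_q \le \max_{\ell'}\|u_{\ell'}\|_{q^*}$, so the quantity is at most $(\max_\ell\|u_\ell\|_{q^*})\sum_\ell |\langle v_\ell, y\rangle|$; the residual factor $\max_{\|y\|_q\le 1}\sum_\ell|\langle v_\ell, y\rangle|$ is then identified with $\|V^\top\|_{q\to q^*} = \|V\|_{q\to q^*}$, which gives the first inequality. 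The second inequality follows by applying the $\ell_{q^*}$-versus-$\ell_2$ comparison to the $r$-dimensional image of $V^\top$, i.e.\ $\|V\|_{q\to q^*} \le r^{1/2-1/q}\|V\|_{q\to 2}$.

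For \eqref{eq:helper:2}, the identity is immediate: taking $A = U^\top$ in the relation $\|A^\top A\|_{q\to q^*} = \|A\|_{q\to 2}^2$ of Fact~\ref{fact:dual_norms} and using $\|U^\top\|_{q\to 2} = \|U\|_{2\to q^*}$ gives $\|UU^\top\|_{q\to q^*} = \|U\|_{2\to q^*}^2$. For the numerical bound I would use the disjoint-supports hypothesis: for $y \in \R^r$ with $\|y\|_2 \le 1$, the vector $Uy = \sum_\ell y_\ell u_\ell$ is supported on the disjoint union $\bigcup_\ell \mathrm{supp}(u_\ell)$, hence $\|Uy\|_{q^*}^{q^*} = \sum_\ell |y_\ell|^{q^*}\|u_\ell\|_{q^*}^{q^*} \le (\max_\ell\|u_\ell\|_{q^*})^{q^*}\|y\|_{q^*}^{q^*}$, so $\|Uy\|_{q^*} \le (\max_\ell\|u_\ell\|_{q^*})\|y\|_{q^*} \le r^{1/2-1/q}\max_\ell\|u_\ell\|_{q^*}$; squaring yields \eqref{eq:helper:2}.

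The main obstacle is purely bookkeeping: one has to track carefully which operator norm (and whether of $U, V$ or of a transpose) appears at each step, invoking $\|A\|_{p\to q} = \|A^\top\|_{q^*\to p^*}$ each time, and --- crucially --- charge every dimension-dependent factor to the column count $r$ rather than to the ambient dimension $n$. This is what forces the appeal to the ``$V$ has only $r$ columns'' observation in \eqref{eq:helper:1} and to the disjoint-support hypothesis in \eqref{eq:helper:2}; the latter is essential because it converts $\|U\|_{2\to q^*}$ into a genuine coordinatewise sum, so no cross terms among the $u_\ell$ survive.
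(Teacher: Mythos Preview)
Your proposal is essentially identical to the paper's own proof: for \eqref{eq:helper:1} both arguments expand the bilinear form $\sum_\ell \iprod{x,u_\ell}\iprod{v_\ell,y}$, apply H\"older to extract $\max_\ell\norm{u_\ell}_{q^*}$, identify the remaining supremum with $\norm{V^\top}_{q\to q^*}=\norm{V}_{q\to q^*}$, and then pass to $\norm{V}_{q\to 2}$ via the $\ell_{q^*}$--$\ell_2$ comparison in $\R^r$; for \eqref{eq:helper:2} both use $\norm{UU^\top}_{q\to q^*}=\norm{U}_{2\to q^*}^2$ from Fact~\ref{fact:dual_norms} and then exploit disjoint supports to compute $\norm{Uy}_{q^*}^{q^*}=\sum_\ell |y_\ell|^{q^*}\norm{u_\ell}_{q^*}^{q^*}$ exactly as you do.
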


\subsection{Warmup: Min-max lower bound}\label{sec:proof_min_max}

We now give a min-max optimal lower bound. While  Theorem~\ref{thm:inf_recover_lower} is much more general, we include this argument since it is simpler and helps build intuition, and also gives the correct dependence on the eigengap. The lower bound will apply for $\Sigma^*= \theta \Pi^* + I$; hence $\Sigma_\bg=(1+\theta) \Pi^*$ and $\Sigma_\sm= (I-\Pi^*)= (\Pi^*)^{\perp}$.    
\begin{theorem}\label{thm:minmax_lower}
Suppose we are given parameters $n$, $m$, $\theta>0$, $r\in \N$, $\kappa$, and $\delta>0$ satisfying $\sqrt{r\lambda_1}(\tfrac{\kappa}{n}) <\delta \le \sqrt{r \theta}/\kappa$. 
There exist orthogonal projection matrices $\Pi^*, \Pi'$ both of rank $r$ with $\norm{\Pi^*}_{\infty \to 2} \le \kappa$ and $\norm{\Pi'}_{\infty \to 2} \le \kappa$ such that:
\begin{itemize}
\item We have the coupling data matrices $A$ and $A' \in \R^{n \times m}$ with their columns generated i.i.d. from  $\calN(0,I+\theta \Pi^*)$ and $\calN(0, I+\theta \Pi')$ respectively, such that $\norm{A - A'} \le \delta$ with high probability. 
\item $\norm{\Pi' - \Pi^*}_F^2 = \Omega\big( \frac{1}{\sqrt{\theta}} \cdot \sqrt{r} \delta \kappa/\log nm \big)$.
\end{itemize}
\end{theorem}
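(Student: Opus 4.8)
\emph{Strategy.} The plan is to produce two rank-$r$ subspaces whose associated Gaussians $\calN(0,I+\theta\Pi^*)$ and $\calN(0,I+\theta\Pi')$ can be coupled so that corresponding samples agree up to $\delta$ in $\ell_\infty$, while the projectors are far apart in squared Frobenius norm and both have $\infty\to 2$ norm at most $\kappa$. First I would fix a maximally spread ground truth: take $k$ a power of two in $[\kappa^2/3,\,2\kappa^2/3]$, let $S=\{1,\dots,k\}$, and let $v_1,\dots,v_r$ be $r$ orthonormal Fourier characters supported on $S$ (so each entry on $S$ is $\pm 1/\sqrt k$ and $\norm{v_\ell}_\infty=1/\sqrt k$), giving $\Pi^*=\sum_\ell v_\ell v_\ell^\top$. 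For the perturbation I would choose $k'$ a power of two in $[\tfrac14,\tfrac12]\cdot\sqrt\theta\,\kappa/(\delta\sqrt r)$ — this interval is nonempty and satisfies $rk'+k\le n$ thanks to the hypotheses (using $\lambda_1=1+\theta\ge\theta$ and $\delta>\sqrt{r\lambda_1}\,\kappa/n$) — and take unit Fourier-character vectors $u_1,\dots,u_r$ on $r$ pairwise-disjoint blocks of $k'$ coordinates inside $[n]\setminus S$, so that $\norm{u_\ell}_\infty=1/\sqrt{k'}$, $\norm{u_\ell}_1=\sqrt{k'}$, and $u_1,\dots,u_r$ are orthonormal and orthogonal to every $v_\ell$. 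Setting $\eps:=c_4\,\delta\kappa/(\sqrt{r\theta}\log nm)$ for a small absolute constant $c_4$ (so $\eps\le 1/10$ since $\delta\le\sqrt{r\theta}/\kappa$), I put $v'_\ell:=(1-\eps)v_\ell+\sqrt{2\eps-\eps^2}\,u_\ell$ (these remain orthonormal) and $\Pi'=\sum_\ell v'_\ell(v'_\ell)^\top$. The data is coupled by drawing $\zeta_1,\dots,\zeta_r\sim\calN(0,\theta)$ and $g\sim\calN(0,I)$ and setting $A_j=\sum_\ell\zeta_\ell v_\ell+g$, $A'_j=\sum_\ell\zeta_\ell v'_\ell+g$, which are distributed as $\calN(0,I+\theta\Pi^*)$ and $\calN(0,I+\theta\Pi')$ respectively.

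Next I would dispatch the two ``soft'' claims. Orthogonality of $\{v_\ell\}\cup\{u_\ell\}$ makes the rank-one pieces of $\Pi'-\Pi^*=\sum_\ell\bigl[(-2\eps+\eps^2)v_\ell v_\ell^\top+(2\eps-\eps^2)u_\ell u_\ell^\top+\sqrt{2\eps-\eps^2}(1-\eps)(v_\ell u_\ell^\top+u_\ell v_\ell^\top)\bigr]$ mutually orthogonal in the trace inner product, so $\norm{\Pi'-\Pi^*}_F^2\ge\sum_\ell 2(2\eps-\eps^2)(1-\eps)^2\ge r(4\eps-2\eps^2)=\Omega(r\eps)=\Omega\bigl(\tfrac{1}{\sqrt\theta}\sqrt r\,\delta\kappa/\log nm\bigr)$, which is the claimed gap. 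For the coupling, $A'_j-A_j=\sum_\ell\zeta_\ell(-\eps v_\ell+\sqrt{2\eps-\eps^2}\,u_\ell)$, hence $\norm{A_j-A'_j}_\infty\le\eps\norm{\sum_\ell\zeta_\ell v_\ell}_\infty+\sqrt{2\eps}\,\norm{\sum_\ell\zeta_\ell u_\ell}_\infty$; each coordinate of $\sum_\ell\zeta_\ell v_\ell$ is a centered Gaussian of variance $\theta\sum_\ell v_\ell(i)^2=O(\theta r/\kappa^2)$, and of $\sum_\ell\zeta_\ell u_\ell$ of variance $\le\theta/k'$ by disjointness, so a Gaussian maximal inequality over the $n$ coordinates and a union bound over the $m$ samples give, with high probability, $\norm{\sum_\ell\zeta_\ell v_\ell}_\infty=O(\sqrt{\theta r\log nm}/\kappa)$ and $\norm{\sum_\ell\zeta_\ell u_\ell}_\infty=O(\sqrt{\theta\log nm/k'})$. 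Substituting $\eps$ and $k'$ makes the first contribution $O(c_4\,\delta/\sqrt{\log nm})$ and the second $O(\sqrt{c_4}\,\delta)$, so both are at most $\delta/2$ once $c_4$ is small enough, i.e.\ $\norm{A-A'}_\infty\le\delta$.

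The main obstacle is showing $\norm{\Pi'}_{\infty\to 2}\le\kappa$, since the perturbation could a priori inflate this norm. Here I would pass to the equivalent quantity $\norm{\Pi'}_{\infty\to 1}=\norm{\Pi'}_{\infty\to 2}^2$ via Fact~\ref{fact:dual_norms}: this is a genuine norm, so the triangle inequality applies, and it is monotone on the PSD cone by Lemma~\ref{lem:monotone}. Monotonicity lets me discard the negative-definite contribution, $\norm{\Pi'}_{\infty\to1}\le\norm{\Pi'+(2\eps-\eps^2)\Pi^*}_{\infty\to1}=\norm{\Pi^*+(2\eps-\eps^2)\sum_\ell u_\ell u_\ell^\top+\sqrt{2\eps-\eps^2}(1-\eps)\sum_\ell(u_\ell v_\ell^\top+v_\ell u_\ell^\top)}_{\infty\to1}$, and then the triangle inequality reduces the task to three estimates: $\norm{\Pi^*}_{\infty\to1}=\norm{\Pi^*}_{\infty\to2}^2\le k\le\tfrac23\kappa^2$ (Cauchy--Schwarz plus the support bound — this is exactly where the upper end of the range for $k$ provides slack); $(2\eps-\eps^2)\norm{\sum_\ell u_\ell u_\ell^\top}_{\infty\to1}\le 2\eps\cdot rk'$, using $\norm{\sum_\ell u_\ell u_\ell^\top}_{2\to1}\le\sqrt r\max_\ell\norm{u_\ell}_1=\sqrt{rk'}$ by disjointness; and $2\sqrt{2\eps}\,\norm{\sum_\ell u_\ell v_\ell^\top}_{\infty\to1}\le 2\sqrt{2\eps}\,\max_\ell\norm{u_\ell}_1\cdot\sqrt r\,\norm{\Pi^*}_{\infty\to2}=O(\sqrt{\eps\,rk'}\,\kappa)$, since $\sum_\ell|\iprod{v_\ell,y}|\le\sqrt r\,\norm{\Pi^* y}_2\le\sqrt r\,\kappa$ for $\norm{y}_\infty\le1$. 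The definition $\eps=c_4\delta\kappa/(\sqrt{r\theta}\log nm)$ together with $k'=\Theta(\sqrt\theta\,\kappa/(\delta\sqrt r))$ gives $\eps\,rk'=\Theta(\kappa^2/\log nm)$, so the second term is $O(c_4\kappa^2/\log nm)$ and the third is $O(\kappa^2/\sqrt{\log nm})$, both $o(\kappa^2)$; hence $\norm{\Pi'}_{\infty\to1}\le\tfrac23\kappa^2+o(\kappa^2)\le\kappa^2$ and $\norm{\Pi'}_{\infty\to2}\le\kappa$. Together with the two soft claims this proves Theorem~\ref{thm:minmax_lower}. The delicate point is the single choice of $\eps$: the $\log nm$ in its denominator is precisely what kills the perturbation terms in the $\infty\to1$ bound, while its numerator $\delta\kappa/\sqrt{r\theta}$ is what makes the Frobenius gap as large as claimed — and the whole construction is calibrated so that $k'$ and $\eps$ can simultaneously satisfy the indistinguishability and the norm constraints.
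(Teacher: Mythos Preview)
Your proposal is correct and follows essentially the same approach as the paper's proof: the same Fourier-character construction for $\Pi^*$ on $k\in[\kappa^2/3,2\kappa^2/3]$ coordinates, the same disjointly-supported perturbation vectors $u_\ell$ with $k'=\Theta(\sqrt\theta\,\kappa/(\delta\sqrt r))$, the same choice of $\eps$, and the same three verifications (Frobenius gap via orthogonality, $\ell_\infty$ coupling via Gaussian maxima, and the $\infty\to 2$ bound via monotonicity of the $\infty\to 1$ norm plus triangle inequality). Your treatment of the $\infty\to 1$ bound is if anything slightly cleaner in making explicit that monotonicity is used to discard the negative-definite $-(2\eps-\eps^2)\Pi^*$ piece before applying the triangle inequality.
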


We now prove the above theorem. 
We first show the constructions of $\Pi'$ and $A'$. Choose $k$ to be a power of 2 in $[\kappa^2/3,2\kappa^2/3]$. Let $S:=\{ 1, 2, \cdots, k \} \subset [n]$ and $v_1, v_2, \cdots, v_r$ be any $r$ orthonormal vectors of the form $v_{\ell}(i) = \pm 1/\sqrt{k}$ if $i \in S$ and $0$ otherwise. For example, there are $k$ Fourier characters $v_{\ell}$ in $\{0,1\}^{\log k}$ that are orthogonal to each other with $\|v_{\ell}\|_{\infty} \le 1/\sqrt{k}$: For each $i \in [k]$, let $\overrightarrow{i} \in \{0,1\}^{\log k}$ be its binary form. Then each Fourier character is $v_{\ell}(i)=(-1)^{\langle \overrightarrow{\ell},\overrightarrow{i} \rangle}/\sqrt{k}$. 

Let $k' \in [\frac{1}{4},\frac{1}{2}] \cdot \sqrt{\theta} \kappa/(\delta \sqrt{r})$ be a power of 2 to denote the support size of the perturbation vector. Let $u_1, \dots, u_r$ be unit vectors supported on a disjoint set of $k'$ coordinates each from $[n] \setminus S$ with $\norm{u_\ell}_\infty = 1/\sqrt{k'}$ for each $\ell \in [r]$ using the same construction of $v_1,\ldots,v_r$. Set $\eps:=c_4 \frac{\delta \kappa}{\sqrt{r \theta} \log (nm)}$ for some small constant $c_4>0$ such that $\eps \le 1/10$ from the parameters given in the statement. Finally, let
$$\forall \ell \in [r], ~~ v'_\ell := (1-\eps) v_\ell + \sqrt{2\eps - \eps^2} u_\ell,$$
and let $\Pi'$ be the orthogonal projection onto the subspace spanned by $v'_1, \dots, v'_r$. 
Now the original data point $A_j$ and its coupling data point $A'_j$ (for $j \in [m]$) for matrices $A,A'$ are drawn i.i.d. as follows:
\begin{align}
    A_j = \sum_{\ell=1}^r \zeta_\ell v_\ell + g, ~& \text{ and }  A'_j = \sum_{\ell=1}^r \zeta_\ell v'_\ell +g,\\
    \text{ where } \forall \ell \in [r], ~ \zeta_\ell \sim \calN(0,\theta) ~& \text{ and } g \sim \calN(0,I). 
\end{align} 
Then we bound the $\infty \to 2$ norm of $\Pi^*$ and $\Pi'$.
\begin{claim}\label{clm:min_max_infty}
$\|\Pi^*\|_{\infty \to 2} \le  \kappa$ and $\|\Pi'\|_{\infty \to 2} \le  \kappa$.
\end{claim}
\begin{proofof}{Claim~\ref{clm:min_max_infty}}
We have $\Pi^* = \sum_{\ell=1}^r v_\ell v_\ell^\top $, since $v_1, \dots, v_r$ is an orthonormal basis for the subspace given by $\Pi^*$, and
\[\norm{\Pi^*}_{\infty \to 2} = \norm{\Pi^*}_{2 \to 1} = \max_{y: \norm{y}_2 =1} \norm{\Pi^* y}_1 \le \sqrt{k} \norm{\Pi^* y}_2 \le \sqrt{k} \le \sqrt{\frac{2}{3}} \kappa, \label{eq:robustnessgroundtruth} \]
where the first inequality follows from Cauchy-Schwartz inequality and the support size being bounded by $k$. 
Now we compute the $\infty \to 1$ norm of $\Pi'$.
\begin{align}
    \Pi'&= \Pi^* +  \sum_{\ell \in [r]} (-2 \eps + \eps^2) v_\ell v_\ell^\top + (2\eps-\eps^2) \sum_\ell  u_\ell u_\ell^\top + \sqrt{2 \eps-\eps^2}(1-\eps) (v_\ell u_\ell^\top + u_\ell v_\ell^\top) \nonumber \\
    \norm{\Pi'}_{\infty \to 1} & \le \norm{\Pi^*}_{\infty \to 1} + 2\eps \norm{\sum_{\ell} u_\ell u_\ell^\top}_{\infty \to 1} + 2\sqrt{2\eps} \norm{\sum_{\ell} u_\ell v_\ell^\top}, \label{eq:robustnessaltn:1}
\end{align}
due to triangle inequality and using the monotonicity of the $\infty \to 1$ norm (Lemma~\ref{lem:monotone}).

For the second term, we note $\|\sum_{\ell} u_{\ell} u_{\ell}^{\top}\|_{2 \to 1} \le \sqrt{r} \cdot \max_{\ell} \|u_{\ell}\|_1 \le \sqrt{r k'}$.

We now bound the third term using \eqref{eq:helper:1} of Lemma~\ref{lem:lb:helper}.
\begin{align*}
\Bignorm{\sum_\ell u_\ell v_\ell^\top }_{\infty \to 1}&\le\sqrt{r } \cdot \norm{V}_{\infty \to 2} \cdot \max_{\ell} \norm{u_\ell}_1 \le \sqrt{r k'}  \cdot \sqrt{\frac{2}{3}} \kappa \le \frac{1}{\sqrt{3}} (r \theta)^{1/4} \sqrt{\frac{\kappa}{\delta}} \cdot \kappa
\end{align*}
given $k'\le \frac{1}{2} \cdot \frac{\sqrt{\theta} \kappa}{\delta\sqrt{r}}$. Hence substituting in \eqref{eq:robustnessaltn:1}, and using \eqref{eq:helper:2} we have
\begin{align*}
    \norm{\Pi'}_{\infty \to 1}  &\le \frac{2}{3}\kappa^2+ 2 \eps \cdot r \max_{\ell} \norm{u_\ell}_1^2 + \max_\ell \norm{u_\ell}_1 \cdot \norm{V}_{\infty \to 1} \le \kappa^2 + 2 \eps \cdot r \kappa' + \sqrt{8\eps/3} \cdot (\theta r)^{1/4} \sqrt{\frac{\kappa}{\delta}} \cdot \kappa \\
&\le \frac{2\kappa^2}{3}+2 \cdot O\Big(\frac{\delta \kappa}{\sqrt{r \theta} \log nm}\Big) \cdot r \cdot \frac{\sqrt{\theta} \kappa}{2 \delta \sqrt{r}} + \sqrt{8/3} \cdot \sqrt{\frac{\delta \kappa}{\sqrt{r \theta} \log nm} \cdot \sqrt{r \theta} \cdot \kappa/\delta} \cdot \kappa \le \kappa^2,
 \end{align*}
given $\eps =\Theta\Big(\frac{\delta \kappa}{\sqrt{r \theta} \cdot \log nm}\Big)$.
Hence $\norm{\Pi'}_{\infty \to 2} \le \kappa$.
\end{proofof}

\begin{claim}\label{clm:min_max_frob}
$\|\Pi^*-\Pi'\|_F^2 = \Omega(\frac{\sqrt{r} \cdot \delta \kappa}{\sqrt{\theta} \cdot \log nm})$.
\end{claim}

\begin{proofof}{Claim~\ref{clm:min_max_frob}}
We lower bound the distance between the projections using the orthogonality between $u_1,\ldots,u_r$ and $v_1,\ldots,v_r$:
\begin{align*}
    \Pi' - \Pi^*&= \sum_{\ell=1}^r v'_\ell (v'_\ell)^{\top} - v_\ell v_\ell^\top \\
    &= \sum_{\ell \in [r]} (-2 \eps + \eps^2) v_\ell v_\ell^\top + (2\eps-\eps^2) \sum_\ell  u_\ell u_\ell^\top + \sqrt{2 \eps-\eps^2}(1-\eps) (v_\ell u_\ell^\top + u_\ell v_\ell^\top) \\
    \text{So, }\norm{\Pi'-\Pi^*}_F^2 & \ge r (4 \eps - 2\eps^2) = \Omega\left( \frac{\sqrt{r} \delta \kappa}{\sqrt{\theta} \log nm} \right) . 
\end{align*}
\end{proofof}

\begin{claim}\label{clm:min_max_perturbation}
With high probability, the coupling data matrix $A'$ satisfies $\|A-A'\|_{\infty}\le \delta$.
\end{claim}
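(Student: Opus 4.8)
The plan is to expand $A_j-A'_j$ coordinate-wise and bound its $\ell_\infty$ norm by two Gaussian maximal inequalities — one for the ``$v$-part'' and one for the ``$u$-part'' — and then take a union bound over $j\in[m]$. Since $A_j=\sum_{\ell=1}^r \zeta_\ell v_\ell+g$ and $A'_j=\sum_{\ell=1}^r \zeta_\ell v'_\ell+g$ with $v'_\ell=(1-\eps)v_\ell+\sqrt{2\eps-\eps^2}\,u_\ell$, the shared Gaussian $g$ cancels, so I would first write
\[
A_j-A'_j=\sum_{\ell=1}^r \zeta_\ell\,(v_\ell-v'_\ell)=\eps\sum_{\ell=1}^r \zeta_\ell v_\ell-\sqrt{2\eps-\eps^2}\sum_{\ell=1}^r \zeta_\ell u_\ell,
\]
and then apply the triangle inequality to get $\norm{A_j-A'_j}_\infty\le \eps\,\bignorm{\sum_\ell \zeta_\ell v_\ell}_\infty+\sqrt{2\eps}\,\bignorm{\sum_\ell \zeta_\ell u_\ell}_\infty$.

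For the first term, $\sum_\ell \zeta_\ell v_\ell\sim\calN(0,\theta\Pi^*)$, so each coordinate is a centered Gaussian with variance $\theta(\Pi^*)_{ii}\le \theta r/k\le 3\theta r/\kappa^2$ (using $\norm{v_\ell}_\infty=1/\sqrt{k}$ and $k\ge \kappa^2/3$), and only the $k\le n$ coordinates in $S$ are nonzero. A union bound over those coordinates applied to the standard Gaussian tail bound gives $\bignorm{\sum_\ell \zeta_\ell v_\ell}_\infty \le O\bigl(\sqrt{\theta r\log(nm)}/\kappa\bigr)$ with probability at least $1-(nm)^{-2}$. Multiplying by $\eps=c_4\,\delta\kappa/(\sqrt{r\theta}\log(nm))$ yields a contribution $O\bigl(c_4\,\delta/\sqrt{\log(nm)}\bigr)\le \delta/2$ once $c_4$ is a small enough absolute constant.

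For the second term, the $u_\ell$ have pairwise disjoint supports with $\norm{u_\ell}_\infty=1/\sqrt{k'}$, so each coordinate of $\sum_\ell \zeta_\ell u_\ell$ equals $\zeta_\ell u_\ell(i)$ for the unique $\ell$ with $i\in\mathrm{supp}(u_\ell)$ (and $0$ otherwise), hence is distributed as $\calN(0,\theta/k')$, and at most $rk'\le n$ coordinates are nonzero. The same maximal inequality gives $\bignorm{\sum_\ell \zeta_\ell u_\ell}_\infty\le O\bigl(\sqrt{\theta\log(nm)/k'}\bigr)$ with probability at least $1-(nm)^{-2}$. Multiplying by $\sqrt{2\eps}$ and using $\eps\,\theta/k'=\Theta\bigl(\delta^2/\log(nm)\bigr)$ — which follows from $\eps=\Theta(\delta\kappa/(\sqrt{r\theta}\log(nm)))$ and $k'=\Theta(\sqrt{\theta}\kappa/(\delta\sqrt{r}))$ — yields a contribution $O(\sqrt{c_4}\,\delta)\le \delta/2$, again for $c_4$ small. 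Combining the two estimates, $\norm{A_j-A'_j}_\infty\le\delta$ for each fixed $j$ with probability at least $1-2(nm)^{-2}$, and a union bound over $j\in[m]$ completes the proof.

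The calculation is entirely routine; the only point requiring care — and the closest thing to an obstacle — is matching the absolute constants so that both contributions land below $\delta/2$, which is precisely why $\eps$ and $k'$ were calibrated as above (and why the spare $\log(nm)$ factor was placed in the denominator of $\eps$). I would also note that no conditioning on the $\zeta_\ell$'s is needed if one works directly with the Gaussian vectors $\sum_\ell\zeta_\ell v_\ell$ and $\sum_\ell\zeta_\ell u_\ell$, which slightly simplifies the variance bookkeeping compared to conditioning and bounding $\sum_\ell \zeta_\ell^2$ separately.
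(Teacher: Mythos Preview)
Your proof is correct and follows essentially the same approach as the paper: the same decomposition $A_j-A'_j=\eps\sum_\ell\zeta_\ell v_\ell-\sqrt{2\eps-\eps^2}\sum_\ell\zeta_\ell u_\ell$, the same Gaussian maximal bound on each piece (using the coordinate variance bound $\theta(\Pi^*)_{ii}\le \theta r/k$ for the $v$-part and disjoint supports for the $u$-part), and the same parameter arithmetic to land each contribution below $\delta/2$. Your remark about working directly with the marginal Gaussian law rather than conditioning on the $\zeta_\ell$'s is a minor presentational cleanup of the paper's exposition, not a substantive difference.
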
 
\begin{proof}
Note that $\sum_\ell \zeta_\ell v_j$ is a Gaussian with co-variance $\calN(0, \theta \Pi^*)$, and each co-ordinate of this vector is a normal R.V. with mean $0$ and variance at most $\norm{v_j}_\infty^2 \sum_\ell \zeta_\ell^2$.    
\begin{align*}
    \norm{A_j - A'_j}_\infty&\le  \eps \Bignorm{ \sum_{\ell=1}^r \zeta_\ell  v_\ell}_\infty + \sqrt{2\eps - \eps^2} \Bignorm{\sum_{\ell =1}^r  \zeta_\ell u_\ell}_\infty  \\
\text{First, } \eps \Bignorm{ \sum_{\ell=1}^r \zeta_\ell  v_\ell}_\infty    &\le 2\eps  \sqrt{\theta \cdot r \log(nm)}  \max_\ell \norm{v_\ell}_\infty  \\
    &\le 2 \cdot \Theta\Big(\frac{\delta \kappa}{\sqrt{r \theta} \cdot \log nm}\Big) \cdot \sqrt{\theta r \log (nm)} \frac{1}{\kappa} \le \frac{\delta}{2}, 
\end{align*}
when $c_4$ in $\eps$ is a small constant, and since $\norm{v_\ell}_\infty \le 1/\kappa$. Bounding the second term 
uses the fact that the $u_1, \dots, u_r$ have disjoint support, along with the upper bounds for $\norm{u_\ell}_\infty$. 
\begin{align*}
 \sqrt{2\eps - \eps^2} \Bignorm{\sum_{\ell =1}^r  \zeta_\ell u_\ell}_\infty  &\le 2\sqrt{\theta \cdot \eps \log (nm)} \max_\ell \norm{u_\ell}_\infty \\
    &\le O\left(\sqrt{\theta \log (nm) \cdot \frac{\delta \kappa}{\sqrt{r \theta} \cdot \log nm}} \right) \cdot \sqrt{\frac{\delta \sqrt{r}}{\sqrt{\theta} \kappa}} \le\frac{\delta}{2}. 
\end{align*}
Combining the two bounds, we see that $\norm{A-A'}_\infty \le \delta$ with high probability.
\end{proof}

The correctness of Theorem~\ref{thm:minmax_lower} now follows from Claim~\ref{clm:min_max_infty}, Claim~\ref{clm:min_max_frob} and Claim~\ref{clm:min_max_perturbation}.

\subsection{Asymptotic Instance-Optimal Lower Bound: Proof of Theorem~\ref{thm:inf_recover_lower}} 
\label{app:lbproof}
\begin{proofof}{Theorem~\ref{thm:inf_recover_lower}}
We now establish the required properties of $\Pi'$. Firstly $u_1, \dots, u_r$ are orthogonal to each other and to $\Pi^*$ (i.e.,  $v_1, \dots, v_r$). So, $v'_1, v'_2, \dots, v'_\ell$ are orthonormal. Hence
\begin{align}
    \Pi' - \Pi^*&= \sum_{\ell =1}^r v'_\ell (v'_\ell)^\top - v_\ell v_\ell^\top \nonumber\\
    &= \sum_{\ell=1}^r -(2\eps - \eps^2) v_\ell v_\ell^\top + \sum_\ell \frac{(2\eps - \eps^2)}{\norm{u_\ell}_2^2}  u_\ell u_\ell^\top + \sum_\ell \frac{(1-\eps) \sqrt{2\eps - \eps^2}}{\norm{u_\ell}_2} \Big(u_\ell v_\ell^\top + v_\ell u_\ell^\top \Big) \label{eq:Pidiff}
\end{align}
Since each of the terms in \eqref{eq:Pidiff} are orthogonal (w.r.t. the trace inner product) we have
\begin{align}
    \norm{\Pi'-\Pi^*}_F^2 & = \sum_{\ell=1}^r (2\eps - \eps^2)^2 + \sum_{\ell=1}^r \frac{(2\eps - \eps^2)^2}{\norm{u_\ell}_2^4} + \sum_{\ell=1}^r 2 (2\eps - \eps^2) \cdot \frac{(1-\eps)^2}{\norm{u_\ell}_2^2} \nonumber\\
    &\ge r \eps = \Omega(\frac{\sqrt{r} \kappa \delta}{\sqrt{\lambda_1}}), \text{  with probability at least  } 1-n^{-\omega(1)}, \label{eq:lb:froblb}
\end{align}
for our choice of parameters (here we used \eqref{eq:lb:uproperties:1}). 
Then we lower bound the distance between $\Sigma$ and $\Sigma'$:
\begin{align*}
\Sigma'-\Sigma^* &= \sum_{\ell=1}^r \lambda_\ell \Big(v_\ell + \big(\frac{\sqrt{2\eps-\eps^2}}{(1-\eps)\norm{u_\ell}_2}\big) u_\ell \Big)\Big(v_\ell + \big(\frac{\sqrt{2\eps-\eps^2}}{(1-\eps)\norm{u_\ell}_2}\big) u_\ell \Big)^\top - \lambda_\ell v_{\ell} v_{\ell}^{\top}\\
&= \sum_{\ell=1}^r \lambda_{\ell} \frac{\sqrt{2\eps-\eps^2}}{(1-\eps)\norm{u_\ell}_2} (v_{\ell} u_{\ell}^{\top} + u_{\ell} v_{\ell}^{\top}) + \lambda_{\ell} \frac{2 \eps - \eps^2}{(1-\eps)^2 \|u_{\ell}\|_2^2} u_{\ell} u_{\ell}^{\top}.
\end{align*}
Because $v_{\ell}$ and $u_{\ell}$ are orthogonal and using \eqref{eq:lb:uproperties:1}, $\|\Sigma^*-\Sigma'\|_F^2$ is with high probability at least
\[
\big(\sum_{\ell=1}^r \lambda_{\ell}^2 \big)  \eps=\Big(\frac{\lambda_1^2+\dots+\lambda_r^2}{r}\Big)  \norm{\Pi'-\Pi^*}_F^2).
\]

We now show that $A'$ is a valid $\delta$-perturbation of $A$. Recall the definition of $A_j, A'_j$ in  \eqref{eq:constr:newA} respectively. For each fixed $j \in [m]$, by Lemma~\ref{lem:helper:perturb}, we have with probability at least $1-m^{-2}$ (over the randomness in $\set{\zeta^{(j)}_\ell: \ell \in [r]}$) that 
\begin{align*}
    \norm{A_j - A'_j}_\infty &= \Bignorm{ \sum_{\ell} \sqrt{\lambda_\ell} \zeta_\ell^{(j)} \cdot \frac{\sqrt{2\eps - \eps^2}}{(1-\eps)\norm{u_\ell}_2} u_\ell }_\infty\\
    &\le  \frac{2\sqrt{\log(r m)}}{(1-\eps)} \cdot \max_{\ell \in [r]} \sqrt{2 \eps \lambda_\ell} \frac{\norm{u_\ell}_\infty}{ \norm{u_\ell}_2},
\end{align*}
where the second term uses the fact that $u_1, \dots, u_r$ are disjoint and the concentration of Gaussian random variables (over $\zeta_{\ell}^{(j)}$). See also Lemma~\ref{lem:helper:perturb} for general $q$. After a union bound over all $j \in [m]$, and using our bounds on $\norm{u_\ell}_2$ and $\norm{u_\ell}_\infty$ from Lemma~\ref{lem:lb:uproperties} along with our definition of $\eps$,  we get with probability at least $1-\eta-\tfrac{1}{m}$,
\begin{align*} 
\max_{j \in [m]} \norm{A_j - A'_j}_\infty &\le \frac{2\sqrt{\log(r m)}}{(1-\eps)} \cdot \sqrt{2 \eps \lambda_1 }\cdot 2\sqrt{\frac{\log(r k' n)}{(k'-r)}} \cdot \frac{1}{1/2}\\
& = O\Big( \frac{\sqrt{\log (rm) \log n}}{\sqrt{k'}} \cdot \sqrt{\eps \lambda_1} \Big)  \le \delta,
\end{align*}    
since $\eps = c \delta^2 k'/ (\lambda_1\log(rm) \log n )$ for a small constant $c$ (and $\eps<1/2$).

\noindent {\em Upper bound on $\norm{\Pi'}_{\infty \to 2}$:} 
The proof will follow the same outline as for Theorem~\ref{thm:minmax_lower}. We compute the $\infty \to 1$ norm of $\Pi'$; recall that the $\infty \to 1$ norm satisfies the matrix norm monotone property (Lemma~\ref{lem:monotone}). From \eqref{eq:Pidiff},  triangle inequality and monotonicity,
\begin{align}
    \norm{\Pi'}_{\infty \to 1} & \le \underbrace{\norm{\Pi^*}_{\infty \to 1}}_{\text{equal to } \kappa^2} + \underbrace{2\Bignorm{\sum_{\ell} \eps u_\ell u_\ell^\top}_{\infty \to 1}}_{\text{bound using } \eqref{eq:helper:2}} + \underbrace{2 \Bignorm{\sum_{\ell} \sqrt{2\eps - \eps^2} u_\ell v_\ell^\top}_{\infty \to 1}}_{\text{bound using }\eqref{eq:helper:1}}. \label{eq:robustnessaltn:3}
\end{align}
We first bound the third term using \eqref{eq:helper:1} of Lemma~\ref{lem:lb:helper}. 
\begin{align*}
\Bignorm{\sum_\ell \sqrt{2\eps-\eps^2} u_\ell v_\ell^\top }_{\infty \to 1}& \le  \sqrt{2\eps} \sqrt{r} \norm{V}_{\infty \to 2}   \cdot \max_\ell \norm{u_\ell}_1  \le  \kappa  \sqrt{2 r k' \eps} \\
&\le \frac{\kappa^2}{ (\log n \log m)^{1/2}},
\end{align*}
by substituting the values for $k', \eps$ and using $r k' \eps =O( \kappa^2/(\log n \log m))$. 
Hence substituting in \eqref{eq:robustnessaltn:3} and using \eqref{eq:helper:2},
\begin{align*}
    \norm{\Pi'}_{\infty \to 1}  &\le \kappa^2+ 2 \eps \norm{U}_{2 \to 1}^2 + \kappa^2 \cdot \frac{1}{(\log n \log m)^{1/2}} \\
    &\le \kappa^2 + 2r  \eps \max_{\ell} \norm{u_\ell}_1^2 + \kappa^2 \cdot \Big( \frac{1}{(\log n \log m)^{1/2}} \Big) \\
&\le \kappa^2+ 4r \cdot \eps k'+o(\kappa^2) \le (1+o(1))\kappa^2. 
 \end{align*}

\end{proofof}

\subsubsection{Extension to general $\ell_q$ norm} \label{sec:lower:details}
Theorem~\ref{thm:inf_recover_lower} extends in a straightforward fashion to also hold for $\ell_q$ norms. 
\anote{Need to change the range of $\delta$}
\begin{theorem}\label{thm:inf_recover_lower_q}
Suppose we are given parameters $r\in \N, q \ge 1, \kappa \ge 2r^{1-2/q}$ and $\delta>0$. 
In the notation of Theorem~\ref{ithm:info_upper_gen}, for any $\Sigma^*$, given $m$ samples $A_1, \dots, A_m$ generated i.i.d. from $\calN(0,\Sigma^*)$ with 
$\kappa=\norm{\Pi^*}_{q \to 2}$ satisfying $\sqrt{r \lambda_1} (\kappa/n^{1-2/q}) \le \delta \le \sqrt{r \lambda_1}/\kappa$, 
there exists a covariance matrix $\Sigma'$ with a projector $\Pi'$ onto its top-$r$ principal subspace, and an alternate 
dataset $A'_1, \dots, A'_m$ drawn i.i.d. from $\calN(0,\Sigma')$ satisfying   $\norm{\Pi'}_{q \to 2} \le (1+o(1))\kappa$, and $\norm{A'_j - A_j}_q \le \delta ~\forall j \in [m]$,  
\begin{align*}
     \text{but }\norm{\Pi^*-\Pi'}_F^2 &\ge \big(\tfrac{\Omega(1)}{\sqrt{\lambda_1} \log(rm) \log n}\big) \cdot \sqrt{r }\kappa \delta,  ~\text{and } & \norm{\Sigma'_\bg - \Sigma_\bg}_F^2 \ge \tfrac{(\lambda_1^2+\dots+\lambda_r^2)}{r} \cdot \norm{\Pi'-\Pi^*}_F^2
\end{align*}
 In particular, when $\Sigma_\bg=(1+\theta) \Pi^*$ then $\Sigma'_\bg=(1+\theta') \Pi'$ with $\theta'=(1+o(1))\theta$. 

\end{theorem}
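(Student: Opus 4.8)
The plan is to reduce the general $\ell_q$ statement to the $\ell_\infty$ construction already carried out in the proof of Theorem~\ref{thm:inf_recover_lower} (Section~\ref{app:lbproof}), changing only the scale parameter $k'$ and invoking the $q$-norm analogues of the auxiliary lemmas. Concretely, I would keep $\Sigma^* = \sum_i \lambda_i v_i v_i^\top$, $\Pi^* = \sum_{i\le r} v_i v_i^\top$ with $\kappa = \norm{\Pi^*}_{q\to 2}$, keep the definition of $\eps := \tfrac{c}{\log(rm)\log n}\cdot (\delta\kappa/\sqrt{r\lambda_1})$ exactly as in the $\ell_\infty$ case (so that the Frobenius lower bounds on $\norm{\Pi'-\Pi^*}_F^2$ and $\norm{\Sigma'-\Sigma^*}_F^2$, which only use \eqref{eq:lb:uproperties:1}, are unchanged), but redefine the support size so that $(k')^{1-2/q} := r^{2/q}\eps\lambda_1/\delta^2$; one checks from the hypothesis $\sqrt{r\lambda_1}(\kappa/n^{1-2/q}) \le \delta \le \sqrt{r\lambda_1}/\kappa$ that $2r \le k' \le n/r$ as before, so there is room for $r$ disjoint subsets $S_1,\dots,S_r$ of $[n]$ of size $k'$, each chosen to contain a $(k'-r)$-dimensional subspace orthogonal to $\Pi^*$. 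The vectors $u_\ell$ and the eigenvectors $v'_\ell$, the coupled dataset $A'_j$, and the covariance $\Sigma'$ are defined by the same formulas \eqref{eq:lb:udef}, \eqref{eq:constr:newA}.

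Next I would verify the three required properties in order. (i) The bounds $\norm{\Pi'-\Pi^*}_F^2 \ge r\eps = \Omega(\sqrt r\,\kappa\delta/\sqrt{\lambda_1})$ and $\norm{\Sigma'-\Sigma^*}_F^2 \ge (\sum_{\ell\le r}\lambda_\ell^2)\eps$ transfer verbatim from \eqref{eq:lb:froblb} and the display after it, since their proof only uses orthogonality of the $u_\ell$'s and \eqref{eq:lb:uproperties:1}, both of which still hold. (ii) For the perturbation bound, I would use Lemma~\ref{lem:helper:perturb} for general $q$ together with the new Lemma~\ref{lem:lb:uproperties:new}: for fixed $j$, $\norm{A_j - A'_j}_q = \bignorm{\sum_\ell \sqrt{\lambda_\ell}\zeta_\ell^{(j)}\tfrac{\sqrt{2\eps-\eps^2}}{(1-\eps)\norm{u_\ell}_2}u_\ell}_q \le r^{1/q}\cdot\tfrac{2\sqrt{\log(rm)}}{1-\eps}\cdot\max_\ell\sqrt{2\eps\lambda_\ell}\,\norm{u_\ell}_q/\norm{u_\ell}_2$, and plugging in $\norm{u_\ell}_q \le 3\sqrt{\log(rk'/\eta)}\,(k')^{-1/2+1/q}$ and $\norm{u_\ell}_2 \ge 1/2$ gives, after a union bound over $j\in[m]$, a bound of order $r^{1/q}\sqrt{\eps\lambda_1\log(rm)\log n}\,(k')^{-1/2+1/q}$; by the choice $(k')^{1-2/q} = r^{2/q}\eps\lambda_1/\delta^2$ this is $O(\delta\cdot\sqrt{\log(rm)\log n}\cdot\sqrt c) \le \delta$ for small $c$. (iii) For $\norm{\Pi'}_{q\to 2}$, I would mimic the $\ell_\infty$ argument: since $\norm{\Pi'}_{q\to 2}^2 = \norm{\Pi'}_{q\to q^*}$ and $q\to q^*$ is monotone for PSD matrices (Lemma~\ref{lem:monotone}), apply triangle inequality to \eqref{eq:Pidiff} and bound the cross term $\bignorm{\sum_\ell\sqrt{2\eps-\eps^2}\,u_\ell v_\ell^\top}_{q\to q^*}$ by \eqref{eq:helper:1} as $\le r^{1/2-1/q}\sqrt{2\eps}\,\norm{V}_{q\to 2}\max_\ell\norm{u_\ell}_{q^*} \le \kappa\cdot r^{1/2-1/q}\sqrt{2\eps}\cdot 2(k')^{1/2-1/q}$, and the quadratic term $\bignorm{\sum_\ell\eps u_\ell u_\ell^\top}_{q\to q^*}$ by \eqref{eq:helper:2} as $\le \eps\, r^{1-2/q}\max_\ell\norm{u_\ell}_{q^*}^2 \le 4\eps\, r^{1-2/q}(k')^{1-2/q}$; using $r^{1-2/q}(k')^{1-2/q}\eps = r\eps\lambda_1/\delta^2 = O(\kappa^2/(\log n\log m))$ (from the definition of $\eps$) shows both correction terms are $o(\kappa^2)$, whence $\norm{\Pi'}_{q\to q^*} \le (1+o(1))\kappa^2$.

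The main obstacle, such as it is, is bookkeeping: getting the exponents of $r$ and $k'$ to line up consistently in the three places where $k'$ enters (the perturbation bound wants $r^{1/q}\norm{u_\ell}_q \asymp \delta/\sqrt{\eps\lambda_1\log}$, the cross term wants $r^{1/2-1/q}\norm{u_\ell}_{q^*}\asymp \kappa/(\sqrt{\eps}\cdot\text{polylog})$, and the quadratic term is then automatically controlled), and checking that the chosen $(k')^{1-2/q} = r^{2/q}\eps\lambda_1/\delta^2$ simultaneously satisfies these and lies in the admissible window $[2r, n/r]$ given the stated range of $\delta$. Since $\eps$ itself scales like $\delta\kappa/\sqrt{r\lambda_1}$ up to polylog, one substitutes and finds $(k')^{1-2/q} \asymp r^{2/q-1/2}\kappa\sqrt{\lambda_1}/(\delta\,\text{polylog})$, and the hypotheses $\delta\ge\sqrt{r\lambda_1}\kappa/n^{1-2/q}$ and $\delta\le\sqrt{r\lambda_1}/\kappa$ (together with $\kappa\ge 2r^{1-2/q}$) are exactly what make $k'\in[2r,n/r]$; this is the one spot where the general-$q$ hypotheses differ from the $\ell_\infty$ ones and must be checked carefully. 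Everything else — orthonormality of $v'_1,\dots,v'_r$, that $A'_j\sim\calN(0,\Sigma')$ with $\Sigma'_\bg = (1-\eps)^{-2}\sum_{\ell\le r}\lambda_\ell v'_\ell(v'_\ell)^\top$ and top eigenvalues $\lambda_\ell/(1-\eps)^2$, and the final remark that $\Sigma_\bg = (1+\theta)\Pi^*$ forces $\theta' = (1+\theta)/(1-\eps)^2 - 1 = (1+o(1))\theta$ — is identical to the $\ell_\infty$ case and needs no change.
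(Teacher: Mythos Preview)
Your approach is essentially identical to the paper's: keep the same random-perturbation construction and the same $\eps$, redefine only the support size $k'$, and then invoke Lemma~\ref{lem:lb:uproperties:new} for the $\ell_q/\ell_{q^*}$ bounds on the $u_\ell$, Lemma~\ref{lem:helper:perturb} for the perturbation bound, and Lemma~\ref{lem:lb:helper} together with monotonicity of the $q\to q^*$ norm for the robustness bound. The Frobenius lower bounds on $\norm{\Pi'-\Pi^*}_F^2$ and $\norm{\Sigma'_\bg-\Sigma_\bg}_F^2$, and the identification of $\Sigma'_\bg$ and $\theta'$, carry over verbatim exactly as you say.

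There is one arithmetic slip in step (ii). With your choice $(k')^{1-2/q} = r^{2/q}\eps\lambda_1/\delta^2$ one gets $(k')^{-1/2+1/q} = \delta\,r^{-1/q}(\eps\lambda_1)^{-1/2}$, and substituting into $r^{1/q}\sqrt{\eps\lambda_1\log(rm)\log n}\,(k')^{-1/2+1/q}$ yields exactly $\delta\sqrt{\log(rm)\log n}$: the constant $c$ hidden in $\eps$ cancels, so there is no $\sqrt c$ to make small. The fix is the obvious one---include an extra factor of $C\log(rm)\log n$ in $(k')^{1-2/q}$---after which your robustness computation in (iii) still gives $\eps\, r^{1-2/q}(k')^{1-2/q} = O(\kappa^2/\log) = o(\kappa^2)$ and the cross term remains $o(\kappa^2)$, so nothing else changes. (The paper's own displayed choice of $k'$ differs from yours by a log factor placed in the denominator; the point is simply that this is the one place where the bookkeeping is delicate, and the range hypothesis on $\delta$ together with $\kappa\ge 2r^{1-2/q}$ is exactly what guarantees $k'\in[2r,n/r]$ once the log factor is set correctly.)
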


In fact the same construction holds using $u_1, \dots, u_r$ that are picked randomly but with disjoint support. However, there is a minor change in the parameters of the construction. We will set $\eps$ as before (and hence this will give the same lower bound on $\norm{\Pi'-\Pi^*}_F^2$ and $\norm{\Sigma'-\Sigma^*}_F^2$). We will set 
$$\eps=\frac{c \kappa \delta}{\sqrt{r \lambda_1} \log(rm) \log n} \text{    and    }(k')^{1-2/q}:= \Big( \frac{r^{2/q} \eps \lambda_\ell}{\delta^2 \log(rm) \log n}\Big),$$ 
for some constant $c>0$. The assumptions of the theorem ensure that $2r \le k' \ll n/r$ as required for the construction. 

We will need an additional simple claim that just extends Lemma~\ref{lem:lb:uproperties}.
\begin{lemma}\label{lem:lb:uproperties:new}
In the notation of Lemma~\ref{lem:lb:uproperties} for any $\eta<1$, with probability at least $(1 - \eta)$ we have
    \begin{align}
        \forall \ell \in [r], ~~~~~&  \norm{u_\ell}_q \le 3\sqrt{\log(r k'/\eta)} \dot (k')^{-1/2+1/q}.\label{eq:new:prop1}\\ 
         & \norm{u_\ell}_{q^*} \le 2 (k')^{1/2-1/q} .\label{eq:new:prop2}
    \end{align}
\end{lemma}
The proof follows directly from Lemma~\ref{lem:lb:uproperties} and using the relation between the $\ell_q, \ell_\infty$ norms, and $\ell_{q^*},\ell_1$ norms. 

\paragraph{Completing the proof of Theorem~\ref{thm:inf_recover_lower_q}.} The proof follows the same argument as the proof of Theorem~\ref{thm:inf_recover_lower}. As mentioned before, since we choose the same $\eps$, it suffices to argue about $\max_{j \in [m]} \norm{A_j - A'_j}_q$ and $\norm{\Pi'}_{q \to q^*}$.

To establish the upper bound on $\norm{\Pi'}_{q \to q^*}$ we use the bounds in Lemma~\ref{lem:lb:helper} and \eqref{eq:new:prop1}. We have from Lemma~\ref{lem:lb:helper}
\begin{align*}
    \norm{\Pi'}_{q \to q^*} & \le \norm{\Pi^*}_{\infty \to 1} + 2\eps \norm{UU^\top}_{q \to q^*} + 2\sqrt{2\eps - \eps^2} \norm{UV^\top}_{q \to q^*}\\
    &\le \kappa^2 + 2\eps r^{1-2/q} \Big(\max_{\ell \in [r]} \norm{u_\ell}_{q^*}\Big)^2+ 2 \sqrt{\eps} r^{1/2-1/q} \Big(\max_{\ell \in [r]} \norm{u_\ell}_{q^*}\Big) \cdot \norm{V}_{q \to 2} \\
    &\le \kappa^2 + 2\eps r^{1-2/q} (k')^{1-2/q}+ 2 \sqrt{\eps} r^{1/2-1/q} (k')^{1/1-1/q} \cdot \kappa \\
    &\le \kappa^2+ o(\kappa^2)+o(\kappa) \cdot \kappa = \kappa^2 (1+o(1)),  
\end{align*}
since from our choice of parameter $k'$, we have $\eps r^{1-2/q} \max_\ell \norm{u_\ell}_{q^*}^2= (\eps^2 r\lambda_1)/ (\delta^2 \log(rm) \log n) = o(\kappa^2)$. 

Finally, for the upper bound on $\max_{j \in [m]} \norm{A_j - A'_j}_q \le \delta$ we use Lemma~\ref{lem:helper:perturb} and \eqref{eq:new:prop2}. For each fixed $j \in [m]$, by Lemma~\ref{lem:helper:perturb}, we have with probability at least $1-m^{-2}$ (over the randomness in $\set{\zeta^{(j)}_\ell: \ell \in [r]}$) that 
\begin{align*}
    \norm{A_j - A'_j}_q &= \Bignorm{ \sum_{\ell} \sqrt{\lambda_\ell} \zeta_\ell^{(j)} \cdot \frac{\sqrt{2\eps - \eps^2}}{(1-\eps)\norm{u_\ell}_2} u_\ell }_q\\
    &\le  r^{1/q} \frac{2\sqrt{\log(r m)}}{(1-\eps)} \cdot \max_{\ell \in [r]} \sqrt{2 \eps \lambda_\ell} \frac{\norm{u_\ell}_q}{ \norm{u_\ell}_2}\\
    &\le r^{1/q} \cdot \sqrt{\log(rm)}(k')^{-1/2+1/q} \le \delta,
\end{align*}
for our choice of parameters and $k'$.  This establishes the statement of   Theorem~\ref{thm:inf_recover_lower_q} for general $q$.

\section{Statistical Upper bounds (computationally inefficient algorithm)}
We show the statistical upper bounds on the recovery of principle components in this section. By symmetrization (shown in Algorithm~\ref{algo:covariance}), we assume all data points are generated from $\calN(0,\Sigma^*)$ rather than $\calN(\mu,\Sigma^*)$ in this section.
\begin{theorem}\label{thm:inf_recover_upper_gen}
Given $q>2$, $n$, $r$, and $\kappa$, let $\mathcal{P}=\big\{\text{projection matrix } \Pi \big| \text{rank}=r \text{ and } \|\Pi\|_{q \to 2} \le \kappa \big\}$. Let $\Sigma$ be an unknown covariance matrix with eigenvalues $\lambda_1 \ge \lambda_2 \ge \cdots \ge \lambda_n$ whose projection matrix $\Pi^*$ of the top $r$ eigenspace is in $\mathcal{P}$. 

Let $\wt{A} \in \mathbb{R}^{n \times m}$ be the  $\delta$-perturbed (in $\ell_q$ norm) data matrix where each original column comes from $\calN(0, \Sigma^*)$ for any $\delta>0$, $\eps>0$ and $m \ge C \cdot \lambda_1^2 \cdot r^2 \kappa^2 \log n \cdot n^{2/q}/\eps^2$. Then
\[
\wt{\Pi} \overset{\text{def}}{=} \underset{\Pi \in \mathcal{P}}{\arg\min} \{ \|\wt{A}\|_F^2 - \|\Pi \wt{A} \|^2_F \} \]
satisfies $\|\wt{\Pi}^{\bot} \Pi^*\|_F^2 \le \frac{1}{\lambda_r - \lambda_{r+1}} \cdot O\left( \delta^2 \kappa^2 + \sqrt{\lambda_1 r} \cdot \delta \kappa + \eps \right)$ with probability 0.99.  Moreover, one can obtain $\wt{\Sigma}_{\bg}$ satisfying
$\| \wt{\Sigma}_{\bg}-\Sigma_{\bg} \|_F^2 \le O(\lambda_1^2 \cdot \|\wt{\Pi}^{\bot} \Pi^*\|_F^2 + \lambda_1 \kappa^2 \delta^2 + \kappa^4 \delta^4)$ where $\|\wt{\Pi}^{\bot} \Pi^*\|_F^2$ is upper bounded above.
\end{theorem}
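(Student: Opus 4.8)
The plan mirrors the proof of Theorem~\ref{thm:comp_upper_gen}: use optimality of $\wt\Pi$ to transfer a bound on the error matrix $E:=\tfrac1m\wt A\wt A^\top-\E[AA^\top]$ to a bound on the correlation $\langle\wt\Pi,\Pi^*\rangle$, convert this to a $\sin\Theta$ bound via Claim~\ref{clm:distance_X_proj}, and recover $\wt\Sigma_\bg$ via Theorem~\ref{thm:recover_top_covariance}. The difference is that here we optimize directly over the non-convex set $\mathcal P$ of genuine rank-$r$ projections with $\|\Pi\|_{q\to 2}\le\kappa$, so no rounding step is needed --- in particular $\|\wt\Pi\|_{q\to 2}\le\kappa$ holds automatically --- while the sampling-error part of $\Delta:=\sup_{\Pi\in\mathcal P}|\langle E,\Pi\rangle|$ must be estimated by a covering argument rather than by entrywise H\"older, which is precisely what gives the improved $\kappa^2$ (rather than $\kappa^4$) and $n^{2/q}$ (rather than $n^{4/q}$) sample complexity.

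By the symmetrization in Algorithm~\ref{algo:covariance} we may assume the clean points are drawn from $\calN(0,\Sigma^*)$. Since $\Pi^*\in\mathcal P$ and $\wt\Pi$ maximizes $\Pi\mapsto\|\Pi\wt A\|_F^2=\langle\wt A\wt A^\top,\Pi\rangle$ over $\mathcal P$, optimality gives $\langle\tfrac1m\wt A\wt A^\top,\wt\Pi\rangle\ge\langle\tfrac1m\wt A\wt A^\top,\Pi^*\rangle$; writing $\tfrac1m\wt A\wt A^\top=\Sigma^*+E$ this is $\langle\Sigma^*,\Pi^*\rangle-\langle\Sigma^*,\wt\Pi\rangle\le 2\Delta$. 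Applying Claim~\ref{clm:distance_X_proj} with $X=\wt\Pi$ (which satisfies $\tr(\wt\Pi)=r$ and $0\preceq\wt\Pi\preceq I$) then yields $\langle\wt\Pi,\Pi^*\rangle\ge r-2\Delta/(\lambda_r-\lambda_{r+1})$, hence by \eqref{eq:sin_theta}
\[
\|\wt\Pi^{\perp}\Pi^*\|_F^2=r-\langle\wt\Pi,\Pi^*\rangle\le\frac{2\Delta}{\lambda_r-\lambda_{r+1}}.
\]
It remains to bound $\Delta$. Writing $\wt A=A+B$ with $\|B_j\|_q\le\delta$, so $E=\tfrac1m BB^\top+\tfrac1m(AB^\top+BA^\top)+(\tfrac1m AA^\top-\Sigma^*)$, and using $\|\Pi B\|_F^2=\sum_j\|\Pi B_j\|_2^2\le m\kappa^2\delta^2$ for every $\Pi\in\mathcal P$ (Fact~\ref{fact:operator_norm_pro}), the first two groups of terms are handled verbatim as in Claim~\ref{clm:X_to_half_perturbation} and Lemma~\ref{lem:mean_with_X_bound_app} and contribute $O(\kappa^2\delta^2)+O(\sqrt{\lambda_1 r}\,\kappa\delta)$ (the cross term additionally using $\langle AA^\top,\Pi\rangle\le m\lambda_1 r+(\text{sampling term})$ and Fact~\ref{fact:eig_trace_ineq}). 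The genuinely new piece is the pure sampling term: writing $\Pi=\sum_{i=1}^r v_iv_i^\top$ with $\|v_i\|_2=1$ and $\|v_i\|_{q^*}\le\kappa$,
\[
\Big|\Big\langle\tfrac1m AA^\top-\Sigma^*,\ \Pi\Big\rangle\Big|\ \le\ r\cdot\sup_{\|v\|_2\le 1,\ \|v\|_{q^*}\le\kappa}\Big|v^\top\big(\tfrac1m AA^\top-\Sigma^*\big)v\Big|,
\]
and a standard net over the localized sphere $\{v:\|v\|_2\le 1,\ \|v\|_{q^*}\le\kappa\}$ (whose metric entropy at the relevant scale is $\widetilde O(\kappa^2 n^{2/q})$) together with $\chi^2$-type Gaussian concentration bounds the right-hand side by $O\big(r\lambda_1\kappa\,n^{1/q}\sqrt{\log n}/\sqrt m\big)$ with high probability --- for $q=\infty$ this is the usual ($\approx\kappa^2$)-sparse spectral-norm deviation. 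Under $m\ge C\lambda_1^2 r^2\kappa^2\log n\cdot n^{2/q}/\eps^2$ this is at most $\eps$, so $\Delta\le O(\kappa^2\delta^2+\sqrt{\lambda_1 r}\,\kappa\delta)+\eps$ and the stated bound on $\|\wt\Pi^\perp\Pi^*\|_F^2$ follows.

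Finally, for $\wt\Sigma_\bg$ I would use an independent half of the samples, set $\wt\Sigma_\bg=\wt\Pi\big(\tfrac1m\sum_i\wt A_i\wt A_i^\top\big)\wt\Pi$, and invoke Theorem~\ref{thm:recover_top_covariance} with $\Pi=\wt\Pi$ and $\|\wt\Pi-\Pi^*\|_F^2=2\|\wt\Pi^\perp\Pi^*\|_F^2$ (the hypothesis $m=\Omega(\lambda_1^2 r^2)$ holds under our sample bound), which gives $\|\wt\Sigma_\bg-\Sigma_\bg\|_F^2=O(\lambda_1^2\|\wt\Pi^\perp\Pi^*\|_F^2+\lambda_1^2 r^2/m+\kappa^4\delta^4+\lambda_1\kappa^2\delta^2)$ with $\lambda_1^2 r^2/m$ absorbed into the other terms. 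The hard part is the sampling term in the previous paragraph: bounding it crudely by the entrywise H\"older inequality $|\langle\tfrac1m AA^\top-\Sigma^*,\Pi\rangle|\le\|\tfrac1m AA^\top-\Sigma^*\|_q\,\|\Pi\|_{q^*}$ only gives $\|\Pi\|_{q^*}\le r\kappa^2$ and reproduces the weaker $\kappa^4$, $n^{4/q}$ rate; getting the ``square-root'' improvement to $\kappa$ and $n^{1/q}$ requires controlling the $\ell_{q^*}$-restricted operator norm of the empirical-covariance deviation via covering --- the analogue of sparse-PCA concentration --- which is where working with exact projections (and hence being computationally inefficient) pays off.
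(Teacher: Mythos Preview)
Your proof is correct and follows essentially the same structure as the paper: optimality of $\wt\Pi$ over $\mathcal P$ gives $\langle\Sigma^*,\wt\Pi\rangle\ge\langle\Sigma^*,\Pi^*\rangle-2\Delta$, Claim~\ref{clm:distance_X_proj} converts this to the $\sin\Theta$ bound, the decomposition of $E$ into $BB^\top$, cross, and sampling terms is the same, and Theorem~\ref{thm:recover_top_covariance} handles $\wt\Sigma_\bg$. The only technical difference is how the pure sampling term $\sup_{\Pi\in\mathcal P}|\langle\tfrac1m AA^\top-\Sigma^*,\Pi\rangle|$ is controlled: the paper (Lemma~\ref{lem:emp_est_no_pert} via Claim~\ref{clm:non_id_Gaussian_sparse_vectors}) invokes a Mendelson-type generic-chaining bound (Lemma~\ref{lem:guassian_non_id_covariance}, quoted from \cite{VuLei12}) that reduces everything to the Gaussian width $\E_g\sup_{v\in\mathcal V}\langle g,v\rangle\le\kappa\,\E\|g\|_q=O(\kappa\, n^{1/q}\sqrt{\log n})$, whereas you propose a direct covering argument on $\{v:\|v\|_2\le 1,\ \|v\|_{q^*}\le\kappa\}$. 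Both routes give the same $O(r\lambda_1\kappa\, n^{1/q}\sqrt{\log n}/\sqrt m)$ rate; the paper's is a one-line H\"older computation of the width, while a raw net argument for quadratic forms needs a little extra care with the approximation step (controlling $|v^\top Mv-v'^\top Mv'|$ over the net), so the Mendelson bound is slightly cleaner here.
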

\xnote{Feel free to revise this remark.}
\begin{remark}
Comparing to the computational upper bound in Theorem~\ref{thm:comp_upper_gen}, the main difference is the dependency of $m$ on $\kappa$: it becomes $\kappa^2$ here.
\end{remark}
We state the direct corollary in the spiked covariance model with $q=\infty$.
\begin{corollary}\label{cor:inf_recover_upper}
Given $n$, $r$, and $\kappa$, let $\mathcal{P}=\big\{\Pi \big| \text{rank}=r \text{ and } \|\Pi\|_{\infty \to 2} \le \kappa \big\}$. For any $\theta$ and $\Pi^* \in \mathcal{P}$, let $\wt{A} \in \mathbb{R}^{n \times m}$ be the  $\delta$-perturbed data matrix where each original column comes from $\calN(0, I + \theta \Pi^*)$. For any $\delta>0$, $\eps>0$ and $m \ge C \cdot (1+\theta)^2 \cdot r^2 \kappa^2 \log n/\eps^2$, 
\[
\wt{\Pi} \overset{\text{def}}{=} \underset{\Pi \in \mathcal{P}}{\arg\min} \{ \|\wt{A}\|_F^2 - \|\Pi \wt{A} \|^2_F \} \]
satisfies $\|\wt{\Pi}^{\bot} \Pi^*\|_F^2 \le \frac{1}{\theta} \cdot O\left( \delta^2 \kappa^2 + (1+\theta)^{1/2} \sqrt{r} \cdot \delta \kappa + \eps \right)$ with probability 0.99. 
\end{corollary}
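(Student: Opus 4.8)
The plan is to obtain Corollary~\ref{cor:inf_recover_upper} as the specialization of Theorem~\ref{thm:inf_recover_upper_gen} to the case $q=\infty$ and $\Sigma^*=I+\theta\Pi^*$ with $\theta>0$. First I would note that when $q=\infty$ the dual exponent is $q^*=1$, so the feasible family $\mathcal{P}=\{\Pi:\ \text{rank}(\Pi)=r,\ \|\Pi\|_{\infty\to 2}\le\kappa\}$ and the estimator $\wt{\Pi}=\arg\min_{\Pi\in\mathcal{P}}\{\|\wt{A}\|_F^2-\|\Pi\wt{A}\|_F^2\}$ are exactly the $q=\infty$ instances of the objects appearing in Theorem~\ref{thm:inf_recover_upper_gen}. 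The corollary's hypothesis $\Pi^*\in\mathcal{P}$ then matches the theorem's requirement that the top-$r$ eigenprojector of $\Sigma^*$ lies in $\mathcal{P}$, provided we verify that $\Pi^*$ is indeed that eigenprojector.

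Next I would read off the spectrum of $\Sigma^*=I+\theta\Pi^*$. Since $\Pi^*$ is a rank-$r$ orthogonal projector, $\Sigma^*$ has eigenvalue $1+\theta$ with multiplicity $r$ (with eigenspace the range of $\Pi^*$) and eigenvalue $1$ with multiplicity $n-r$. Hence for $\theta>0$ we have $\lambda_1=\dots=\lambda_r=1+\theta$ and $\lambda_{r+1}=\dots=\lambda_n=1$; the top-$r$ eigenprojector of $\Sigma^*$ is exactly $\Pi^*$, the eigengap is $\lambda_r-\lambda_{r+1}=\theta$, and $\Sigma_{\bg}=(1+\theta)\Pi^*$.

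Finally I would substitute these values into the conclusion of Theorem~\ref{thm:inf_recover_upper_gen}. Because $n^{2/q}=n^{2/\infty}=1$, the sample-complexity hypothesis $m\ge C\lambda_1^2 r^2\kappa^2\log n\cdot n^{2/q}/\eps^2$ becomes $m\ge C(1+\theta)^2 r^2\kappa^2\log n/\eps^2$, exactly the bound stated in the corollary. The error guarantee $\|\wt{\Pi}^{\bot}\Pi^*\|_F^2\le\frac{1}{\lambda_r-\lambda_{r+1}}\cdot O(\delta^2\kappa^2+\sqrt{\lambda_1 r}\,\delta\kappa+\eps)$ becomes $\frac{1}{\theta}\cdot O(\delta^2\kappa^2+(1+\theta)^{1/2}\sqrt{r}\,\delta\kappa+\eps)$, which is precisely the asserted estimate; the covariance-recovery statement, if one also wants it, follows from the second half of Theorem~\ref{thm:inf_recover_upper_gen} using $\Sigma_{\bg}=(1+\theta)\Pi^*$ and $\lambda_1=1+\theta$.

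There is no real obstacle here beyond this bookkeeping; the two points needing a sentence of care are (i) the implicit restriction to $\theta>0$, without which $\Pi^*$ would be the \emph{bottom}-$r$ rather than the top-$r$ eigenspace of $\Sigma^*$ and the $1/\theta$ prefactor would be meaningless, and (ii) confirming that $q=\infty$ lies within the scope of Theorem~\ref{thm:inf_recover_upper_gen} — which it does, reading $n^{2/q}$ as $1$ and using the $\ell_\infty$ part of the concentration bound in Lemma~\ref{lem:covariance_concentration} in place of its general $\ell_q$ part.
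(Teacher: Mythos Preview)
Your proposal is correct and matches the paper's approach exactly: the paper introduces Corollary~\ref{cor:inf_recover_upper} with the sentence ``We state the direct corollary in the spiked covariance model with $q=\infty$'' and provides no further proof, so the argument is precisely the specialization of Theorem~\ref{thm:inf_recover_upper_gen} you spell out. Your bookkeeping on the spectrum of $I+\theta\Pi^*$, the eigengap $\lambda_r-\lambda_{r+1}=\theta$, and the simplification $n^{2/q}\to 1$ is exactly what is needed, and in fact more explicit than what the paper itself writes down.
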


We show two technical results to prove the main theorem. The first one bounds the deviation of the inner product between all projection matrices and the original data matrix (before perturbation), whose proof is defered to Section~\ref{sec:proof_lem_emp}.

\begin{lemma}\label{lem:emp_est_no_pert}
For any covariance matrix $\Sigma^*$ whose eigenvalues are at most $\lambda_{\max}$, let $A \in \mathbb{R}^{n \times m}$ be a data matrix where each column is generated from $\calN(0, \Sigma^*)$. 


Given $n$, $q$, $r$ and $\kappa$, let $\mathcal{P}=\big\{\Pi \big| \text{rank}=r \text{ and } \|\Pi\|_{q \to 2} \le \kappa \big\}$. Then for any $m \ge C \lambda_{\max}^2 \cdot \kappa^2 \log n \cdot n^{2/q}$ with a sufficiently large constant $C$, we have that with probability 0.99,
\[
\bigg| \Big\langle \frac{1}{m} A  A^{\top} - \Sigma^* , \Pi \Big\rangle \bigg| = r \cdot O\left( \frac{\lambda_{\max} \cdot \kappa \cdot \sqrt{\log n} \cdot n^{1/q}}{\sqrt{m}} \right) \text{ for all } \Pi \in \mathcal{P}.
\]
\end{lemma}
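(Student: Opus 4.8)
Write $E:=\frac1m A A^\top-\Sigma^*$, a symmetric matrix. The plan is to first reduce $\sup_{\Pi\in\mathcal P}\bigl|\iprod{E,\Pi}\bigr|$ to a \emph{single} uniform quadratic deviation over a fixed convex body, and then to control that deviation by trading the analytic sparsity $\|v\|_{q^*}\le\kappa$ for combinatorial sparsity.

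\emph{Step 1 (reduction).} Since $\Pi$ is a projection, $\iprod{E,\Pi}=\iprod{E,\Pi^2}=\tr(\Pi E\Pi)$, and $\Pi E\Pi$ has rank at most $r$, so $|\iprod{E,\Pi}|\le r\,\|\Pi E\Pi\|=r\max_{\|x\|_2\le1}\bigl|(\Pi x)^\top E(\Pi x)\bigr|$. By Fact~\ref{fact:operator_norm_pro}(2) every vector in the range of $\Pi$ has $\ell_{q^*}/\ell_2$ ratio at most $\kappa$, so with $\mathcal K:=\{v\in\R^n:\ \|v\|_2\le1,\ \|v\|_{q^*}\le\kappa\}$ I get, uniformly over all $\Pi\in\mathcal P$ (the right‑hand side no longer depends on $\Pi$),
\[
\bigl|\iprod{E,\Pi}\bigr|\ \le\ r\cdot\sup_{v\in\mathcal K}\bigl|v^\top E v\bigr|.
\]
It thus suffices to show $\sup_{v\in\mathcal K}|v^\top E v|=O\bigl(\lambda_{\max}\kappa n^{1/q}\sqrt{\log n}/\sqrt m\bigr)$ with probability $0.99$ under the stated hypothesis on $m$.

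\emph{Step 2 (analytic $\to$ combinatorial sparsity, then concentration).} I would show $\mathcal K\subseteq c\cdot\mathrm{conv}(T_s)$, where $T_s:=\{u\in\R^n:\ \|u\|_0\le s,\ \|u\|_2\le1\}$, $c=O_q(1)$, and $s:=\min\!\bigl(n,\ \lceil\kappa^{2q^*/(2-q^*)}\rceil\bigr)$; this is the analogue for the $\ell_{q^*}$‑ball of the Plan--Vershynin containment $\{\|\cdot\|_2\le1,\|\cdot\|_1\le\sqrt s\}\subseteq 2\,\mathrm{conv}(T_s)$, and it is exactly the step that buys the \emph{square‑root} saving over the trivial bound $|\iprod{E,\Pi}|\le\|E\|_q\|\Pi\|_{q^*}\le r\kappa^2\|E\|_q$ obtained from H\"older and Lemma~\ref{lem:covariance_concentration} (which only yields $r\kappa^2 n^{2/q}$). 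Because $(v,w)\mapsto v^\top E w$ is linear in each argument, $\sup_{v\in\mathcal K}|v^\top Ev|\le\sup_{v,w\in c\,\mathrm{conv}(T_s)}|v^\top E w|=c^2\sup_{v,w\in T_s}|v^\top E w|$. For fixed $s$‑element supports, $v^\top E w=\frac1m\sum_j\iprod{v,A_j}\iprod{w,A_j}-\E[\cdot]$ is a centered sub‑exponential average with parameter $O(\lambda_{\max})$, so Bernstein together with a $\tfrac14$‑net (of size $9^s$) on each of the two coordinate spheres and a union bound over the $\binom{n}{s}^2\le n^{2s}$ choices of supports gives, with probability $\ge0.99$,
\[
\sup_{v,w\in T_s}\bigl|v^\top E w\bigr|\ \le\ C\lambda_{\max}\Bigl(\sqrt{\tfrac{s\log n}{m}}+\tfrac{s\log n}{m}\Bigr).
\]
A short exponent computation using $1/q=1-1/q^*$ shows $\sqrt s\le\kappa n^{1/q}$ (with equality at the two boundary regimes $s=n$ and $s=\kappa^{2q^*/(2-q^*)}$), and the hypothesis $m\ge C\lambda_{\max}^2\kappa^2 n^{2/q}\log n$ forces $\tfrac{s\log n}{m}\le\sqrt{\tfrac{s\log n}{m}}$; hence $\sup_{v\in\mathcal K}|v^\top Ev|=O\bigl(\lambda_{\max}\kappa n^{1/q}\sqrt{\log n/m}\bigr)$, and multiplying by $r$ from Step~1 closes the argument. (At $q=\infty$, $q^*=1$ and $s\asymp\kappa^2$, so this is exactly the usual uniform deviation bound for $\kappa^2$‑sparse principal directions in sparse PCA.)

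\emph{Main obstacle.} The crux is the containment $\mathcal K\subseteq O_q(1)\cdot\mathrm{conv}(T_s)$ with the correct sparsity exponent $s\asymp\kappa^{2q^*/(2-q^*)}$ (capped at $n$): the naive ``peel off the top $s$ coordinates, bound the $\ell_2$‑tail, iterate'' argument does not keep the sparsity bounded across iterations, so one must run the Plan--Vershynin‑style argument (or, equivalently, apply a generic‑chaining / Hanson--Wright bound directly on $\mathcal K$, using the Gaussian‑width estimate $w(\mathcal K)\le\kappa\,\E_g\|g\|_q=O(\kappa n^{1/q})$, which yields the same conclusion, with the $\sqrt{\log n}$ being slack one can afford). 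Everything else — the trace reduction, the Bernstein/net bound on $T_s$, and the exponent arithmetic — is routine, and since only one randomness event enters, the constant failure probability is immediate.
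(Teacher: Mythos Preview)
Your Step~1 is correct and matches the paper: both arrive at $\bigl|\iprod{E,\Pi}\bigr|\le r\sup_{v\in\mathcal K}|v^\top Ev|$ with $\mathcal K=\{v:\|v\|_2\le1,\ \|v\|_{q^*}\le\kappa\}$ (the paper does it by writing $\Pi=\sum_{j\le r}v_jv_j^\top$, you by $|\tr(\Pi E\Pi)|\le r\|\Pi E\Pi\|$). For Step~2 the paper takes the route you mention only parenthetically: it quotes a Mendelson-type inequality (Lemma~\ref{lem:guassian_non_id_covariance}, from \cite{VuLei12}) that controls $\E\bigl[\sup_{v\in\mathcal K}|v^\top Ev|\bigr]$ by $\tfrac{\lambda_{\max}}{\sqrt m}\,w(\mathcal K)+\tfrac{\lambda_{\max}}{m}\,w(\mathcal K)^2$, bounds the Gaussian width by $w(\mathcal K)\le\kappa\,\E_g\|g\|_q=O(\kappa\, n^{1/q}\sqrt{\log n})$, and finishes with Markov.

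Your primary route has a genuine gap: the containment $\mathcal K\subseteq O_q(1)\cdot\mathrm{conv}(T_s)$ with $s=\lceil\kappa^{2q^*/(2-q^*)}\rceil$ is \emph{false} for every $q\in(2,\infty)$. Since membership in $c\,\mathrm{conv}(T_s)$ forces $\|v\|_1\le c\sqrt s$, a counterexample is $v_i=c_0\,i^{-\alpha}$ for $i\le N$, with any fixed $\alpha\in(\tfrac12,\tfrac1{q^*})$, $c_0$ normalizing $\|v\|_2=1$ (so $c_0\asymp 1$), and $N$ chosen so that $\|v\|_{q^*}=\kappa$. Because $\alpha q^*<1$ and $\alpha<1$ one gets $N\asymp\kappa^{q^*/(1-\alpha q^*)}$ and $\|v\|_1\asymp N^{1-\alpha}$, hence
\[
\frac{\|v\|_1}{\sqrt s}\ \asymp\ \kappa^{\,q^*(q^*-1)(2\alpha-1)/[(1-\alpha q^*)(2-q^*)]}\ \longrightarrow\ \infty\qquad(\kappa\to\infty),
\]
the exponent being strictly positive whenever $q^*>1$ and $\alpha>\tfrac12$. (At $q=\infty$ the exponent vanishes and your containment reduces to the valid Plan--Vershynin lemma.) Two easy repairs: either use the coarser sparsity $s'=\min(n,\kappa^2 n^{2/q})$ --- H\"older gives $\|v\|_1\le n^{1/q}\|v\|_{q^*}\le\kappa n^{1/q}=\sqrt{s'}$ for all $v\in\mathcal K$, so the standard $\ell_1$-based Plan--Vershynin containment $\mathcal K\subseteq 2\,\mathrm{conv}(T_{s'})$ does hold, and your Bernstein-plus-net argument then delivers exactly $C\lambda_{\max}\sqrt{s'\log n/m}=C\lambda_{\max}\kappa n^{1/q}\sqrt{\log n/m}$ --- or drop the convex-hull detour and use the Gaussian-width bound directly, which is what the paper does.
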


Then we bound the deviation of the inner product between all projection matrices and the actual data matrix (after perturbation) from the expectation.

\begin{claim}\label{clm:empirical_est_proj_gen}
Given $n$, $r$, and $\kappa$, let $\mathcal{P}=\big\{\Pi \big| \text{rank}=r \text{ and } \|\Pi\|_{q \to 2} \le \kappa \big\}$. For an unknown covariance matrix $\Sigma^*$, let $\lambda_{1}$ denote the largest eigenvalue of $\Sigma^*$.

Let $A \in \mathbb{R}^{n \times m}$ be the original data matrix where each column generated from $\calN(0, \Sigma^*)$ and $\wt{A}$ be its $\delta$-perturbation ($\ell_q$ norm in every column) for $m \ge C \lambda_{1}^2 \cdot \kappa^2 \log n \cdot n^{2/q}$ with a sufficiently large constant $C$. With probability 0.98,
\[
\bigg| \Big\langle \frac{1}{m} \wt{A} \cdot \wt{A}^{\top} - \Sigma^*, \Pi \Big\rangle \bigg| =O\left( \lambda_{1} \cdot r \kappa \cdot \sqrt{\frac{\log n}{m}} \cdot n^{1/q} + \delta^2 \kappa^2 + \sqrt{\lambda_{1} r} \cdot \delta \kappa \right) \text{ for all } \Pi \in \mathcal{P}.
\]
\end{claim}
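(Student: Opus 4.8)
The plan is to follow the same three‑term decomposition used in the proof of Lemma~\ref{lem:sdp_recovery_gen}, but to exploit that here $\Pi$ ranges only over genuine rank‑$r$ projections with $\|\Pi\|_{q\to 2}\le\kappa$ rather than the (larger) SDP feasible set; this is exactly what buys the improved $n^{1/q}$ and $\kappa^{1}$ dependence, since the uniform bound on the noiseless term now comes from Lemma~\ref{lem:emp_est_no_pert}. Write $\wt A = A + B$, where $B$ is the adversarial perturbation matrix, so that every column satisfies $\|B_j\|_q\le\delta$. Then
\[
\tfrac1m\wt A\wt A^\top-\Sigma^* \;=\; \Big(\tfrac1m AA^\top-\Sigma^*\Big)\;+\;\tfrac1m\big(AB^\top+BA^\top\big)\;+\;\tfrac1m BB^\top,
\]
and taking the trace inner product with $\Pi$ splits $\big|\langle\tfrac1m\wt A\wt A^\top-\Sigma^*,\Pi\rangle\big|$ into three pieces $T_1+T_2+T_3$ that will be bounded separately and \emph{uniformly} over $\Pi\in\mathcal P$.

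For $T_1=\big|\langle\tfrac1m AA^\top-\Sigma^*,\Pi\rangle\big|$ I would apply Lemma~\ref{lem:emp_est_no_pert} verbatim (with $\lambda_{\max}=\lambda_1$), which gives, on an event of probability at least $0.99$, the bound $r\cdot O\big(\lambda_1\kappa\sqrt{\log n}\,n^{1/q}/\sqrt m\big)$ simultaneously for all $\Pi\in\mathcal P$ (this is where $m\ge C\lambda_1^2\kappa^2\log n\, n^{2/q}$ is used). For the perturbation--perturbation term $T_3=\tfrac1m\big|\langle BB^\top,\Pi\rangle\big|=\tfrac1m\sum_j\|\Pi B_j\|_2^2$, Fact~\ref{fact:operator_norm_pro}(1) gives $\|\Pi B_j\|_2\le\kappa\delta$ for every column, hence $T_3\le\kappa^2\delta^2$ deterministically.

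The crucial term is the cross term $T_2=\tfrac2m\big|\langle AB^\top,\Pi\rangle\big|$. Writing $\Pi=\Pi\Pi$ and using the Cauchy--Schwarz–type inequality from the proof of Lemma~\ref{lem:mean_with_X_bound_app}, $\big|\langle AB^\top,\Pi\rangle\big|\le\|\Pi A\|_F\,\|\Pi B\|_F$. As before $\|\Pi B\|_F^2=\sum_j\|\Pi B_j\|_2^2\le m\kappa^2\delta^2$, so $\|\Pi B\|_F\le\sqrt m\,\kappa\delta$. For the other factor, $\|\Pi A\|_F^2=\langle AA^\top,\Pi\rangle=m\langle\tfrac1m AA^\top-\Sigma^*,\Pi\rangle+m\langle\Sigma^*,\Pi\rangle$, where the first summand is at most $m\cdot T_1=O\big(\sqrt m\,r\lambda_1\kappa\sqrt{\log n}\,n^{1/q}\big)$ by Lemma~\ref{lem:emp_est_no_pert} and the second is at most $m\lambda_1 r$ by Fact~\ref{fact:eig_trace_ineq} (as $\lambda_{\max}(\Sigma^*)\le\lambda_1$, $\tr\Pi=r$). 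Hence $\|\Pi A\|_F\le\sqrt{m\lambda_1 r}+O\big(m^{1/4}\sqrt{r\lambda_1\kappa\sqrt{\log n}\,n^{1/q}}\big)$, and
\[
T_2 \;\le\; \tfrac2m\,\|\Pi A\|_F\,\sqrt m\,\kappa\delta \;=\; O\big(\sqrt{\lambda_1 r}\,\kappa\delta\big)\;+\;\kappa\delta\cdot O\Big(\sqrt{r\lambda_1\kappa\sqrt{\log n}\,n^{1/q}/\sqrt m}\Big).
\]
The last term is absorbed into $O(\kappa^2\delta^2)+O\big(\lambda_1 r\kappa\sqrt{\log n}\,n^{1/q}/\sqrt m\big)$ by the AM--GM inequality $xy\le x^2+y^2$, exactly as in the proof of Lemma~\ref{lem:mean_with_X_bound_app}. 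Adding $T_1,T_2,T_3$ gives the claimed bound; since the only probabilistic ingredient is the single use of Lemma~\ref{lem:emp_est_no_pert} (probability $\ge 0.99$), a $0.98$ success probability leaves ample slack (e.g.\ for a union bound with any auxiliary event if desired).

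\textbf{Main obstacle.} The delicate step is the cross term $T_2$: the naive bound $\big|\langle AB^\top,\Pi\rangle\big|\le\|AB^\top\|_{q^*}\|\Pi\|_q$ (which works cleanly for $T_1,T_3$) loses polynomial factors in $r$ and $\kappa$ and, more seriously, is not automatically uniform over $\mathcal P$ without an $\epsilon$‑net union bound. The right move is the Frobenius Cauchy--Schwarz split $\langle AB^\top,\Pi\rangle=\langle\Pi A,\Pi B\rangle_F$, which reduces everything to (i) the per‑column estimate $\|\Pi B_j\|_2\le\kappa\delta$, immediate from the definition of the $q\to 2$ norm, and (ii) a uniform bound on $\langle AA^\top,\Pi\rangle$, which is already supplied by Lemma~\ref{lem:emp_est_no_pert} together with Fact~\ref{fact:eig_trace_ineq}. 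Consequently no additional concentration argument or net beyond Lemma~\ref{lem:emp_est_no_pert} is needed.
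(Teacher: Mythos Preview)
Your proposal is correct and follows essentially the same route as the paper: the same three-term decomposition $\tfrac1m\wt A\wt A^\top-\Sigma^*=(\tfrac1m AA^\top-\Sigma^*)+\tfrac1m(AB^\top+BA^\top)+\tfrac1m BB^\top$, Lemma~\ref{lem:emp_est_no_pert} for the noiseless term, the deterministic bound $\|\Pi B_j\|_2\le\kappa\delta$ for the $BB^\top$ term, and for the cross term the identical Cauchy--Schwarz split $|\langle AB^\top,\Pi\rangle|\le\|\Pi A\|_F\|\Pi B\|_F$ followed by $\|\Pi A\|_F^2\le m\lambda_1 r+m\cdot T_1$ via Fact~\ref{fact:eig_trace_ineq} and Lemma~\ref{lem:emp_est_no_pert}, with the residual absorbed by AM--GM. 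The paper's write-up is virtually identical step for step.
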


\begin{proofof}{Claim~\ref{clm:empirical_est_proj_gen}}
We rewrite the left hand side as
\begin{align*}
& \bigg| \Big\langle \frac{1}{m} \wt{A}  \wt{A}^{\top} - \Sigma^* , \Pi \Big\rangle \bigg| \\
\le & \bigg| \Big\langle \frac{1}{m} A  A^{\top} - \Sigma^* + \frac{1}{m} (\wt{A} - A) A^{\top} + \frac{1}{m} \wt{A} (\wt{A} - A)^{\top}, \Pi \Big\rangle \bigg|\\
\le & \bigg| \Big\langle \frac{1}{m} A  A^{\top} - \Sigma^*, \Pi \Big\rangle \bigg| + \bigg| \Big\langle \frac{1}{m} (\wt{A} - A) A^{\top}, \Pi \Big\rangle \bigg| + \bigg| \Big\langle \frac{1}{m} \wt{A} (\wt{A} - A)^{\top}, \Pi \Big\rangle \bigg|\\
\le & \bigg| \Big\langle \frac{1}{m} A  A^{\top} - \Sigma^*, \Pi \Big\rangle \bigg| + 2\bigg| \Big\langle \frac{1}{m}  (\wt{A} - A) A^{\top}, \Pi \Big\rangle \bigg| + \bigg| \Big\langle \frac{1}{m} (\wt{A} - A) (\wt{A} - A)^{\top}, \Pi \Big\rangle \bigg|
\end{align*}
By Lemma~\ref{lem:emp_est_no_pert}, the first term $\bigg| \Big\langle \frac{1}{m} A A^{\top} - \Sigma^*, \Pi \Big\rangle \bigg| $ is upper bounded by $O\left( r \cdot \lambda_{1} \kappa \cdot \sqrt{\frac{\log n}{m}} \cdot n^{1/q} \right)$ with probability 0.99. Since $\|\wt{A}_i-A_i\|_{q} \le \delta$ and $\|\Pi\|_{q \rightarrow 2} \le \kappa$, the last term is upper bounded by
\[
\frac{1}{m} \bigg| \Big\langle (\wt{A} - A) (\wt{A} - A)^{\top}, \Pi^2 \Big\rangle \bigg| = \frac{1}{m} \|\Pi (\wt{A}-A)\|^2_F \le \delta^2 \kappa^2.
\]

We bound the second term here.
\[
 \frac{1}{m}  \bigg| \Big\langle (\wt{A} - A) A^{\top}, \Pi \Big\rangle \bigg|= \frac{1}{m} \bigg| \langle \Pi (\wt{A} - A), \Pi A \rangle \bigg| \le \frac{1}{m} \|\Pi(\wt{A}-A)\|_{F} \cdot \|\Pi A \|_F. 
\]
The first part $\|\Pi(\wt{A}-A)\|_{F}$ is always $\le \sqrt{m} \delta \kappa$ from the definition of $\Pi$. For the second part, notice that 
\[
\|\Pi A \|^2_F=\langle A A^{\top}, \Pi \rangle \le \langle m \Sigma^*, \Pi \rangle + \bigg| \langle A A^{\top} - m \Sigma^*, \Pi \rangle \bigg| \le \lambda_1 \cdot rm + O\left( r \lambda_1 \cdot \kappa \cdot \sqrt{m \log n} \cdot n^{1/q} \right),
\]
where the two bounds come from Fact~\ref{fact:eig_trace_ineq} and Lemma~\ref{lem:emp_est_no_pert} separately. So the second term is upper bounded by
\[
\frac{1}{m} \cdot \sqrt{m} \delta \kappa \cdot \left( \lambda_1 \cdot rm + C_0 \cdot r \lambda_1 \cdot  \kappa \cdot \sqrt{m \log n} \cdot n^{1/q} \right)^{1/2} \le \sqrt{r \lambda_1} \cdot \delta \kappa + \lambda_1^{1/2} \cdot  C_0^{1/2} \cdot \delta \kappa \cdot (\frac{r \kappa \sqrt{\log n} \cdot n^{1/q}}{\sqrt{m}})^{1/2}.
\]

So the total error is 
\begin{equation}\label{eq:stat_upper_total_error}
O\left( r \cdot \lambda_1 \kappa \cdot \sqrt{\frac{\log n}{m}} \cdot n^{1/q} \right) + \delta^2 \kappa^2 + \sqrt{\lambda_1 \cdot r} \delta \kappa + \lambda_1^{1/2} \cdot C_0^{1/2} \cdot \delta \kappa \cdot (\frac{r \kappa \sqrt{\log n} \cdot n^{1/q} }{\sqrt{m}})^{1/2}.
\end{equation}

Finally we simplify the error terms. The last term
\[
\lambda_1^{1/2} C_0^{1/2} \cdot \delta \kappa \cdot (\frac{r \kappa \sqrt{\log n} \cdot n^{1/q}}{\sqrt{m}})^{1/2} = O\left( \delta^2 \kappa^2 + \lambda_1 \cdot \frac{r \kappa \sqrt{\log n} \cdot n^{1/q}}{\sqrt{m}} \right),
\]
which are the first two terms in the total error \eqref{eq:stat_upper_total_error}.
\end{proofof}

Finally, we finish the proof of Theorem~\ref{thm:inf_recover_upper_gen}.

\begin{proofof}{Theorem~\ref{thm:inf_recover_upper_gen}}
Notice that the output projection $\wt{\Pi}$ could also be defined as $\underset{\Pi \in \mathcal{P}}{\arg\max} \{ \|\Pi \wt{A} \|^2_F \}$ and for any projection matrix $\Pi$,
\[
\frac{1}{m} \| \Pi \wt{A} \|_F^2=\frac{1}{m} \langle \wt{A} \wt{A}^{\top}, \Pi \rangle.
\]
By Claim~\ref{clm:empirical_est_proj_gen}, every $\Pi$ has $\frac{1}{m} \langle \wt{A} \wt{A}^{\top}, \Pi \rangle$ around $\langle \Sigma^*, \Pi \rangle \pm \Delta$ for 
\[
\Delta:=O\left( r \lambda_1 \cdot \kappa \cdot \sqrt{\frac{\log n}{m}} \cdot n^{1/q} + \delta^2 \kappa^2  + \sqrt{r \lambda_1} \cdot \delta \kappa \right) \text{ (the error in Claim~\ref{clm:empirical_est_proj_gen}).}
\]  Since $\wt{\Pi}$ attains a better objective value than $\Pi^*$, we have
\begin{align*}
\langle \Sigma^*, \wt{\Pi} \rangle & \ge \Big\langle \frac{1}{m} \wt{A} \wt{A}^{\top}, \wt{\Pi} \Big\rangle -  \Delta\\
& \ge \Big\langle \frac{1}{m} \wt{A} \wt{A}^{\top}, \Pi^* \Big\rangle - \Delta \tag{using the definition of $\wt{\Pi}$} \\
& \ge \langle \Sigma^*, \Pi^* \rangle -2\Delta. 
\end{align*}
Next, we apply Claim~\ref{clm:distance_X_proj} to conclude $\langle \Pi^*, \wt{\Pi} \rangle \ge r - \frac{2\Delta}{\lambda_r - \lambda_{r+1}}$, which upper bounds $\|\wt{\Pi}^{\bot} \Pi^*\|_F^2 \le \frac{2\Delta}{\lambda_r - \lambda_{r+1}}$. Finally we use Theorem~\ref{thm:recover_top_covariance} to get $\wt{\Sigma}_{\bg}$ satisfying $\|\wt{\Sigma}_{\bg} - \Sigma_{\bg}\|_F^2 \le O(\lambda_1^2 \cdot \frac{2 \Delta}{\lambda_r - \lambda_{r+1}} + \lambda_1 \kappa^2 \delta^2 + \kappa^4 \delta^4)$.
\end{proofof}


\subsection{Proof of Lemma~\ref{lem:emp_est_no_pert}}\label{sec:proof_lem_emp}
We use the following concentration result from~\cite{Mendelson2010} to bound the supremum. 
\begin{lemma}[See Corollary 4.1 in \cite{VuLei12}]\label{lem:guassian_non_id_covariance}
Let $A_1,\ldots,A_m \in \mathbb{R}^n$ be i.i.d. mean 0 random vectors with \[ \Sigma=\E A_1 A_1^{\top} \text{ and } \sigma=\sup_{\|u\|_2=1} \big\| \langle A_1, u \rangle \big\|_{\psi_2}.\]
For $S_n=\frac{1}{m} \sum_{i=1}^m A_i \cdot A_i^{\top}$ and a symmetric subset $\mathcal{V}$ in $\mathbb{R}^{n}$, we have
\[
\E_{A_1,\ldots,A_m}\left[ \sup_{v \in \mathcal{V}} \bigg| \big\langle S_n - \Sigma, v v^\top \big\rangle \bigg| \right] \le c \left( \frac{\sigma^2}{\sqrt{m}} \cdot \underset{v \in \mathcal{V}}{\sup} \|v\|_2 \cdot \E_g \bigg[\sup_{v \in \mathcal{V}} \langle g, v \rangle \bigg] + \frac{\sigma^2}{m} \E_g \bigg[\sup_{v \in \mathcal{V}} \langle g, v \rangle \bigg]^2\right)
\]
for a vector $g \in \mathbb{R}^{n}$ with i.i.d. Gaussian entries and a universal constant $c$.
\end{lemma}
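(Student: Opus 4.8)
The plan is to reduce the supremum over rank-$r$ projectors to a supremum over unit vectors, and then invoke the Gaussian empirical‑covariance bound (Lemma~\ref{lem:guassian_non_id_covariance}) on that set of directions.

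First, write any $\Pi \in \mathcal{P}$ as $\Pi = \sum_{i=1}^r v_i v_i^\top$ with $v_1,\dots,v_r$ an orthonormal basis of its range. By Fact~\ref{fact:operator_norm_pro}(2), every unit vector in the range of $\Pi$ has $\ell_{q^*}$-norm at most $\kappa$, so each $v_i$ lies in the symmetric set
\[
\mathcal{V} := \{v \in \R^n : \|v\|_2 = 1,\ \|v\|_{q^*} \le \kappa\}.
\]
Since $\langle \tfrac{1}{m} AA^\top - \Sigma^*, \Pi\rangle = \sum_{i=1}^r \langle \tfrac{1}{m} AA^\top - \Sigma^*, v_i v_i^\top\rangle$, we get the deterministic reduction
\[
\sup_{\Pi \in \mathcal{P}}\Bigl|\bigl\langle \tfrac{1}{m} AA^\top - \Sigma^*, \Pi\bigr\rangle\Bigr| \le r\cdot \sup_{v\in\mathcal{V}}\Bigl|\bigl\langle \tfrac{1}{m} AA^\top - \Sigma^*, vv^\top\bigr\rangle\Bigr|,
\]
so it suffices to control the right‑hand supremum.

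Next, apply Lemma~\ref{lem:guassian_non_id_covariance} with $S_n = \tfrac{1}{m} AA^\top$, $\Sigma = \Sigma^*$, and the symmetric set $\mathcal{V}$. Since each column is $\calN(0,\Sigma^*)$, $\langle A_1,u\rangle$ is Gaussian with variance $u^\top\Sigma^* u \le \lambda_{\max}$, hence $\sigma^2 = O(\lambda_{\max})$; also $\sup_{v\in\mathcal{V}}\|v\|_2 = 1$. The Gaussian mean width is handled by $\ell_q/\ell_{q^*}$ duality: $\sup_{v\in\mathcal{V}}\langle g,v\rangle \le \sup_{\|v\|_{q^*}\le\kappa}\langle g,v\rangle = \kappa\|g\|_q$, and $\E_g\|g\|_q = O(\sqrt{\log n}\cdot n^{1/q})$ for every $q\ge 2$ — this follows from Jensen's inequality, $\E\|g\|_q \le (n\,\E|g_1|^q)^{1/q} = \Theta(\sqrt{q}\, n^{1/q})$, when $q \le \log n$, and from the monotonicity $\|g\|_q \le \|g\|_{\log n}$ (so $\E\|g\|_q = O(\sqrt{\log n})$) when $q > \log n$. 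Plugging $\sigma^2$, $\sup\|v\|_2=1$, and the mean-width bound into the lemma gives
\[
\E\Bigl[\sup_{v\in\mathcal{V}}\bigl|\langle S_n-\Sigma^*,vv^\top\rangle\bigr|\Bigr] \le O\!\left(\frac{\lambda_{\max}\kappa\sqrt{\log n}\,n^{1/q}}{\sqrt m} + \frac{\lambda_{\max}\kappa^2\log n\,n^{2/q}}{m}\right).
\]
The hypothesis $m \ge C\lambda_{\max}^2\kappa^2\log n\,n^{2/q}$ makes the quadratic second term a lower-order contribution: its ratio to the first term equals $\kappa\sqrt{\log n}\,n^{1/q}/\sqrt{m} = O(1/\sqrt{C})$, so the whole expectation is $O\!\bigl(\lambda_{\max}\kappa\sqrt{\log n}\,n^{1/q}/\sqrt{m}\bigr)$. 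Finally, Markov's inequality upgrades this expectation bound to a bound holding with probability at least $0.99$, at the cost of an absolute constant; combined with the reduction above this yields the claimed estimate for all $\Pi \in \mathcal{P}$ simultaneously.

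The individual steps are routine; the only place requiring a little care is the Gaussian mean-width estimate, i.e.\ establishing $\E_g\|g\|_q = O(\sqrt{\log n}\,n^{1/q})$ uniformly in $q \ge 2$ so that one formula covers both the $q=\infty$ regime (with its $\sqrt{\log n}$) and the finite-$q$ regime (with its $n^{1/q}$), and then checking via the sample-size hypothesis that the quadratic term of Lemma~\ref{lem:guassian_non_id_covariance} is dominated by the linear one. The rank-$r$ decomposition, the identification of feasible directions with $\mathcal{V}$ through Fact~\ref{fact:operator_norm_pro}, the sub-Gaussian parameter of a Gaussian, and the Markov step are all immediate.
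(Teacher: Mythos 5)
Your proposal does not prove the statement it is supposed to prove. Lemma~\ref{lem:guassian_non_id_covariance} is the general empirical–process bound itself: for i.i.d.\ mean-zero sub-Gaussian vectors $A_i$ and an arbitrary symmetric set $\mathcal{V}$, it controls $\E\sup_{v\in\mathcal{V}}|\langle S_n-\Sigma, vv^\top\rangle|$ in terms of the Gaussian mean width of $\mathcal{V}$. The paper does not prove this lemma; it cites it as Corollary~4.1 of Vu and Lei (2012), which in turn rests on Mendelson's chaining bounds for quadratic forms of sub-Gaussian vectors. What you have written instead is a proof of the downstream result, Lemma~\ref{lem:emp_est_no_pert} (via Claim~\ref{clm:non_id_Gaussian_sparse_vectors}): you decompose a projector into its rank-one pieces, identify the feasible directions with the set $\mathcal{V}=\{v:\|v\|_2=1,\ \|v\|_{q^*}\le\kappa\}$, estimate $\E_g\|g\|_q$, and invoke Markov. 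That argument is correct for Lemma~\ref{lem:emp_est_no_pert} — and indeed it matches the paper's own proof of that result closely — but the very first step ``invoke the Gaussian empirical-covariance bound (Lemma~\ref{lem:guassian_non_id_covariance})'' makes it circular as a proof of Lemma~\ref{lem:guassian_non_id_covariance}.

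Concretely, the missing content is the entire empirical-process machinery: one has to show that the supremum of $v\mapsto \langle S_n-\Sigma, vv^\top\rangle$ over a set $\mathcal{V}$ is governed (up to constants, with the stated linear and quadratic terms in the Gaussian width) by $\E_g\sup_{v\in\mathcal{V}}\langle g,v\rangle$. This requires a decoupling or symmetrization step, a sub-Gaussian increment bound for the process $v\mapsto \langle A_i,v\rangle^2 - \E\langle A_i,v\rangle^2$, and then a generic chaining / Talagrand-type comparison to the Gaussian width — none of which appears in your proposal. The $\ell_q/\ell_{q^*}$ duality, the rank-$r$ decomposition, and the Markov step are all about how to \emph{apply} the lemma, not how to establish it. If you want to prove Lemma~\ref{lem:guassian_non_id_covariance} itself, you should start from Mendelson's theorem on empirical second-moment processes (or the corresponding argument in Vu--Lei), rather than from the statement you are trying to establish.
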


To use the above lemma, we first upper bound $\sigma^2$ in our setting.
\begin{claim}
Let $X \sim \calN(0, \Sigma^*)$ for a matrix $\Sigma^*$ with eigenvalues at most $\lambda_{\max}$. Then $\big\| \langle X, u \rangle \big\|_{\psi_2} \le \sqrt{\lambda_{\max}(\Sigma^*)}$ for any $u$ with $\|u\|_2=1$.
\end{claim}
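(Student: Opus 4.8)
The plan is to reduce the claim to the elementary fact that a linear functional of a Gaussian vector is a one-dimensional Gaussian, and then to read off its sub-Gaussian norm from its variance. Concretely, first I would observe that since $X \sim \calN(0,\Sigma^*)$, the scalar random variable $\langle X, u \rangle$ is Gaussian with mean $\langle \E X, u\rangle = 0$ and variance
\[
\Var\big(\langle X, u\rangle\big) \;=\; \E\big[(u^\top X)(X^\top u)\big] \;=\; u^\top \E[XX^\top] u \;=\; u^\top \Sigma^* u .
\]
Next I would bound this variance using the spectral hypothesis: writing $\Sigma^* = \sum_i \lambda_i v_i v_i^\top$ with $\lambda_i \le \lambda_{\max}(\Sigma^*)$ and $v_i$ orthonormal, we have $u^\top \Sigma^* u = \sum_i \lambda_i \langle v_i, u\rangle^2 \le \lambda_{\max}(\Sigma^*) \sum_i \langle v_i,u\rangle^2 = \lambda_{\max}(\Sigma^*)\,\|u\|_2^2 = \lambda_{\max}(\Sigma^*)$, using $\|u\|_2=1$. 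Equivalently, $\Sigma^* \preceq \lambda_{\max}(\Sigma^*) I$ gives the same bound directly.

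Finally I would invoke the standard characterization of the Orlicz $\psi_2$ (sub-Gaussian) norm for Gaussian variables: for $Y \sim \calN(0,\sigma^2)$ one has $\|Y\|_{\psi_2} = \sigma$ under the normalization in force here (the same one used implicitly in Lemma~\ref{lem:guassian_non_id_covariance}); this follows by computing $\E\exp(Y^2/t^2)$ in closed form, or can simply be cited. Applying this with $Y = \langle X,u\rangle$ and $\sigma^2 = u^\top \Sigma^* u \le \lambda_{\max}(\Sigma^*)$ yields $\|\langle X, u\rangle\|_{\psi_2} = \sqrt{u^\top \Sigma^* u} \le \sqrt{\lambda_{\max}(\Sigma^*)}$, as claimed.

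There is essentially no obstacle here; the only point requiring a moment of care is the normalization constant in the definition of $\|\cdot\|_{\psi_2}$. If a different convention is used (e.g. the one giving $\|Y\|_{\psi_2} = c_0\sigma$ for an absolute constant $c_0$), the statement holds with an extra universal constant, which is harmless since $\sigma$ enters Lemma~\ref{lem:guassian_non_id_covariance} only through the universal constant $c$ in that bound; I would note this explicitly so the downstream application in the proof of Lemma~\ref{lem:emp_est_no_pert} is unaffected.
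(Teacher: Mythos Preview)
Your proposal is correct and follows essentially the same approach as the paper: both compute the variance of $\langle X,u\rangle$ via the eigendecomposition of $\Sigma^*$, bound it by $\lambda_{\max}(\Sigma^*)$, and then use that the $\psi_2$ norm of a centered Gaussian equals its standard deviation. Your added remark about the normalization constant in $\|\cdot\|_{\psi_2}$ is a nice touch that the paper leaves implicit.
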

\begin{proof}
Let $v_1,\ldots,v_n$ be the eigenvectors of $\Sigma^*$ with eigenvalues $\lambda_1,\ldots,\lambda_n$. Then $\langle X, u \rangle = \sqrt{\lambda_1} \cdot \langle v_1,u \rangle g_1 + \cdots + \sqrt{\lambda_n} \cdot \langle v_n,u \rangle g_n$ for i.i.d. Gaussian random variable $g_1,\ldots,g_n$. So the variance is $\lambda_1 \langle v_1,u \rangle^2 + \cdots + \lambda_n \langle v_n, u \rangle^2 \le \max\{\lambda_1,\ldots,\lambda_n\}$ and 
\[
\big\| \langle X, u \rangle \big\|_{\psi_2} \le \sqrt{\lambda_{\max}}.
\]
\end{proof}

We apply Lemma~\ref{lem:guassian_non_id_covariance} to all vectors that could be in the basis of possible $\Pi$.

\begin{claim}\label{clm:non_id_Gaussian_sparse_vectors}
For any covariance matrix $\Sigma^*$ with eigenvalues at most $\lambda_{\max}$, let $A_1,\ldots,A_m \in \mathbb{R}^n$ be i.i.d. vectors generated from $\calN(0, \Sigma^*)$. Given $n$ and $q$, let $\mathcal{V}$ be the set of all vectors $v$ with $\|v\|_2 = 1$ and $\|v\|_{q^*} \le \kappa$. 

Then for any $m \ge C \lambda_{\max}^2 \cdot \kappa^2 \log n \cdot n^{2/q}$ with a sufficiently large constant $C$, we have that with probability 0.99,
\[
\bigg| \Big\langle \frac{1}{m} \sum_{i=1}^m A_i  A_i^{\top} - \Sigma^* , v v^{\top} \Big\rangle \bigg| = O\left( \frac{\lambda_{\max} \kappa \sqrt{\log n} \cdot n^{1/q}}{\sqrt{m}} \right) \text{ for all } v \in \mathcal{V}.
\]
\end{claim}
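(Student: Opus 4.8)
The plan is to instantiate the uniform deviation bound of Lemma~\ref{lem:guassian_non_id_covariance} with the symmetric set $\mathcal{V} = \{v \in \R^n : \|v\|_2 = 1,\ \|v\|_{q^*} \le \kappa\}$, and then convert the resulting expectation bound into a high-probability statement via Markov's inequality. Writing $S_n = \tfrac1m\sum_{i=1}^m A_i A_i^\top$, we need the three inputs to the lemma. Since each $A_i \sim \calN(0,\Sigma^*)$ and all eigenvalues of $\Sigma^*$ are at most $\lambda_{\max}$, the $\psi_2$ bound established just above gives $\sigma = \sup_{\|u\|_2=1}\|\iprod{A_1,u}\|_{\psi_2} \le \sqrt{\lambda_{\max}}$, so $\sigma^2 \le \lambda_{\max}$; also $\sup_{v\in\mathcal{V}}\|v\|_2 = 1$ by construction. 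Thus the only nontrivial input is the Gaussian width $w(\mathcal{V}) := \E_g\big[\sup_{v\in\mathcal{V}}\iprod{g,v}\big]$ for a standard Gaussian $g \in \R^n$.

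To bound $w(\mathcal{V})$, I would drop the constraint $\|v\|_2 = 1$ (which only enlarges the supremum) and use Hölder's inequality to get $w(\mathcal{V}) \le \E_g\big[\sup_{\|v\|_{q^*}\le\kappa}\iprod{g,v}\big] = \kappa\,\E_g\|g\|_q$. The point is to bound $\E_g\|g\|_q$ by $O(\sqrt{\log n}\cdot n^{1/q})$ uniformly over $q \in (2,\infty]$. For this I would interpolate pointwise: $\|g\|_q^q = \sum_i |g_i|^{q-2}|g_i|^2 \le \|g\|_\infty^{q-2}\|g\|_2^2$, hence $\|g\|_q \le \|g\|_\infty^{1-2/q}\|g\|_2^{2/q}$, and then apply Hölder's inequality in expectation with conjugate exponents $q/(q-2)$ and $q/2$ to obtain $\E_g\|g\|_q \le (\E_g\|g\|_\infty)^{1-2/q}(\E_g\|g\|_2)^{2/q}$. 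The standard estimates $\E_g\|g\|_\infty = O(\sqrt{\log n})$ and $\E_g\|g\|_2 \le \sqrt n$ then give $\E_g\|g\|_q = O(\sqrt{\log n}\cdot n^{1/q})$, so $w(\mathcal{V}) = O(\kappa\sqrt{\log n}\cdot n^{1/q})$.

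Plugging $\sigma^2 \le \lambda_{\max}$, $\sup_{v\in\mathcal{V}}\|v\|_2 = 1$, and this width bound into Lemma~\ref{lem:guassian_non_id_covariance} yields
\[
\E\Big[\sup_{v\in\mathcal{V}}\big|\iprod{S_n - \Sigma^*,\, vv^\top}\big|\Big] = O\!\left(\frac{\lambda_{\max}\kappa\sqrt{\log n}\cdot n^{1/q}}{\sqrt m} + \frac{\lambda_{\max}\kappa^2\log n\cdot n^{2/q}}{m}\right).
\]
Under the sample-complexity hypothesis $m \ge C\lambda_{\max}^2\kappa^2\log n\cdot n^{2/q}$ with $C$ a large constant, one checks that $w(\mathcal{V})^2 \lesssim m$, so the second (quadratic) term is dominated by the first and the expectation is $O(\lambda_{\max}\kappa\sqrt{\log n}\cdot n^{1/q}/\sqrt m)$. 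Applying Markov's inequality to the nonnegative random variable $\sup_{v\in\mathcal{V}}|\iprod{S_n-\Sigma^*,vv^\top}|$ shows that with probability at least $0.99$ it exceeds its mean by at most a factor of $100$, which is precisely the bound claimed (the constant in the $O(\cdot)$ absorbs the factor $100$).

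The main obstacle is not conceptual but lies in obtaining the Gaussian width estimate with the correct $\sqrt{\log n}\cdot n^{1/q}$ dependence that holds simultaneously for all $q \in (2,\infty]$: the crude bound $\E_g\|g\|_q \le (n\,\E|g_1|^q)^{1/q} = O(\sqrt q\cdot n^{1/q})$ from Jensen's inequality degrades as $q \to \infty$, so the interpolation between the $\ell_\infty$ and $\ell_2$ norms is exactly what is needed to make the argument robust across the whole range of $q$. A minor secondary point is verifying that the quadratic error term produced by Lemma~\ref{lem:guassian_non_id_covariance} is genuinely lower order under the stated sample complexity.
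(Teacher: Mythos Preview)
Your proposal is correct and follows the same route as the paper: apply Lemma~\ref{lem:guassian_non_id_covariance} with $\sigma^2 \le \lambda_{\max}$ and $\sup_{v\in\mathcal V}\|v\|_2=1$, bound the Gaussian width by $\kappa\,\E_g\|g\|_q = O(\kappa\sqrt{\log n}\,n^{1/q})$ via H\"older, observe that the quadratic term is lower order under the sample hypothesis, and finish with Markov. One small remark: your interpolation $\|g\|_q \le \|g\|_\infty^{1-2/q}\|g\|_2^{2/q}$ works but is more than needed---the cruder pointwise bound $\|g\|_q \le n^{1/q}\|g\|_\infty$ already gives $\E_g\|g\|_q \le n^{1/q}\,\E_g\|g\|_\infty = O(n^{1/q}\sqrt{\log n})$ uniformly in $q\in(2,\infty]$, so the ``main obstacle'' you flag is in fact not an obstacle at all.
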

\begin{proof}
To apply Lemma~\ref{lem:guassian_non_id_covariance}, we notice that 
$\sup_{v \in \mathcal{V}} \|v\|_2 = 1$
and
\[
\E_g \left[ \sup_{v \in \mathcal{V}} \langle g, v \rangle \right] \le \E \left[ \sup_v \|g\|_{q} \cdot \|v\|_{q^*} \right] = \E[\|g\|_{q}] \cdot \sup \|v\|_{q^*}= O(n^{1/q} \sqrt{\log n} \cdot \kappa). 
\]
Thus Lemma~\ref{lem:guassian_non_id_covariance} shows that for some absolute constant $c'>0$
\[
\E_{A_1,\ldots,A_m} \left[ \sup_{v \in \mathcal{V}} \bigg| \Big\langle \frac{1}{m} \sum_{i=1}^m A_i  A_i^{\top} - \Sigma^*, v v^{\top} \Big\rangle \bigg| \right] =  \frac{c'\lambda_{\max} \cdot 1 \cdot \kappa \sqrt{\log n} \cdot n^{1/q}}{\sqrt{m}} + \frac{c' \lambda_{\max} \kappa^2 \log n \cdot n^{2/q}}{m}.
\]
When $m>C \lambda_{\max}^2 \cdot \kappa^2 \log n \cdot n^{2/q}$, the right hand is at most twice the first term $O(\frac{\lambda_{\max} \cdot  \kappa \sqrt{\log n} \cdot n^{1/q}}{\sqrt{m}})$. Next we apply the Markov inequality to replace the expectation by probability 0.99.
\end{proof}
Lemma~\ref{lem:emp_est_no_pert} follows as a corollary of the above claim: for any $\Pi$ of rank $r$ and $\|\Pi\|_{q \rightarrow 2} \le \kappa$, we have $\|\Pi\|_{2 \rightarrow q^*}=\|\Pi\|_{q \rightarrow 2}=\kappa$ such that all its eigenvectors $v_1,\ldots,v_r$ are in $\mathcal{V}$ with $\|v_i\|_{q^*} \le \kappa$ (by considering $\|\Pi v_i\|_{q^*} \le \kappa$). Thus
\begin{align*}
    \bigg| \Big\langle \frac{1}{m} \sum_{i=1}^m A_i  A_i^{\top} - \Sigma^* , \Pi \Big\rangle \bigg| & = \bigg| \Big\langle \frac{1}{m} \sum_{i=1}^m A_i  A_i^{\top} - \Sigma^* , \sum_{j=1}^r v_j v_j^{\top} \Big\rangle \bigg| \\
    & \le \sum_{j=1}^r \bigg| \Big\langle \frac{1}{m} \sum_{i=1}^m A_i  A_i^{\top} - \Sigma^* ,  v_j v_j^{\top} \Big\rangle \bigg| = r \cdot O\left( \frac{\lambda_{\max} \kappa \cdot \sqrt{\log n} \cdot n^{1/q}}{\sqrt{m}} \right).
\end{align*}

\section{Robust Mean Estimation}
\label{sec:mean_comp}

In this section we present an analysis of the robust mean estimation procedure sketched below, thereby establishing Proposition~\ref{prop:intro:robust_mean}. 
\begin{algorithm}[H]
\caption{Mean Estimation under Adversarial Perturbations}\label{algo:mean}

\begin{algorithmic}[1]
\Function{AdvRobustMean}{$m$ samples $\tilde{A}_1, \dots, \tilde{A}_{m} \in \R^{n}$, norm $q$, perturbation $\delta$, error $\eta$}
\State Compute the empirical mean $\mu'$ of all the given samples.
\State Output $\tilde{\mu}$, where $\tilde{\mu}$ is the point in the $\ell_q$ ball of size $\delta+\eta$ around $\mu'$ with the minimum $\ell_{q^*}$ norm i.e.,
$$ \min_{u \in \R^n} \norm{u}_{q^*}^{q^*} , \text{ s.t. } \norm{u-\mu'}_q \le \delta +\eta.$$
\EndFunction
\end{algorithmic}
\end{algorithm}

We remark that the above algorithm in the case of $q=\infty$ specializes to 
$\forall i \in [n],~~\tilde{\mu}(i)=\text{sign}(\mu'(i))\cdot \max\set{ |\mu'(i)| - (\delta+\eta),0}$.
This is the same as the soft-thresholding algorithm that has been explored in the sparse mean estimation literature. 
More generally, we will prove the statement for any $\ell_q$ norm for $q \geq 2$. The main theorem of this section is the following

\begin{proposition}
\label{thm:robust_mean}
Fix $q \geq 2$. Suppose we have $m$ samples drawn according to the Adversarial Perturbation model with $\ell_q$ perturbations.  There is a polynomial time algorithm (Algorithm~\ref{algo:mean}) that outputs an estimate $\hat{\mu}$ for the (unknown) mean $\mu$ such that with probability at least $(1-1/n)$, 
\begin{align}
        \norm{\hat{\mu} - \mu}_2^2 &\leq 4\min\Big\{\norm{\mu}_{q^*} ( \delta+ \eta),  n^{1-\frac{1}{q}} ( \delta+ \eta )^2 \Big\}, \text{ where } \eta:=2\sigma n^{\frac{1}{q}}\sqrt{\frac{\log n}{m}}.
\end{align}
\end{proposition}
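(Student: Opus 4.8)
The plan is to analyze the two stages of Algorithm~\ref{algo:mean} separately: first control how far the empirical mean $\mu'$ of the perturbed samples is from the true mean $\mu$, and then show that the $\ell_{q^*}$-minimization step (a generalized soft-thresholding) does not inflate the error beyond a constant factor while exploiting the sparsity of $\mu$. First I would write $\tilde A_j = A_j + B_j$ with $\norm{B_j}_q \le \delta$, so that $\mu' = \frac{1}{m}\sum_j A_j + \frac{1}{m}\sum_j B_j$. The adversarial part satisfies $\norm{\frac1m\sum_j B_j}_q \le \frac1m\sum_j \norm{B_j}_q \le \delta$ by the triangle inequality. For the statistical part, $\frac1m\sum_j A_j - \mu$ is a mean-zero Gaussian with covariance $\Sigma/m \preceq (\sigma^2/m) I$; each coordinate has standard deviation $\le \sigma/\sqrt m$, so a union bound over the $n$ coordinates gives $\norm{\frac1m\sum_j A_j - \mu}_\infty \le 2\sigma\sqrt{(\log n)/m}$ with probability $\ge 1 - 1/n$, and hence $\norm{\frac1m\sum_j A_j - \mu}_q \le n^{1/q}\cdot 2\sigma\sqrt{(\log n)/m} = \eta$ using $\norm{v}_q \le n^{1/q}\norm{v}_\infty$. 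Combining, $\norm{\mu' - \mu}_q \le \delta + \eta$ with probability $\ge 1-1/n$; call this event $\mathcal E$ and condition on it for the rest of the argument.

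On $\mathcal E$, the true mean $\mu$ is itself a feasible point for the convex program (since $\norm{\mu - \mu'}_q \le \delta+\eta$), so by optimality of $\tilde\mu$ we get $\norm{\tilde\mu}_{q^*} \le \norm{\mu}_{q^*}$. Also $\norm{\tilde\mu - \mu'}_q \le \delta+\eta$, so by the triangle inequality $\norm{\tilde\mu - \mu}_q \le 2(\delta+\eta)$. Now I would bound $\norm{\tilde\mu - \mu}_2^2$ in two ways. For the second (dimension-dependent) bound, interpolate: $\norm{\tilde\mu-\mu}_2^2 \le |\mathrm{supp}(\tilde\mu-\mu)|^{1-2/q}\,\norm{\tilde\mu-\mu}_q^2 \le n^{1-2/q}\cdot 4(\delta+\eta)^2$, using the H\"older corollary from Section~\ref{sec:prelims} (with $p=q>2$). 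Hmm — this gives $4n^{1-2/q}(\delta+\eta)^2$, whereas the statement has $n^{1-1/q}$; I would double-check the exponent, but the same interpolation inequality is the right tool and the constant/exponent is a routine bookkeeping matter. For the first (sparsity-dependent) bound, I want $\norm{\tilde\mu-\mu}_2^2 \lesssim \norm{\mu}_{q^*}(\delta+\eta)$. Write $w := \tilde\mu - \mu$ and use H\"older: $\norm{w}_2^2 = \langle w, w\rangle \le \norm{w}_q \,\norm{w}_{q^*} \le 2(\delta+\eta)\cdot\norm{w}_{q^*}$, and then $\norm{w}_{q^*} = \norm{\tilde\mu - \mu}_{q^*} \le \norm{\tilde\mu}_{q^*} + \norm{\mu}_{q^*} \le 2\norm{\mu}_{q^*}$ by the optimality bound above. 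This yields $\norm{w}_2^2 \le 4\norm{\mu}_{q^*}(\delta+\eta)$. Taking the minimum of the two estimates gives the claimed bound.

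The main obstacle — really the only nontrivial point — is the statistical concentration step: I have written it for $\Sigma \preceq \sigma^2 I$ as in the proposition, and the coordinatewise Gaussian tail plus union bound is clean, but one must be careful that the $q$-norm passage $\norm{v}_q \le n^{1/q}\norm{v}_\infty$ is the crude bound being used, and that this is what forces the $n^{1/q}$ factor into $\eta$. Everything after that — feasibility of $\mu$, optimality of $\tilde\mu$, the two H\"older/interpolation estimates — is elementary. I would also note that for $q=\infty$ the program's solution is exactly the coordinatewise soft-thresholding $\tilde\mu(i) = \mathrm{sign}(\mu'(i))\max\{|\mu'(i)|-(\delta+\eta),0\}$, which can be verified directly and recovers Proposition~\ref{prop:intro:robust_mean}; this special case is a useful sanity check but not needed for the general proof.
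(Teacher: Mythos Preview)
Your plan is correct and follows essentially the same argument as the paper: bound $\norm{\mu'-\mu}_q$ by $\delta+\eta$ via triangle inequality on the adversarial part and coordinatewise Gaussian concentration (then $\norm{\cdot}_q \le n^{1/q}\norm{\cdot}_\infty$) on the statistical part, use feasibility of $\mu$ to get $\norm{\hat\mu}_{q^*}\le\norm{\mu}_{q^*}$, and finish with the two H\"older/interpolation bounds. Two small remarks: (i) your exponent $n^{1-2/q}$ is in fact the correct one from $\norm{w}_2 \le n^{1/2-1/q}\norm{w}_q$, and is \emph{tighter} than the $n^{1-1/q}$ in the statement (the paper uses the same looser exponent in its proof), so your bound implies the stated one; (ii) the proposition asserts a \emph{polynomial time} algorithm, and the paper spends a paragraph arguing solvability of the convex program via the Ellipsoid method with a separation oracle for the $\ell_q$-ball constraint --- you should include a sentence to that effect.
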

\begin{proof}
Let $\mu' = \text{mean}(\tilde{A})$. Since $\|\tilde{A}_j - A_j\|_q \leq \delta$ for each $j \in [m]$, we know that $\norm{\mu' - \text{mean}(A)}_q \leq \delta$. Furthermore, from standard Gaussian concentration as stated in Fact~\ref{fact:gaussian-qnorm-concentration} below we have that with probability at least $1-\frac{1}{n}$ it holds that
\begin{align}
    \label{eq:mean-q-norm-concentration}
    \norm{\mu - \text{mean}(A)}_q &\leq \eta = 2\sigma n^{\frac{1}{q}}\sqrt{\frac{\log n}{m}}.
\end{align}
This implies that with probability at least $1-\frac{1}{n}$, 
\begin{align}
    \label{eq:mean-valid-sol}
\norm{\mu - \mu'}_q \leq \delta + \eta
\end{align}
and hence is a valid solution to the convex program in Algorithm~\ref{algo:mean}. Moreover the convex program can be solved in polynomial time using the Ellipsoid method. This is because the objective is separable over the data points, and for each constraint is of the form $\norm{z}_p \le \tau$, where $\tau$ is specified and $p \ge 1$. A simple hyperplane separation oracle for a constraint of the form $\norm{z}_p \le \tau$ is given by the duality since 
$$\norm{z}_p = \max_{y \in \R^n: \norm{y}_{p^*} \le 1} \iprod{y,z} = \Big\langle \frac{z^*}{\norm{z^*}_{p^*}},z \Big \rangle, \text{ where } z^*_i= \text{sign}(z_i) |z(i)|^{p-1}~~ \forall i \in [n].$$
Hence a hyperplane of the form $\iprod{w,z} \le \tau$ with $w = z^*/\norm{z^*}_{p^*}$ gives a valid separation oracle. A similar separation oracle can also be used for the objective. 
(Note that one can also use the projected sub-gradient method for a more effective algorithm). 

This implies that the Algorithm outputs a vector $\hat{\mu}$ in polynomial time. It satisfies
\begin{align}
    \label{eq:mean-qstar-bound}
    \norm{\hat{\mu}}_{q^*} &\leq \norm{{\mu}}_{q^*}
\end{align}
Hence, via H\"older's inequality we get that
\begin{align}
\label{eq:mean-error-bound-1}
    \|\hat{\mu} - \mu\|^2_2 &\leq \norm{\hat{\mu} - \mu}_q \norm{\hat{\mu} - \mu}_{q^*} \nonumber \\
    &\leq (\norm{\hat{\mu} - \mu'}_q + \norm{{\mu} - \mu'}_q)(\norm{\hat{\mu}}_{q^*} + \norm{\mu}_{q^*}) \nonumber \\
    &\leq 2(\norm{\hat{\mu} - \mu'}_q + \norm{{\mu} - \mu'}_q)\norm{\mu}_{q^*} \,\, \text{[from \eqref{eq:mean-qstar-bound}]} \nonumber \\
    &\leq 4\norm{\mu - \mu'}_q \norm{\mu}_{q^*} \,\, \text{[from the optimality of $\hat{\mu}$.]} \nonumber \\
    &\leq 4(\delta + \eta)\norm{\mu}_{q^*} \,\, \text{[from \eqref{eq:mean-valid-sol}]}
\end{align}
Alternately, using the fact that for any vector $x \in \mathbb{R}^n$, $\norm{x}_p \leq n^{\frac{1}{p} - \frac{1}{q}} \norm{x}_q$ we get that
\begin{align}
\label{eq:mean-error-bound-2}
    \|\hat{\mu} - \mu\|^2_2 &\leq n^{1-\frac{1}{q}}\norm{\hat{\mu} - \mu}^2_q \nonumber \\
    &\leq n^{1-\frac{1}{q}} \Big( \norm{\hat{\mu} - \mu'}_q + \norm{{\mu} - \mu'}_q\Big)^2 \nonumber \\
    &\leq 4n^{1-\frac{1}{q}} \norm{\mu - \mu'}^2_q \,\, \text{[from the optimality of $\hat{\mu}$.]} \nonumber \\
    &\leq 4n^{1-\frac{1}{q}} (\delta + \eta)^2 \,\, \text{[from \eqref{eq:mean-valid-sol}]}.
\end{align}
Combining \eqref{eq:mean-error-bound-1} and \eqref{eq:mean-error-bound-2} we get the claim. Setting $q = \infty$ establishes Proposition~\ref{prop:intro:robust_mean} from the introduction.
\end{proof}
To complete the argument we provide a self contained proof of the fact stated below.
\begin{fact}
\label{fact:gaussian-qnorm-concentration}
Fix $q \geq 2$. Let $A_1, \dots, A_m$ be drawn i.i.d. from $N(0, \Sigma_{n \times n})$ with $\norm{\Sigma} \leq \sigma^2$. Then with probability at least $1-\frac{1}{n}$ it holds that,
\begin{align*}
    \norm{\frac{1}{m} \sum_{i=1}^m A_i}_q &\leq 2\sigma n^{\frac{1}{q}} \sqrt{\frac{\log n}{m}}.
\end{align*}
\begin{proof}
Noticing that each coordinate of $\frac{1}{m}\sum_{i=1}^m A_i$ is a mean Gaussian with variance bounded by $\sigma^2/m$ and using union bound we get that with probability at least $1-\frac{1}{n}$,
\begin{align*}
    \norm{\frac{1}{m} \sum_{i=1}^m A_i}_{\infty} &\leq 2\sigma \sqrt{\frac{\log n}{m}}.
\end{align*}
Then it easily follows that with probability at least $1-\frac{1}{n}$,
\begin{align*}
    \norm{\frac{1}{m} \sum_{i=1}^m A_i}_{q} &\leq n^{\frac{1}{q}}\norm{\frac{1}{m} \sum_{i=1}^m A_i}_{\infty}\\
    &\leq 2\sigma n^{\frac{1}{q}} \sqrt{\frac{\log n}{m}}.
\end{align*}
\end{proof}
\end{fact}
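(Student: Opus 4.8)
\textbf{Proof plan for Fact~\ref{fact:gaussian-qnorm-concentration}.}

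The plan is to reduce the $\ell_q$ bound to an $\ell_\infty$ bound on the empirical mean and then inflate by the trivial norm comparison $\norm{x}_q \le n^{1/q}\norm{x}_\infty$. First I would observe that $\frac{1}{m}\sum_{i=1}^m A_i$ is a Gaussian vector with covariance $\frac{1}{m}\Sigma$; in particular each coordinate $j \in [n]$ is a centered Gaussian with variance $\Sigma_{jj}/m \le \norm{\Sigma}/m \le \sigma^2/m$, since every diagonal entry of a PSD matrix is at most its operator norm. Then I would apply the standard Gaussian tail bound $\Pr[|Z| > t] \le 2\exp(-t^2/2)$ for $Z \sim N(0,1)$: for each coordinate, with $t = 2\sqrt{\log n}$, the probability that $\big|\tfrac{1}{m}\sum_i A_i(j)\big| > 2\sigma\sqrt{(\log n)/m}$ is at most $2\exp(-2\log n) = 2/n^2$. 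A union bound over the $n$ coordinates gives that with probability at least $1 - 2/n \ge 1 - 1/n$ (after adjusting the constant slightly, or just noting $2n/n^2 = 2/n$ and tightening $t$ to $\sqrt{2\log n}\cdot(1+o(1))$, which the stated constant $2$ comfortably absorbs) we have $\norm{\frac{1}{m}\sum_{i=1}^m A_i}_\infty \le 2\sigma\sqrt{(\log n)/m}$.

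Given the $\ell_\infty$ bound, the $\ell_q$ bound is immediate from the elementary inequality $\norm{x}_q \le \norm{x}_0^{1/q - 1/p}\norm{x}_p$ specialized to $p = \infty$, which for a vector in $\R^n$ reads $\norm{x}_q \le n^{1/q}\norm{x}_\infty$ (this is exactly the corollary of H\"older's inequality recalled in Section~\ref{sec:prelims}). Substituting the $\ell_\infty$ bound yields $\norm{\frac{1}{m}\sum_{i=1}^m A_i}_q \le 2\sigma n^{1/q}\sqrt{(\log n)/m}$ on the same high-probability event, which is the claim.

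There is essentially no obstacle here: the only mild care needed is in the constant bookkeeping for the union bound (one wants $t^2/2 \ge \log n + \log n$ so that, after the factor $2$ in the Gaussian tail and the factor $n$ from the union bound, the failure probability is $\le 1/n$). Taking $t = 2\sqrt{\log n}$ gives $t^2/2 = 2\log n$, so each coordinate fails with probability $\le 2/n^2$ and the union bound gives $\le 2/n$; since the problem only requires probability $\ge 1-1/n$ one can either absorb this into the constant or note that for $n \ge 2$ a marginally smaller constant than $2$ already suffices, and the stated bound with constant $2$ holds a fortiori. This matches the structure of the proof sketched in the paper, so no additional lemmas are needed beyond the Gaussian tail inequality and the norm-comparison corollary of H\"older already stated in Section~\ref{sec:prelims}.
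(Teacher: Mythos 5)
Your proof is correct and follows essentially the same route as the paper: coordinate-wise Gaussian tail bound plus union bound for the $\ell_\infty$ control, then the elementary norm comparison $\norm{x}_q \le n^{1/q}\norm{x}_\infty$. You are also right that the constant bookkeeping in the union bound is slightly loose (yielding $2/n$ rather than $1/n$ at face value), but this is a harmless adjustment that the paper's statement tolerates.
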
    
Notice that the bound of $n^{1-\frac{1}{q}}(\delta + \eta)^2$ is the naive bound that is simply achieved by always outputting the mean of the points in $\tilde{A}$. Hence, for small values of the perturbation $\delta$, the algorithm achieves a non-trivial guarantee of $\|\mu\|_{q^*} (\delta + \eta)$. In fact we next show that the guarantee of the algorithm is optimal. In particular, provide an instance wise lower bound, stated below, for robust mean estimation in our model of corruption.
\begin{proposition}
\label{thm:state_lower_mean}
Fix $q = \infty$. Let $\mu$ be any vector such that the analytical sparsity of $\mu$, i.e., $\frac{\norm{\mu}_1}{\norm{\mu}}$ is bounded by $\sqrt{n}/4$. Then there exist $\delta, \sigma > 0$ and another vector $\norm{\mu'}$ such that $\frac{\norm{\mu'}_1}{\norm{\mu'}_2} = \frac{\norm{\mu}_1}{\norm{\mu}_2} (1+o(1))$, and $\norm{\mu-\mu'}_2 = \Omega(\sqrt{\delta \norm{\mu}}_1)$ and with high probability, i.i.d. samples $A_1, A_2, \dots A_m$ generated from $\mathcal{N}(\mu, \sigma^2 I)$ and $\tilde{A}_1, \tilde{A}_2, \dots \tilde{A}_m$ generated from $\mathcal{N}(\mu', \sigma^2 I)$ satisfy $\norm{A_j - \tilde{A}_j}_\infty \leq \delta$, for all $j \in [m]$.
\end{proposition}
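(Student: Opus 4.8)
The plan is to split the statement into two independent parts: (i) an adversarial \emph{coupling} that reduces ``indistinguishability'' to a purely deterministic $\ell_\infty$ condition, and (ii) a combinatorial \emph{construction} of the alternative mean $\mu'$. For (i), observe that in the adversarial perturbation model the same perturbation vector may be added to every sample, so given $A_1,\dots,A_m\sim\mathcal N(\mu,\sigma^2 I)$ the adversary simply outputs $\tilde A_j:=A_j+(\mu'-\mu)$ for all $j$. Then $\tilde A_1,\dots,\tilde A_m$ are i.i.d.\ from $\mathcal N(\mu',\sigma^2 I)$ and $\|A_j-\tilde A_j\|_\infty=\|\mu-\mu'\|_\infty$ for every $j$ --- deterministically, so the ``with high probability'' requirement is free as soon as $\|\mu-\mu'\|_\infty\le\delta$. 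Since $\sigma$ plays no role in this coupling, I would just take $\sigma$ small enough that $\eta=2\sigma\sqrt{(\log n)/m}\le\delta$, placing the instance in the regime where the matching upper bound (Proposition~\ref{thm:robust_mean}) is governed by the $\|\mu\|_1\delta$ term. Everything then reduces to producing $\mu'$ with $\|\mu-\mu'\|_\infty\le\delta$, $\|\mu-\mu'\|_2=\Omega(\sqrt{\delta\|\mu\|_1})$, and $\|\mu'\|_1/\|\mu'\|_2=(1+o(1))\,\|\mu\|_1/\|\mu\|_2$.

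For (ii), write $\kappa:=\|\mu\|_1/\|\mu\|_2\le\sqrt n/4$, so $k:=\kappa^2\le n/16$. Fix a small absolute constant $c>0$, set $\delta:=(2+c)\,\|\mu\|_2^2/\|\mu\|_1$ and $t:=\tfrac{c}{2+c}\,k$, and let $\mu'$ be obtained from $\mu$ by adding $\delta$ (with the sign of $\mu_i$, or $+\delta$ when $\mu_i=0$) to each coordinate in a set $T$ of size $t$ on which $|\mu|$ is as small as possible. Since $t\le n/48\ll n$, such a $T$ can be taken disjoint from the bulk of $\mathrm{supp}(\mu)$; in the main case, where $\mu$ vanishes on at least $t$ coordinates, $T$ lies entirely on zeros of $\mu$, and then $\|\mu'\|_1=\|\mu\|_1+t\delta$ and $\|\mu'\|_2^2=\|\mu\|_2^2+t\delta^2$ exactly. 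The calibration of $\delta$ is chosen so these cancel in the ratio: $t\delta=c\|\mu\|_1$ gives $\|\mu'\|_1=(1+c)\|\mu\|_1$, while $t\delta^2=c\|\mu\|_1\delta=c(2+c)\|\mu\|_2^2$ gives $\|\mu'\|_2^2=(1+c)^2\|\mu\|_2^2$, hence $\|\mu'\|_1/\|\mu'\|_2=\|\mu\|_1/\|\mu\|_2$ exactly. Meanwhile $\|\mu-\mu'\|_\infty=\delta$ and $\|\mu-\mu'\|_2^2=t\delta^2=c\cdot\delta\|\mu\|_1$, i.e.\ $\|\mu-\mu'\|_2=\sqrt c\,\sqrt{\delta\|\mu\|_1}=\Omega(\sqrt{\delta\|\mu\|_1})$. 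I would dispatch the remaining regimes separately: if $\mu$ is extremely dense (support missing fewer than $t$ coordinates) put $T$ on the $t$ smallest coordinates and bound the extra cross term $2\delta\sum_{i\in T}|\mu_i|$ --- here one again uses that $\|\mu\|_1/\|\mu\|_2$ small forces the magnitudes to be spread out so the $t$ smallest carry a negligible fraction of $\|\mu\|_1$; if $\mu$ is essentially $1$-sparse (so $t<1$) just shrink its single dominant coordinate by $\delta\asymp\|\mu\|_\infty$, which leaves the ratio unchanged and moves the mean by $\Theta(\sqrt{\delta\|\mu\|_1})$.

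The genuinely delicate point --- and where the hypothesis $\|\mu\|_1/\|\mu\|_2\le\sqrt n/4$ is really used --- is reconciling the last two requirements. Moving the mean by $\Omega(\sqrt{\delta\|\mu\|_1})$ in $\ell_2$ with coordinate steps of size $\le\delta$ forces the perturbation to spread $\delta$-sized mass over $\Theta(\|\mu\|_1/\delta)$ coordinates, changing $\|\mu'\|_1$ by $\Theta(\|\mu\|_1)$ and a priori destroying the analytical-sparsity ratio; the resolution is the critical scale $\delta\asymp\|\mu\|_2^2/\|\mu\|_1$, at which the induced growth of $\|\mu\|_2$ exactly offsets that of $\|\mu\|_1$ inside the ratio, and essentially no other scale works. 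The supporting bookkeeping is then to confirm the sparsity hypothesis leaves enough room --- at least $t=\Theta(k)\le\Theta(n/16)$ coordinates on which $\mu$ is negligible --- which is immediate outside the very dense regime treated above.
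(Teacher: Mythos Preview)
Your proposal is correct and follows essentially the same construction as the paper: perturb $\Theta(\kappa^2)$ coordinates of $\mu$ by $\pm\delta$ at the critical scale $\delta\asymp\|\mu\|_2^2/\|\mu\|_1$, and couple the two sample sets by a common shift $\mu'-\mu$. Your coupling is in fact cleaner than the paper's sketch (which unnecessarily appeals to ``$\sigma$ small enough''), and your exact-ratio calibration and edge-case discussion go a bit beyond what the paper writes, but the core idea is identical.
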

\begin{proof}
The construction builds upon the argument presented in~\cite{ACCV} with most of the details unchanged. We provide a proof sketch here. Pick a subset $S$ of $s = (\frac{\norm{\mu}_1}{\norm{\mu}_2})^2$ coordinates and define $\mu' = \mu + \delta sign(\mu_S)$, where $\mu_S$ is the vector that equals $\mu$ over $S$ and $0$ outside of $S$. Notice that since the analytical sparsity of $\mu$ is bounded by $\sqrt{n}/4$, $S$ will be non-empty. We will pick $\delta$ such that $\delta = o(\norm{\mu}^2)/\norm{\mu}_1$. It is easy to see that $\norm{\mu'}^2 \geq \norm{\mu}^2$ and we also have that $\norm{\mu}_1 = \norm{\mu}_1 + \delta s = \frac{\norm{\mu}_1}{\norm{\mu}_2} (1+o(1))$. Also if $\sigma$ is small enough then samples generated from $\mathcal{N}(\mu, \sigma^2 I)$ and from $\mathcal{N}(\mu', \sigma^2 I)$ will be $\delta$-close to each other. Finally, notice that 
\begin{align*}
    \|\mu - \mu'\| &= \delta\sqrt{s}\\
    &= \Omega(\sqrt{\delta \norm{\mu}_1}).
\end{align*}
\end{proof}

\end{document}